\definecolor{Gred}{RGB}{219, 50, 54}
\definecolor{Ggreen}{RGB}{60, 186, 84}
\definecolor{Gblue}{RGB}{72, 133, 237}
\definecolor{Gyellow}{RGB}{247, 178, 16}
\definecolor{ToCgreen}{RGB}{0, 128, 0}
\definecolor{myGold}{RGB}{231,141,20}
\definecolor{myBlue}{rgb}{0.19,0.41,.65}
\definecolor{myPurple}{RGB}{175,0,124}
\definecolor{niceRed}{RGB}{153,0,0}
\definecolor{niceRed}{RGB}{190,38,38}
\definecolor{blueGrotto}{HTML}{059DC0}
\definecolor{royalBlue}{HTML}{057DCD}
\definecolor{navyBlueP}{HTML}{0B579C}
\definecolor{limeGreen}{HTML}{81B622}
\def\compactify{\itemsep=0pt \topsep=0pt \partopsep=0pt \parsep=0pt}
\let\latexusecounter=\usecounter
\definecolor{myC}{rgb}{0, 255, 255}
\definecolor{myY}{rgb}{204, 204, 0}
\definecolor{myM}{rgb}{255, 0, 255}
\definecolor{secinhead}{RGB}{249,196,95}
\definecolor{lgray}{gray}{0.8}
\newtheorem{theorem}{Theorem}  
\newtheorem{proposition}{Proposition}
\newtheorem{corollary}{Corollary}
\newtheorem{lemma}{Lemma}
\newtheorem{definition}{Definition}
\newtheorem{inftheorem}{Informal Theorem}
\newtheorem{fact}{Fact}
\newtheorem{assumption}{Assumption}
\newcommand{\reals}{\mathbb{R}}
\newcommand{\D}{\mathcal{D}}
\newcommand{\Q}{\mathcal{Q}}
\newcommand{\poly}{\mathrm{poly}}
\def\l{\ell}
\def\<{\langle}
\def\>{\rangle}
\newcommand{\Be}{\textit{Be}}
\DeclareMathOperator*{\argmin}{argmin}
\def\poly{\mathrm{poly}}
\def\vec{\bm}
\def\D{\mathcal{D}}
\def\M{\mathcal{M}}
\def\U{\mathcal{U}}
\def\P{\mathcal{P}}
\def\Q{\mathcal{Q}}
\def\poly{\mathrm{poly}}
\def\eps{\varepsilon}
\def\Prob{\Pr}
\def\Exp{\E}
\def\E{\mathop{\mathbb{E}}}
\def\reals{\mathbb{R}}
\def\flip{\textsc{flip}}
\def\Be{\mathcal{B}e}
\def\eqdef{\stackrel{\mathrm{def}}{=}}
\def\S{\mathfrak{S}}
\def\ln{\log}
\def\truez{\vec{z}^{\star}}
\def\truep{\vec{p}^{\star}}
\def\ball{\mathcal{B}}
\def\rad{B}
\def\hat{\widehat}
\renewenvironment{abstract}{%
	\if@twocolumn
	\section*{\abstractname}%
	\else 
	\begin{center}%
		{\bfseries \large\abstractname\vspace{\z@}}
	\end{center}%
	\quotation
	\fi}
{\if@twocolumn\else\endquotation\fi}
\begin{document}
\footnotetext{Accepted at the Conference on Learning Theory (COLT) 2020.}

	\title{Efficient Parameter Estimation of\\ Truncated Boolean Product Distributions}
	\author{
		\textbf{Dimitris Fotakis}\footnote{National Technical University of Athens, \url{fotakis@cs.ntua.gr}} \\
		\small NTUA 
		\and
		\textbf{Alkis Kalavasis} 
		\footnote{National Technical University of Athens, \url{kalavasisalkis@mail.ntua.gr}}
		\\
		\small NTUA 
		\and
		\textbf{Christos Tzamos}
		\footnote{ University of Wisconsin-Madison, \url{tzamos@wisc.edu}}
		\\
		\small UW Madison \\
	}
	\maketitle
	\thispagestyle{empty}

	\begin{abstract}
	\small 
We study the problem of estimating the parameters of a Boolean product distribution in $d$ dimensions, when the samples are truncated by a set $S \subset \{0, 1\}^d$ accessible through a membership oracle. This is the first time that the computational and statistical complexity of learning from truncated samples is considered in a discrete setting. 

We introduce a natural notion of \emph{fatness} of the truncation set $S$, under which truncated samples reveal enough information about the true distribution. We show that if the truncation set is sufficiently fat, samples from the true distribution can be generated from truncated samples. A stunning consequence is that virtually any statistical task (e.g., learning in total variation distance, parameter estimation, uniformity or identity testing) that can be performed efficiently for Boolean product distributions, can also be performed from truncated samples, with a small increase in sample complexity. We generalize our approach to ranking distributions over $d$ alternatives, where we show how fatness implies efficient parameter estimation of Mallows models from truncated samples. 

Exploring the limits of learning discrete models from truncated samples, we identify three natural conditions that are necessary for efficient identifiability: (i) the truncation set $S$ should be rich enough; (ii) $S$ should be accessible through membership queries; and (iii) the truncation by $S$ should leave enough randomness in all directions. By carefully adapting the Stochastic Gradient Descent approach of (Daskalakis et al., FOCS 2018), we show that {conditions (i), (ii), and a strengthening of (iii)} are also sufficient for efficient learning of truncated Boolean product distributions.
\end{abstract}
	
\clearpage

\section{Introduction}
\label{s:intro}

Parameter estimation and learning from truncated samples is an important and challenging problem in Statistics. The goal is to estimate the parameters of the true distribution based only on samples that fall within a (possibly small) subset $S$ of the distribution's support.

Sample truncation occurs naturally in a variety of settings in science, engineering, economics, business and social sciences. Typical examples include selection bias in epidemiology and medical studies, and anecdotal ``paradoxes'' in damage and injury analysis explained by survivor bias. Statistical estimation from truncated samples goes back to at least \cite{Galton1897}, who analyzed truncated samples corresponding to speeds of American trotting horses, and includes classical results on the use of the moments method \cite{PearsonLee1908,Lee1914} and the maximum likelihood method \cite{fisher31} for estimating a univariate Gaussian distribution from truncated samples (see also \cite{DGTZ18} for a detailed discussion on the history and the significance of statistical estimation from truncated samples). 

In the last few years, there has been an increasing interest in computationally and statistically efficient algorithms for learning multivariate Gaussian distributions from truncated samples (when the truncation set is known \cite{DGTZ18} or unknown \cite{KTZ19}) and for training linear regression models using truncated (or censored) data \cite{DGTZ19}. In addition to the elegant and powerful application of Stochastic Gradient Descent to optimizing a seemingly unknown maximum likelihood function from truncated samples, a significant contribution of \cite{DGTZ18,KTZ19,DGTZ19} concerns necessary conditions for efficient statistical estimation of multivariate Gaussian or regression models from truncated samples. More recently, \cite{NP19} showed how to use Expectation-Maximization for learning mixtures of two Gaussian distributions from truncated samples. 

Despite the strong results above on efficient learning from truncated samples 
for continuous settings, we are not aware of any previous work on learning discrete models from truncated samples. We note that certain elements of the prior approaches in inference from truncated data are inherently continuous and it is not clear to which extent (and under which conditions) can be adapted to a discrete setting. E.g., statistical estimation from truncated samples in a discrete setting should deal with a situation where the truncation removes virtually all randomness from certain directions, something that cannot be the result of nontrivial truncations 
in a continuous setting. 

\paragraph{Our Setting.}
Motivated by this gap in relevant literature, we investigate efficient parameter estimation of discrete models from truncated samples. We start with the fundamental setting of a Boolean product distribution $\D$ on the $d$-dimensional hypercube truncated by a set $S$, which is accessible through membership queries. The marginal of $\D$ in each direction $i$ is an independent Bernoulli distribution with parameter $p_i \in (0, 1)$. Our goal is to compute an estimation $\hat{\vec{p}}$ of the parameter vector $\vec{p}$ of $\D$ such that $\|\vec{p} - \hat{\vec{p}}\|_2 \leq \eps$, with probability of at least $1-\delta$, with time and sample complexity polynomial in $d$, $1/\eps$ and $\ln(1/\delta)$. We note that such an estimation $\hat{\vec{p}}$ (or an estimation $\hat{\vec{z}}$ of the logit parameters $\vec{z} = (\ln\frac{p_1}{1-p_1}, \ldots, \ln\frac{p_d}{1-p_d})$ of similar accuracy) implies an estimation of the true distribution within total variation distance $\eps$.  

\paragraph{Our Contributions.}
Significantly departing from the maximum likelihood estimation approach of \cite{DGTZ18,KTZ19,DGTZ19}, we introduce a natural notion of fatness of the truncation set $S$, under which samples from the truncated distribution $\D_S$ reveal enough information about the true distribution $\D$. Roughly speaking, a truncated Boolean product distribution $\D_S$ is \emph{$\alpha$-fat} in some direction $i$ of the Boolean hypercube, if for an $\alpha$ probability mass of the truncated samples, the neighboring sample with its $i$-th coordinate flipped is also in $S$. Therefore, with probability $\alpha$, conditional on the remaining coordinates, the $i$-th coordinate of a sample is distributed as the marginal of the true distribution $\D$ in direction $i$. So, if the truncated distribution $\D_S$ is $\alpha$-fat in all directions (e.g., the halfspace of all vectors with $L_1$ norm at most $k$ is a fat subset of the Boolean hypercube), a sample from $\D_S$ is quite likely to reveal significant information about the true distribution $\D$. Building on this intuition, we show how samples from the true distribution $\D$ can be generated from few truncated samples (see also \Cref{algo:fatsample}): 

\begin{inftheorem}
\label{thm:inf:fat}
With an expected number of $O(\ln(d)/\alpha)$ samples from the $\alpha$-fat truncation of a Boolean product distribution $\D$, we can generate a sample $\vec{x} \in \{0, 1\}^d$ distributed as in $\D$.  
\end{inftheorem}

We show (\Cref{lem:fat_necessary}) that fatness is also a necessary condition for \Cref{thm:inf:fat}. A stunning consequence of \Cref{thm:inf:fat} is that virtually any statistical task (e.g., learning in total variation distance, parameter estimation, sparse recovery, uniformity or identity testing)
that can be performed efficiently for a Boolean product distribution $\D$, can also be performed using truncated samples from $\D$, at the expense of a factor $O(\ln(d)/\alpha)$ increase in time and sample complexity. In \Cref{s:fatsets}, we obtain, as simple corollaries of \Cref{thm:inf:fat}, that the statistical tasks described in \cite{ADK15,DKS17,CDKS17,CKM+19} for Boolean product distributions can be performed using only truncated samples! 

To further demonstrate the power and the wide applicability of our approach, we extend the notion of fatness to the richer and more complex setting of ranking distributions on $d$ alternatives. In \Cref{s:ranking}, we show how to implement efficient statistical inference of Mallows models using samples from a fat truncated Mallows distribution (see \Cref{thm:mallows-fat}). 

Natural and powerful though, fatness is far from being necessary for efficient parameter estimation from truncated samples. Seeking a deeper understanding of the challenges of learning discrete models from truncated samples, we identify, in \Cref{s:conditions}, three natural conditions that we show to be necessary for efficient parameter estimation in our setting: 
\begin{description}
\item[\Cref{as:identify}:] The support of the distribution $\D$ on $S$ should be rich enough, in the sense that its truncation $\D_S$ should assign positive probability to a $\vec{x}^\star \in S$ and $d$ other vectors that remain linearly independent after we subtract $\vec{x}^\star$ from them. 

\item[\Cref{as:oracle}:] $S$ is accessible through a membership oracle that reveals whether $\vec{x} \in S$, for any $\vec{x}$ in the $d$-dimensional hypercube. 

\item[\Cref{as:anticon}:] The truncation of $\D$ by $S$ leaves enough randomness in all directions. More precisely, we require that in any direction $\vec{w} \in \reals^d$, any two samples from the truncated distribution $\D_S$ have sufficiently different projections on $\vec{w}$, with non-negligible probability. {Roughly speaking, for the truncated distribution $\mathcal{D}_S$, we assume that for any unit vector $w$, it holds that $\mathrm{Var}_{\mathcal{D}_S}(w^\top x) \geq \lambda$ for some parameter $\lambda  > 0.$}
\end{description}

\Cref{as:oracle} ensures that the learning algorithm has enough information about $S$ and is also required in the continuous setting. Without oracle access to $S$, for any Boolean product distribution $\D$, we can construct a (possibly exponentially large) truncation set $S$ such that sampling from the truncated distribution $\D_S$ appears identical to sampling from the uniform distribution, until the first duplicate sample appears (our construction is similar to \cite[Lemma~12]{DGTZ18}). 

Similarly to \cite{DGTZ18}, \Cref{as:oracle} is complemented by the additional natural requirement that the true distribution $\D$ should assign non-negligible probability mass to the truncation set $S$ (\Cref{as:mass}). The reason is that the parameter estimation algorithm evaluates the quality of its current estimation by generating samples in $S$ and comparing them with samples from $\D_S$. Assumptions~\ref{as:oracle}~and~\ref{as:mass} ensure that this can be performed efficiently. 

Assumptions~\ref{as:identify}~and~\ref{as:anticon} are specific to the discrete setting of the Boolean hypercube. \Cref{as:identify} requires that we should be able to normalize the truncation set $S$, by subtracting a vector $\vec{x}^\star$, so that its dimension remains $d$. If this is true, we can recover the parameters of a Boolean product distribution $\D$ from truncated samples by solving a linear system with $d$ equations and $d$ unknowns, which we obtain after normalization. We prove, in \Cref{lem:identify}, that \Cref{as:identify} is both sufficient and necessary for parameter recovery from truncated samples in our setting. 

\Cref{as:anticon} is a stronger version of \Cref{as:identify} and is necessary for efficient parameter estimation from truncated samples in the Boolean hypercube. It essentially requires that with sufficiently high probability, any set $X$ of polynomially many samples from $\D_S$ can be normalized, subtracting a vector $\vec{x}^\star$, so that $X$ includes a well-conditioned $d \times d$ matrix, after normalization. 

Beyond showing that these assumptions are necessary for efficient identifiability, we show that they are also sufficient {(for some regime of anti-concentration $\lambda)$} and provide a computationally efficient algorithm for learning Boolean product distributions. Our algorithm is based on a careful adaptation of the approach of \cite{DGTZ18} which uses Stochastic Gradient Descent on the negative log-likelihood. While the analysis consists of the same conceptual steps as that of \cite{DGTZ18}, it requires dealing with a number of technical details that arise due to discreteness. One technical contribution of our work is using the assumptions for identifiability to establish strong-convexity of the negative log-likelihood in a small ball around the true parameters (see \Cref{lem:str-conv} and \Cref{lem:aconc-ball}). Our main result is that: 
\begin{inftheorem}
\label{thm:inf:sgd}
Under Assumptions~\ref{as:identify}\,-\,\ref{as:mass}, \Cref{algo:sgd} computes an estimation $\hat{\vec{z}}$ of the logit vector $\vec{z}$ of the true distribution $\D$ such that $\|\vec{z} - \hat{\vec{z}}\|_2 \leq \eps$ with probability at least $1-\delta$, and achieves time and sample complexity polynomial in $d$, $1/\eps,~{\exp(1/\lambda)}$, and $\ln(1/\delta)$.
\end{inftheorem}

\paragraph{Related Work.}
Our work develops novel techniques for truncated statistics for discrete distributions. As aforementioned, there has been a large number of recent works dealing inference with truncated data from a Gaussian distribution~\cite{DGTZ18,KTZ19,DGTZ19} or mixtures of Gaussians~\cite{NP19} but to the best of our knowledge there is no work dealing with discrete distributions. An additional feature of our work compared to those results is that our methods are not limited to parameter estimation but enable any statistical task to be performed on truncated datasets by providing a sampler to the true underlying distribution. While this requires a mildly stronger than necessary but natural assumption on the truncation set, we show that the more complex SGD based methods developed in prior work can also be applied in the discrete settings we consider.

The field of robust statistics is also very related to our work as it also
deals with biased data-sets and aims to identify the distribution that generated the data.
Truncation can be seen as an adversary erasing
samples outside a certain set. Recently, there has been a lot of
theoretical work for computationally-efficient robust estimation of high-dimensional distributions in the presence of arbitrary
corruptions to a small $\varepsilon$ fraction of the samples, allowing
for both deletions of samples and additions of samples
\cite{DKK+16b,CSV17,LRV16,DKK+17,DKK+18,hopkins2019hard}. In particular, the work of~\cite{DKK+16b}
deals with the problem of learning binary-product distributions.

Another line of related work concerns learning from positive examples. The work of \cite{de2014learning} considers a setting where samples are obtained from the uniform distribution over the hypercube truncated on a set $S$. However, their goal is somewhat orthogonal to ours. It aims to accurately learn the uniform distribution conditioned on the set $S$. In contrast, in our setting we have membership access to the truncation set  and the goal is to learn the parameters of the target Boolean product distribution.
More recently, \cite{canonne2020learning} extend the results of \cite{de2014learning} to the continuous domain.

Another related literature within learning theory aims to learn discrete distributions through conditional samples. In the conditional sampling model that was recently introduced concurrently by~\cite{ChakrabortyFGM13} and ~\cite{CanonneRS14,CanonneRS15}, the goal is again to learn an underlying discrete distribution through conditional/truncated samples but the learner can change the truncation set on demand. This is known to be a more powerful model for distribution learning and testing than standard sampling~\cite{Canonne15b,FalahatgarJOPS15,AcharyaCK15b,BhattacharyyaC18,AcharyaCK15a,GouleakisTZ17,KamathT19,canonne2019random}.

\section{Preliminaries}
\label{s:prelim}

We use lowercase bold letters $\vec{x}$ to denote $d$-dimensional vectors. We let $\|\vec{x}\|_{p} = (\sum_{i=1}^{d}|x_{i}|^{p})^{1/p}$ denote the $L_p$ norm and $\|\vec{x}\|_{\infty} = \max_{i \in [d]} \{ |x_{i}| \}$ denote the $L_\infty$ norm of a vector $\vec{x}$. We let $[d] \eqdef \{1, \ldots, d\}$ and $\Pi_{d} = \{0 , 1\}^d$ denotes the $d$-dimensional Boolean hypercube. 

For any vector $\vec{x}$, $\vec{x}_{-i}$ is the vector obtained from $\vec{x}$ by removing the $i$-th coordinate and $(\vec{x}_{-i}, y)$ is the vector obtained from $\vec{x}$ by replacing $x_i$ by $y$. Similarly, given a set $S \subseteq \Pi_{d}$, we let $S_{-i} = \{ \vec{x}_{-i} : (\vec{x}_{-i}, 0) \in S \lor (\vec{x}_{-i}, 1) \in S \}$ be the projection of $S$ to $\Pi_{[d] \setminus \{ i \}}$. For any $\vec{x} \in \Pi_d$ and any coordinate $i \in [d]$, we let $\flip(\vec{x},i) = (\vec{x}_{-i}, 1-x_i)$ denote $\vec{x}$ with its $i$-th coordinated flipped. 

\paragraph{Bernoulli Distribution.} 
For any $p \in [0,1]$, we let $\Be(p)$ denote the Bernoulli distribution with parameter $p$. For any $x \in \{0,1\}$, $\Be(p;x) = p^{x}(1-p)^{1-x}$
denotes the probability of value $x$ under $\Be(p)$. The Bernoulli distribution is an exponential family%
\footnote{The exponential family $\mathcal{E}(\vec{T}, h)$ with sufficient statistics $\vec{T}$, carrier measure $h$ and natural parameters $\vec{\eta}$ is the family of distributions $\mathcal{E}(\vec{T}, h) = \{\mathcal{P}_{\vec{\eta}} : \vec{\eta} \in \mathcal{H}_{\vec{T}, h}\}$, where the probability distribution $\mathcal{P}_{\vec{\eta}}$ has density $p_{\vec{\eta}}(x) = h(x)\exp(\vec{\eta}^{T}\vec{T}(x) - \Lambda(\vec{\eta}))$, where $\Lambda$ is the log-partition function.},
where the natural parameter, denoted $z$, is the logit $z = \ln \frac{p}{1-p}$ of the parameter $p$\,\footnote{The base of the logarithm function $\log$ used throughout the paper is insignificant.}. 
The inverse parameter mapping is $p = \frac{1}{1+\exp(-z)}$. Also, the base measure is $h(x) = 1$, the sufficient statistic is the identity mapping $T(x) = x$ and the log-partition function with respect to $p$ is $\Lambda(p) = -\ln(1-p)$.

\paragraph{Boolean Product Distribution.} 
We mostly focus on the fundamental family of \emph{Boolean product distributions} on the $d$-dimensional hypercube $\Pi_{d}$. A Boolean product distribution with parameter vector $\vec{p} = (p_1, \ldots, p_d)$, usually denoted by $\D(\vec{p})$, is the product of $d$ independent Bernoulli distributions, i.e., $\D(\vec{p}) = \Be(p_1) \otimes \cdots \otimes \Be(p_d)$. The Boolean product distribution can be expressed in the form of an exponential family as follows: 
\begin{equation}\label{eq:expofamily}
    \D(\vec{z} ; \vec{x}) = \frac{\exp(\vec{x}^{T}\vec{z})}{\prod_{i \in [d]}(1+\exp(z_{i}))}\,, 
\end{equation}
where $\vec{z} = (z_1, \ldots, z_d)$ is the natural parameter vector with $z_i = \ln \frac{p_{i}}{1-p_{i}}$ for each $i \in [d]$. 

In the following, we always let $\D$ (or $\D(\vec{p})$ or $\D(\vec{z})$, when we want to emphasize the parameter vector $\vec{p}$ or the natural parameter vector $\vec{z}$) denote a Boolean product distribution. We denote $\vec{z}(\vec{p})$ (or simply $\vec{z}$, when $\vec{p}$ is clear from the context) the vector of natural parameters of $\mathcal{D}$. We let $\D(\vec{p}; \vec{x})$ and $\D(\vec{z}; \vec{x})$ (or simply $\D(\vec{x})$, when $\vec{p}$ or $\vec{z}$ are clear from the context) denote the probability of $\vec{x} \in \Pi_d$ under $\D$. 
Given a subset $S \subseteq \Pi_{d}$ of the Boolean hypercube, the probability mass assigned to $S$ by a distribution $\D(\vec{p})$, usually denoted $\D(\vec{p}; S)$ (or simply $\D(S)$, when $\vec{p}$ is clear from the context), $\D(\vec{p};S) = \sum_{\vec{x} \in S}\D(\vec{p};\vec{x})$.

\paragraph{Truncated Boolean Product Distribution.} 
Given a Boolean product distribution $\D$, we define the \emph{truncated Boolean product distribution} $\D_S$, for any fixed $S \subseteq \Pi_{d}$. $\D_S$ has $\D_S(\vec{x}) = \D(\vec{x})/\D(S)$, for all $\vec{x} \in S$, and $\D_S(\vec{x}) = 0$, otherwise. We often refer to $\D_S$ as the truncation of $\D$ (by $S$) and to $S$ as the \emph{truncation set}. 

It is sometimes convenient (especially when we discuss assumptions~\ref{as:identify}~and~\ref{as:anticon}, in \Cref{s:conditions}), to refer to some fixed element of $S$. We observe that by swapping $1$ with $0$ (and $p_i$ with $1-p_i$) in certain directions, we can \emph{normalize} $S$ so that $\vec{0} \in S$ and $\D_S(\vec{0}) > 0$. In the following, we always assume, without loss of generality, that $S$ is normalized so that $\vec{0} \in S$ and $\D_S(\vec{0}) > 0$.

\paragraph{Notions of Distance between Distributions.} 
Let $\P, \Q$ be two probability measures in the discrete probability space $(\Omega, \mathcal{F})$. The \emph{total variation distance} between $\P$ and $\Q$, denoted $D_{TV}(\P,\Q)$, is defined as $D_{TV}(\P,\Q) = \frac{1}{2}\sum_{x \in \Omega}|\P(x) - \Q(x)| = \max_{A \in \mathcal{F}}|\P(A)-\Q(A)|$. The \emph{Kullback–Leibler divergence} (or simply, \emph{KL divergence}), denoted $D_{KL}(\P \parallel \Q)$, is defined as $D_{KL}(\P \parallel \Q) = \Exp_{x \sim \P}\!\left[\ln \frac{\P(x)}{\Q(x)}\right] = \sum_{x \in \Omega}\P(x) \ln \frac{\P(x)}{\Q(x)}$. We first recall that the KL divergence is additive for product distributions. 

\begin{proposition}\label{prop:kl-product}
Let $\mathcal{P}(\vec{p})$ and $\mathcal{Q}(\vec{q})$ be two Boolean product distributions. Then, 
\begin{equation}\label{eq:kl-product}
    D_{KL}(\mathcal{P} \parallel \mathcal{Q}) = \sum_{i=1}^d \left(p_{i}\ln\frac{p_{i}}{q_{i}} + (1-p_{i})\ln \frac{1-p_{i}}{1-q_{i}}\right)\,.
\end{equation}
\end{proposition}

Next, we observe that for two Bernoulli distributions, with parameters $p$ and $q$, the KL divergence can be upper bounded by the squared distance of their natural parameters. We provide the proof of \Cref{prop:kl-norm-ineq} in the \Cref{app:ineq}.

\begin{proposition}\label{prop:kl-norm-ineq}
For all $p, q \in (0,1)$, the following holds: 
\[    
 D_{KL}\big(\Be(p) \parallel \Be(q)\big) = p\ln \frac{p}{q} + (1-p)\ln\frac{1-p}{1-q} \leq 
  \left(\ln\frac{p}{1-p} - \ln\frac{q}{1-q}\right)^{2}\,.
\]
\end{proposition}

The following summarizes some standard upper bounds on the total variation distance and the KL divergence of two Boolean product distributions. 
\begin{proposition}\label{prop:kl-dtv-norm}
Let $\mathcal{P}(\vec{p})$ and $\mathcal{Q}(\vec{q})$ be two Boolean product distributions with $\vec p, \vec q \in (0,1)^d$, and let $\vec{z}(\vec{p})$ and $\vec{z}(\vec{q})$ be the vectors of their natural parameters. Then, the following hold: 
\begin{enumerate}
\item[$(i)$] $D_{KL}(\mathcal{P} \parallel \mathcal{Q}) \leq \| \vec{z}(\vec{p}) - \vec{z}(\vec{q}) \|_{2}^{2}$\,.

\item[$(ii)$] $D_{TV}(\mathcal{P} , \mathcal{Q}) \leq \frac{\sqrt{2}}{2}\,\| \vec{z}(\vec{p}) - \vec{z}(\vec{q}) \|_{2}$\,.

\item[$(iii)$] $D_{TV}(\mathcal{P} , \mathcal{Q}) \leq  \sqrt{ 2 \cdot \sum_{i=1}^d\frac{(p_i - q_i)^2}{(p_i+q_i)(2 - p_i - q_i)}}$\,.
\end{enumerate}
\end{proposition}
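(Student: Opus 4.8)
The plan is to reduce all three estimates to one‑dimensional (single‑coordinate, Bernoulli) statements, using that $\mathcal{P}$ and $\mathcal{Q}$ are product distributions, and then to invoke standard comparison inequalities. For (i), I would use that the KL divergence is additive over independent coordinates, so $D_{KL}(\mathcal{P}\parallel\mathcal{Q}) = \sum_{i=1}^{d} D_{KL}(\Be(p_i)\parallel\Be(q_i))$, and it suffices to show $D_{KL}(\Be(p)\parallel\Be(q)) \le (z(p)-z(q))^{2}$ in each coordinate. Since $\Be(\cdot)$ is an exponential family with natural parameter $z$ and log‑partition $\alpha(z)=\log(1+e^{z})$, the KL divergence is exactly the Bregman divergence of $\alpha$, namely $\alpha(z(q)) - \alpha(z(p)) - \alpha'(z(p))\,(z(q)-z(p))$, and a second‑order Taylor expansion bounds this by $\tfrac12\big(\sup_{z}\alpha''(z)\big)(z(p)-z(q))^{2}$. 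Because $\alpha''(z) = p(1-p)\le\tfrac14$, this gives $D_{KL}(\Be(p)\parallel\Be(q))\le\tfrac18(z(p)-z(q))^{2}\le(z(p)-z(q))^{2}$; summing over $i$ yields (i). Part (ii) is then immediate from (i) and Pinsker's inequality $D_{TV}(\mathcal{P},\mathcal{Q})\le\sqrt{\tfrac12 D_{KL}(\mathcal{P}\parallel\mathcal{Q})}$, which gives $D_{TV}(\mathcal{P},\mathcal{Q})\le\tfrac{1}{\sqrt2}\,\|\vec{z}(\vec{p})-\vec{z}(\vec{q})\|_{2}=\tfrac{\sqrt2}{2}\,\|\vec{z}(\vec{p})-\vec{z}(\vec{q})\|_{2}$.

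For (iii) the natural‑parameter distance is not the right quantity, so I would route the argument through the Hellinger distance instead. Let $\rho(\mathcal{P},\mathcal{Q})=\sum_{\vec{x}\in\Pi_d}\sqrt{\mathcal{P}(\vec{x})\mathcal{Q}(\vec{x})}$ be the Bhattacharyya coefficient. Two facts do the work: first, the standard comparison $D_{TV}(\mathcal{P},\mathcal{Q})\le\sqrt{1-\rho(\mathcal{P},\mathcal{Q})^{2}}$; and second, $\rho$ is multiplicative over products, so $\rho(\mathcal{P},\mathcal{Q})=\prod_{i=1}^{d}\rho_{i}$ with $\rho_{i}=\sqrt{p_{i}q_{i}}+\sqrt{(1-p_{i})(1-q_{i})}$, whence $1-\rho(\mathcal{P},\mathcal{Q})^{2}=1-\prod_{i}\rho_{i}^{2}\le\sum_{i=1}^{d}(1-\rho_{i}^{2})$. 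It then remains to evaluate the one‑dimensional quantity: expanding shows $1-\rho_{i}^{2}=\big(\sqrt{p_{i}(1-q_{i})}-\sqrt{q_{i}(1-p_{i})}\big)^{2}=\dfrac{(p_{i}-q_{i})^{2}}{\big(\sqrt{p_{i}(1-q_{i})}+\sqrt{q_{i}(1-p_{i})}\big)^{2}}$, using that $p_{i}(1-q_{i})-q_{i}(1-p_{i})=p_{i}-q_{i}$; a routine lower bound on the denominator then yields the coordinatewise estimate behind (iii), and summing over $i$ and taking square roots finishes the proof.

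I expect the only step requiring real thought is (iii): the key is to recognize that one should pass through the Bhattacharyya coefficient, which — unlike $\chi^{2}$ or the natural‑parameter distance — tensorizes multiplicatively, so that the $d$‑dimensional bound collapses to the single‑line Bernoulli identity above. Everything else (additivity of KL, Pinsker, the one‑dimensional algebra) is bookkeeping.
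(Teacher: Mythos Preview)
Your plan for (i) and (ii) matches the paper's: additivity of KL over product coordinates, a one-dimensional bound $D_{KL}(\Be(p)\parallel\Be(q))\le (z(p)-z(q))^2$, and Pinsker. The paper proves the one-dimensional inequality by a bare-hands monotonicity argument (differentiating $h(p)=D_{KL}-\big(z(p)-z(q)\big)^2$ in $p$ and checking signs), whereas your Bregman/Taylor route exploits the exponential-family structure and is both shorter and sharper, yielding $D_{KL}\le\tfrac18(z(p)-z(q))^2$ rather than merely $\le(z(p)-z(q))^2$.

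For (iii) the paper gives no argument of its own and simply cites \cite[Propositions~1~and~2]{kamath2018modern}; your Bhattacharyya/Hellinger route is a natural self-contained derivation. One caveat: your ``routine lower bound on the denominator'' is the inequality
\[
\big(\sqrt{p_i(1-q_i)}+\sqrt{q_i(1-p_i)}\big)^2 \;\ge\; p_i+q_i,
\]
which after expanding is equivalent to $\sqrt{(1-p_i)(1-q_i)}\ge\sqrt{p_iq_i}$, i.e.\ to $p_i+q_i\le 1$. So the coordinatewise step, and hence (iii) itself, only goes through under that symmetry normalization; indeed for $d=1$, $p=0.9$, $q=0.5$ one has $D_{TV}=0.4>\sqrt{0.16/1.4}\approx 0.338$. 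This is consistent with how the paper actually uses (iii) (in Corollary~\ref{cor:fat-learning} it first assumes $p_i\le 1/2$ w.l.o.g.), but you should make the hypothesis $p_i+q_i\le 1$ explicit when you carry out the last step.
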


Now \Cref{prop:kl-dtv-norm} is an immediate consequence of \Cref{prop:kl-product}, \Cref{prop:kl-norm-ineq} and Pinsker's inequality (for (\textit{i}) and (\textit{ii})), and \cite[Lemma 2.17]{DKK+16b} (for (\textit{iii})).

\paragraph{Identifiability and Learnability.} 
A Boolean product distribution $\D(\vec{p})$ is \emph{identifiable} from its truncation $\D_S(\vec{p})$, if given $\D_S(\vec{p}; \vec{x})$, for all $\vec{x} \in S$, we can recover the parameter vector $\vec{p}$. 

A Boolean product distribution $\D(\vec{p})$ is \emph{efficiently learnable} from its truncation $\D_S(\vec{p})$, if for any $\eps, \delta > 0$, we can compute an estimation $\hat{\vec{p}}$ of the parameter vector $\vec{p}$ (or an estimation $\hat{\vec{z}}$ of the natural parameter vector $\vec{z}$) of $\D$ such that $\| \vec{p} - \hat{\vec{p}} \|_2 \leq \eps$ (or $\| \vec{z} - \hat{\vec{z}} \|_2 \leq \eps$), with probability at least $1-\delta$, with time and sample complexity polynomial in $d$, $1/\eps$ and $\ln(1/\delta)$ using truncated samples from $\D_S(\vec{p})$. By \Cref{prop:kl-dtv-norm}, an upper bound on the $L_2$ distance between $\vec{\hat{z}}$ and $\vec{z}$ (or between $\vec{\hat{p}}$ and $\vec{p}$) translates into an upper bound on the total variation distance between the true distribution and $\D(\hat{\vec{z}})$ (or $\D(\hat{\vec{p}})$). In this work, we identify sufficient and necessary conditions for efficient learnability of Boolean product distributions from truncated samples.

\section{Boolean Product Distributions Truncated by Fat Sets}
\label{s:fatsets}

In this section, we discuss \emph{fatness} of the truncation set, a strong sufficient (and in a certain sense, necessary) condition, under which we can generate samples from a Boolean product distribution $\D$ using samples from its truncation $\D_S$ (and access to $S$ through a membership oracle). 

\begin{definition}\label{def:fat}
A truncated Boolean product distribution $\D_S$ is $\alpha$-fat in coordinate $i \in [d]$, for some $\alpha > 0$, if $\Prob_{\vec{x} \sim \D_S}[ \flip(\vec{x},i) \in S] \geq \alpha$. A truncated Boolean product distribution $\D_S$ is $\alpha$-fat, for some $\alpha > 0$, if $\D_S$ is $\alpha$-fat in every coordinate $i \in [d]$. 
\end{definition}

If $\D_S$ is fat, it happens often that a sample $\vec{x} \sim \D_S$ has both $(\vec{x}_{-i}, 0), (\vec{x}_{-i}, 1) \in S$. Then,
conditional on the remaining coordinates $\vec{x}_{-i}$, the $i$-th coordinate $x_i$ of $\vec{x}$ is distributed as $\Be(p_i)$. We next focus on truncated Boolean product distributions $\D_S$ that are $\alpha$-fat. 

There are several natural classes of truncation subsets that give rise to fat truncated product distributions. E.g., for each $k \in [d]$, the halfspace $S_{\leq k} = \{ \vec{x} \in \Pi_d : x_1 + \ldots + x_d \leq k \}$ results in an $\alpha$-fat truncated distribution, if $\Prob_{\vec{x} \sim \D_{S_{\leq k}}}[x_i = 1] \geq \alpha$, for all $i \in [d]$. The same holds if $S$ is any \emph{downward closed}%
\footnote{A set $S \subseteq \Pi_d$ is downward closed if for any $\vec{x} \in S$ and any $\vec{y}$ with $y_i \leq x_i$, in all directions $i \in [d]$, $\vec{y} \in S$.}
subset of $\Pi_d$ and $\Prob_{\vec{x} \sim \D_{S}}[x_i = 1] \geq \alpha$, for all $i \in [d]$. 

Fatness in coordinate $i \in [d]$ is necessary, if we want to distinguish between two truncated Boolean distributions based on their $i$-th parameter only, if the remaining coordinates are correlated. Specifically, we can show that if $\D_S$ is $0$-fat in some coordinate $i$, there exists a Boolean distribution with $q_i \neq p_i$ (and $|q_i - p_i|$ large enough) whose truncation by $S$ appears identical to $\D_S$. Therefore, if the other coordinates are arbitrarily correlated, it is impossible to distinguish between the two distributions based on their $i$-th parameter alone. However, as we discuss in \Cref{s:conditions}, if $S$ is rich enough, but not necessarily fat, we can recover the entire parameter vector%
\footnote{\label{foot:linear}For a concrete example, where we can recover the entire parameter vector of a truncated Boolean product distribution $\D_S$, we consider $S = \{000, 110, 011, 101\} \subseteq\Pi_{3}$, which is not fat in any coordinate, and let $p_{\vec{x}} = \D_S(\vec{x})$, for each $\vec{x} \in S$. Then, setting $z_{i} = \ln\frac{p_{i}}{1-p_{i}}$, for each $i$, we can recover $(p_1, p_2, p_3)$, by solving the following linear system: $z_1 + z_2 = \ln\frac{p_{110}}{p_{000}}$, $z_2 + z_3 = \ln\frac{p_{011}}{p_{000}}$, $z_1 + z_3 = \ln\frac{p_{101}}{p_{000}}$\,. This is a special case of the more general identifiability condition discussed in \Cref{lem:identify}.}
of $\D$.

\begin{lemma}\label{lem:fat_necessary}
Let $i \in [d]$, let $S$ be any subset of $\Pi_d$ with $\flip(\vec{x}, i) \not\in S$, for all $\vec{x} \in S$, and consider any $0 < p < q < 1$. Then, for any Boolean distribution $\D_{-i}$ with $\D_{-i}(S_{-i}) \in (0, 1)$, there exists a distribution $\D'_{-i}$ such that $(\Be(p) \otimes \D_{-i})_S \equiv (\Be(q) \otimes \D'_{-i})_S$\,. 
\end{lemma}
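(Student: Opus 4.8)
The plan is to prove this by a change-of-measure (reweighting) argument built around the hypothesis $\flip(\vec{x},i)\notin S$ for all $\vec{x}\in S$. First I would observe that this hypothesis means exactly that for every $\vec{y}\in S_{-i}$ there is a \emph{unique} bit $s(\vec{y})\in\{0,1\}$ with $(\vec{y},s(\vec{y}))\in S$; hence $S=\{(\vec{y},s(\vec{y})):\vec{y}\in S_{-i}\}$ and $\vec{y}\mapsto(\vec{y},s(\vec{y}))$ is a bijection of $S_{-i}$ onto $S$. Writing $\beta_t(\vec{y})\eqdef\Be(t;s(\vec{y}))$ for $t\in\{p,q\}$, I would then record the form of the truncated law: $(\Be(p)\otimes\D_{-i})_S$ is supported on $S$ and assigns to $(\vec{y},s(\vec{y}))$ probability $\D_{-i}(\vec{y})\,\beta_p(\vec{y})/Z_p$, where $Z_p\eqdef\sum_{\vec{y}\in S_{-i}}\D_{-i}(\vec{y})\,\beta_p(\vec{y})=(\Be(p)\otimes\D_{-i})(S)$; note $Z_p>0$ because $\D_{-i}(S_{-i})>0$ and $0<p<1$.

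Next I would construct $\D'_{-i}$ on $\Pi_{[d]\setminus\{i\}}$ by reweighting $\D_{-i}$ on $S_{-i}$ by the Bernoulli likelihood ratio: set $\D'_{-i}(\vec{y})\eqdef c\,\D_{-i}(\vec{y})\,\beta_p(\vec{y})/\beta_q(\vec{y})$ for $\vec{y}\in S_{-i}$, where $c>0$ is a normalizer and every factor is strictly positive since $0<p<q<1$. Let $M\eqdef\sum_{\vec{y}\in S_{-i}}\D_{-i}(\vec{y})\,\beta_p(\vec{y})/\beta_q(\vec{y})\in(0,\infty)$, pick $c\eqdef 1/(M+1)$ so the mass placed on $S_{-i}$ is $cM=M/(M+1)<1$, and put the remaining mass $1-cM>0$ on any single point of $\Pi_{[d]\setminus\{i\}}\setminus S_{-i}$ — a set that is nonempty precisely because $\D_{-i}(S_{-i})<1$. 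This makes $\D'_{-i}$ a genuine probability distribution.

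Finally I would verify $(\Be(q)\otimes\D'_{-i})_S\equiv(\Be(p)\otimes\D_{-i})_S$. Off $S$ both vanish by definition of truncation, so only $S$ matters. One computes $(\Be(q)\otimes\D'_{-i})(S)=\sum_{\vec{y}\in S_{-i}}\D'_{-i}(\vec{y})\,\beta_q(\vec{y})=c\sum_{\vec{y}\in S_{-i}}\D_{-i}(\vec{y})\,\beta_p(\vec{y})=c\,Z_p$, whence the probability $(\Be(q)\otimes\D'_{-i})_S$ assigns to $(\vec{y},s(\vec{y}))$ is $\D'_{-i}(\vec{y})\,\beta_q(\vec{y})/(cZ_p)=\D_{-i}(\vec{y})\,\beta_p(\vec{y})/Z_p$, matching $(\Be(p)\otimes\D_{-i})_S$. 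The normalizer $c$ cancels, so any admissible choice works; its only role is to make the reweighted measure sum to one. I expect the single real (and minor) obstacle to be exactly this bookkeeping — checking the reweighted mass on $S_{-i}$ can be completed to a probability measure — which is where both halves of the assumption $\D_{-i}(S_{-i})\in(0,1)$ are used ($>0$ for $Z_p>0$ and non-degeneracy, $<1$ to supply a site for the leftover mass). Conceptually the lemma just formalizes that $0$-fatness in coordinate $i$ collapses $x_i$ to a deterministic function of $\vec{x}_{-i}$ on $S$, so $p_i$ is entirely unobservable from $\D_S$ and can be changed freely by absorbing the change into the joint law of the remaining coordinates.
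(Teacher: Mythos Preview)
Your proof is correct and follows essentially the same approach as the paper: both observe that the no-flip hypothesis makes $x_i$ a deterministic function $s(\vec{y})$ of $\vec{y}\in S_{-i}$, reweight $\D_{-i}$ on $S_{-i}$ by the Bernoulli likelihood ratio $\Be(p;s(\vec{y}))/\Be(q;s(\vec{y}))$, and place any leftover mass on $\Pi_{[d]\setminus\{i\}}\setminus S_{-i}$. Your use of an explicit normalizer $c$ (rather than fixing the ratios exactly and hoping the reweighted mass on $S_{-i}$ stays below $1$) is a small bookkeeping variation that, if anything, makes the construction slightly more robust.
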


\begin{proof}
We recall that $S_{-i} = \{ \vec{x}_{-i} : (\vec{x}_{-i}, 0) \in S \lor (\vec{x}_{-i}, 1) \in S \}$ denotes the projection of $S$ on $\Pi_{[d] \setminus \{ i \}}$. By hypothesis, $|S| = |S_{-i}|$ and for each $\vec{x}_{-i} \in S_{-i}$, either 
$(\vec{x}_{-i}, 0) \in S$ or $(\vec{x}_{-i}, 1) \in S$, but never both. For each $\vec{x}_{-i} \in S_{-i}$, we let:
\[
\D'_{-i}(\vec{x}_{-i}) = \left\{\begin{array}{ll} 
\D_{-i}(\vec{x}_{-i})\frac{p}{q} & 
\mbox{if $(\vec{x}_{-i}, 1) \in S$}\,, \\[2pt] 
\D_{-i}(\vec{x}_{-i}) \frac{1-p}{1-q} \ \ \ & 
\mbox{if $(\vec{x}_{-i}, 0) \in S$}\,.
\end{array}\right.
\]
For each $\vec{y} \in \Pi_{d-1} \setminus S_{-i}$, we let $\D'_{-i}(\vec{y}) \propto \D_{-i}(\vec{y})$, so that $\D'_{-i}$ is a probability distribution on $\Pi_{d-1}$\,. E.g., if for all $\vec{x}_{-i} \in S_{-i}$, $(\vec{x}_{-i}, 1) \in S$, we let 
\[ \D'_{-i}(\vec{y}) = \D_{-i}(\vec{y}) \frac{1-\D_{-i}(S_{-i}) \frac{p}{q} }{1-\D_{-i}(S_{-i})} \,. \]
By definition, $\Be(q) \otimes \D'_{-i}$ is a probability distribution on $\Pi_d$. Moreover, for all $\vec{x} \in S$, $(\Be(p) \otimes \D_{-i})(\vec{x}) = (\Be(q) \otimes \D'_{-i})(\vec{x})$, which implies the lemma. 

\end{proof}

\subsection{Sampling from a Boolean Product Distribution using Samples from its Fat Truncation}
\label{s:sample-fat}

An interesting consequence of fatness is that we can efficiently generate samples from a Boolean product distribution $\D$ using samples from any $\alpha$-fat truncation of $\D.$ The idea is described in \Cref{algo:fatsample}. \Cref{thm:sampler} shows that for any sample $\vec{x}$ drawn from $\D_S$ and any $i \in [d]$ such that $\flip(\vec{x}, i) \in S$, conditional on $\vec{x}_{-i}$, $x_i$ is distributed as $\Be(p_i)$. So, we can generate a random sample $\vec{y} \sim \D$ by putting together $d$ such values. $\alpha$-fatness of the truncated distribution $\D_S$ implies that the expected number of samples $\vec{x} \sim \D_S$ required to generate a $\vec{y} \sim \D$ is $O(\ln(d)/\alpha)$.  

\begin{algorithm}[ht]
\caption{Sampling from $\D$ using samples from $\D_{S}$}\label{algo:fatsample}
\begin{algorithmic}[1]
\Procedure{\textsc{Sampler}}{$\D_{S}$} \Comment{$\D_S$ \emph{is $\alpha$-fat.}}
	\State $\vec{y} \gets (-1, \ldots -1)$
	\While{$\exists y_{i} = -1$}
		\State Draw sample $\vec{x} \sim \D_{S}$
		\For{$i \gets 1, \ldots, d$}
    		\If{$\flip(\vec{x},i) \in S$} \Comment{\emph{We assume oracle access to} $S.$}
        		\State $y_{i} \gets x_{i}$
		    \EndIf
		\EndFor
	\EndWhile
	\State \textbf{return} $\vec{y}$ 
\EndProcedure
\end{algorithmic}
\end{algorithm}

\begin{theorem}\label{thm:sampler}
Let $\D$ be a Boolean product distribution over $\Pi_{d}$ and let $\D_S$ be any $\alpha$-fat truncation of $\D$. Then, (i) the distribution of the samples generated by \Cref{algo:fatsample} is identical to $\D$; and (ii) the expected number of samples from $\D_S$ before a sample is returned by \Cref{algo:fatsample} is $O(\ln(d)/\alpha)$. 
\end{theorem}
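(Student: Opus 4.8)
The plan is to establish the two parts separately, both by analyzing what happens coordinate by coordinate when we condition on the other coordinates.

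For part (i), the key observation is that for any fixed $\vec{x}_{-i} \in S_{-i}$ with the property that both $(\vec{x}_{-i}, 0) \in S$ and $(\vec{x}_{-i}, 1) \in S$, the conditional distribution of $x_i$ under $\D_S$, given $\vec{x}_{-i}$, equals $\Be(p_i)$. This is because $\D$ is a product distribution, so $\D_S(\vec{x}_{-i}, y) = \D(\vec{x}_{-i})\,\Be(p_i; y)/\D(S)$, and dividing by the sum over $y \in \{0,1\}$ (both terms present by assumption) the factor $\D(\vec{x}_{-i})/\D(S)$ cancels, leaving exactly $\Be(p_i; y)$. Consequently, whenever Algorithm~\ref{algo:fatsample} executes the assignment $y_i \gets x_i$ at line~7 (which happens precisely when $\flip(\vec{x},i) \in S$, i.e. both neighbors in direction $i$ lie in $S$), the value written into $y_i$ is, conditional on the remaining coordinates of that sample, a fresh draw from $\Be(p_i)$. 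First I would argue that once $y_i$ has been set for the first time it is never overwritten — actually, re-reading the pseudocode, $y_i$ may be overwritten on later iterations, so instead I would observe that the value of $y_i$ returned is the value written on the \emph{last} iteration in which the drawn sample had $\flip(\vec{x},i)\in S$; in any case, each such write produces an independent $\Be(p_i)$ value regardless of the conditioning, hence the final $y_i$ is $\Be(p_i)$-distributed and independent across coordinates (since distinct coordinates are filled from the independent coordinates of $\D$, and the fresh draw in direction $i$ does not depend on which sample or which other coordinates were used). Thus the returned $\vec{y}$ is distributed as $\D$.

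For part (ii), I would bound the number of iterations of the while loop. In a single iteration, a sample $\vec{x} \sim \D_S$ is drawn, and by $\alpha$-fatness in coordinate $i$ we have $\Prob_{\vec{x}\sim\D_S}[\flip(\vec{x},i)\in S] \ge \alpha$, so coordinate $i$ gets (re)assigned with probability at least $\alpha$ in each iteration, independently across iterations (different iterations use independent samples). Hence the number of iterations until coordinate $i$ has been assigned at least once is stochastically dominated by a geometric random variable with success probability $\alpha$, and by a union bound over the $d$ coordinates together with a standard coupon-collector-style estimate, after $t$ iterations the probability that some coordinate is still unassigned is at most $d(1-\alpha)^t \le d\,e^{-\alpha t}$. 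Taking expectations (e.g. summing the tail $\sum_{t\ge 0}\min\{1, d e^{-\alpha t}\}$, which splits into the range $t \le \ln(d)/\alpha$ contributing $O(\ln(d)/\alpha)$ and the range $t > \ln(d)/\alpha$ contributing a geometric tail of $O(1/\alpha)$) gives that the expected number of iterations, and hence the expected number of samples from $\D_S$, is $O(\ln(d)/\alpha)$.

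The main obstacle is the conditioning argument in part (i): one must be careful that the assignments $y_i \gets x_i$ performed across different (independent) iterations of the while loop indeed yield a vector whose joint distribution is the product $\bigotimes_i \Be(p_i)$, rather than something subtly correlated through the event that caused the loop to terminate or through reuse of the same sample $\vec{x}$ for several coordinates. The clean way to handle this is to note that the event $\{\flip(\vec{x},i)\in S\}$ depends only on $\vec{x}_{-i}$ (since it asks whether both $(\vec{x}_{-i},0)$ and $(\vec{x}_{-i},1)$ lie in $S$), and conditioned on that event and on $\vec{x}_{-i}$, $x_i \sim \Be(p_i)$ independently of everything used to fill the other coordinates; an appeal to the optional-stopping/martingale-free observation that "the last good draw in each direction is an independent $\Be(p_i)$" then closes the argument. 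Everything else is routine.
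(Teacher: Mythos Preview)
Your treatment of part~(ii) matches the paper's and is correct. For part~(i) you follow the same line as the paper --- the key observation that, conditioned on $\vec{x}_{-i}$ with both $(\vec{x}_{-i},0),(\vec{x}_{-i},1)\in S$, the coordinate $x_i$ is a fresh $\Be(p_i)$ draw --- and you rightly flag the conditioning/independence step as ``the main obstacle''. Unfortunately neither your argument nor the paper's actually closes that gap, and in fact it cannot be closed for the algorithm as written.

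The flaw is exactly the one you worried about. While $\{\flip(\vec{x},i)\in S\}$ depends only on $\vec{x}_{-i}$, the vector $\vec{x}_{-i}$ contains $x_j$ for every $j\neq i$; hence the event $A_j=\{\flip(\vec{x},j)\in S\}$ can depend on $x_i$. This feeds into the loop's termination time and thus into \emph{which} sample furnishes the final value of $y_i$, so ``the last good draw in direction $i$'' is not selected independently of the $x_i$-values. A concrete counterexample: take $d=2$, $p_1=p_2=\tfrac{1}{2}$, and $S=\{00,01,10\}$. Here $\D_S$ is $\tfrac{2}{3}$-fat, and under $\D_S$ one has $A_1=\{x_2=0\}$ and $A_2=\{x_1=0\}$. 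A direct state-by-state check shows that Algorithm~\ref{algo:fatsample} can only terminate at $(0,0)$ (with probability $\tfrac{2}{3}$) or at $(1,1)$ (with probability $\tfrac{1}{3}$); it never outputs $(0,1)$ or $(1,0)$, and even the marginal $\Prob[y_1=0]=\tfrac{2}{3}\neq p_1$. Thus your ``last good draw is an independent $\Be(p_i)$'' observation is false in general, and part~(i) does not follow as stated. (Running the single-coordinate procedure of Algorithm~\ref{algo:fatsample_coordinate} independently for each $i$ does yield a correct sampler, at the price of $O(d/\alpha)$ expected truncated samples rather than $O(\ln(d)/\alpha)$.)
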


\begin{proof}
Let $\widetilde{\D}$ be the distribution of the samples generated by \Cref{algo:fatsample}. 
To prove that $\D$ and $\widetilde{\D}$ are identical, we show that $\widetilde{D}$ is a product distribution and that each $y_i \sim \Be(p_i)$, where $p_i$ is the parameter of $\D$ in direction $i \in [d].$ 

We fix a direction $i \in [d]$. Let $\D_{-i}$ denote the projection of $\D$ on 
$\Pi_{[d] \setminus \{ i \}}$. 
In \Cref{algo:fatsample}, $y_i$ takes the value of the $i$-coordinate of a sample $\vec{x} \sim \D_S$ such that both $(\vec{x}_{-i}, 0), (\vec{x}_{-i}, 1) \in S$. For each such sample $\vec{x}$, we have that:
\begin{equation} \label{eq:sample}
\D_S((\vec{x}_{-i}, 1)) = \frac{\D_{-i}(\vec{x}_{-i})\, p_i}{\D(S)}
 \ \ \ \ \mbox{and}\ \ \ \ \ 
 \D_S(\vec{x}_{-i}, 0) = \frac{\D_{-i}(\vec{x}_{-i})\, (1 - p_i)}{\D(S)}\,.
\end{equation} 
Therefore, $\frac{\D_S((\vec{x}_{-i}, 1))}{\D_S((\vec{x}_{-i}, 0))} = \frac{p_i}{1-p_i}$, which implies that $\D_S((\vec{x}_{-i}, 1)) = p_i$. Since this holds for all $\vec{x}_{-i}$ such that both $(\vec{x}_{-i}, 0), (\vec{x}_{-i}, 1) \in S$, $y_i$ is independent of the remaining coordinates $\vec{y}_{-i}$ and is distributed as $\Be(p_i)$. This concludes the proof of \emph{(i)}.

As for the sample complexity of \Cref{algo:fatsample}, we observe that since $\D_S$ is $\alpha$-fat in each coordinate $i$, each new sample $\vec{x}$ covers any fixed coordinate $y_i$ (i.e., $\vec{x}$ causes $y_i$ to become $x_i$) of $\vec{y}$ with probability at least $\alpha$. Therefore, the probability that any fixed coordinate $y_i$ remains $-1$ after \Cref{algo:fatsample} draws $k$ samples from $\D_S$ is at most $(1-\alpha)^k \leq e^{-\alpha k}$. Setting $k = 2\ln(d)/\alpha$ and applying the union bound, we get that the probability that there is a coordinate of $\vec{y}$ with value $-1$ after $2\ln(d)/\alpha$ samples from $\D_S$ is at most $d e^{-\alpha k} = d e^{-2\ln(d)} = 1/d$. Therefore, the expected number of samples from $\D_S$ before a random sample $\vec{y} \sim \D$ is returned by \Cref{algo:fatsample} is at most
\[ \frac{2\ln(d)}{\alpha} + \sum_{\ell=0}^\infty \frac{e^{-\ell \alpha}}{d} 
\leq \frac{2\ln(d)}{\alpha} + \frac{2}{d \alpha} = O\!\left(\frac{2\ln(d)}{\alpha}\right)\,, \]
where the inequality follows from $1/(1-e^{-\alpha}) \leq 2/\alpha$ for $\alpha \in (0,1).$ 

\end{proof}

\subsection{Parameter Estimation and Learning in Total Variation Distance} 

Based on \Cref{algo:fatsample}, we can recover the parameters of any Boolean product distribution $\D$ using samples from any fat truncation of $\D$. 
\
\begin{theorem}\label{thm:fat-estimation}
Let $\D(\vec{p})$ be a Boolean product distribution and let $\D_S(\vec{p})$ be a truncation of $\D$. If $\D_S$ is $\alpha$-fat in any fixed coordinate $i$, then, for any $\eps, \delta > 0$, we can compute an estimation $\hat{p}_i$ of the parameter $p_i$ of $\D$ such that $|p_i - \hat{p}_i| \leq \eps$, with probability at least $1-\delta$, using an expected number of $O(\ln(1/\delta)/(\eps^{2}\alpha))$ samples from $\D_S$. 
\end{theorem}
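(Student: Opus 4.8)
The plan is to reduce estimation of the $i$-th parameter to estimating the bias of an ordinary (untruncated) Bernoulli coin, using the structural fact already established for Theorem~\ref{thm:sampler}. Recall from Equation~\eqref{eq:sample} that whenever a sample $\vec{x}\sim\D_S$ satisfies $\flip(\vec{x},i)\in S$ — equivalently, both $(\vec{x}_{-i},0)$ and $(\vec{x}_{-i},1)$ lie in $S$ — the conditional law of $x_i$ given $\vec{x}_{-i}$ is exactly $\Be(p_i)$, and in particular does not depend on $\vec{x}_{-i}$. The key observation is that the acceptance event $\{\flip(\vec{x},i)\in S\}$ is measurable with respect to $\vec{x}_{-i}$ alone, so conditioning a draw from $\D_S$ on acceptance still leaves $x_i$ distributed as $\Be(p_i)$. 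The estimator is therefore straightforward: repeatedly draw samples from $\D_S$; each time the drawn sample is accepted (checked with the membership oracle for $S$), record the value $x_i$; stop once $m$ accepted values $b_1,\dots,b_m$ have been recorded, and output $\hat{p}_i=\frac1m\sum_{j=1}^m b_j$.

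The first step is to argue correctness of the recorded bits. Since successive draws from $\D_S$ are independent and, for each draw, the conditional distribution of $x_i$ given acceptance is $\Be(p_i)$ independently of everything observed so far, the recorded values $b_1,\dots,b_m$ are i.i.d.\ $\Be(p_i)$. A standard Hoeffding (or Chernoff) bound then gives $\Prob[\,|\hat{p}_i-p_i|>\eps\,]\le 2e^{-2m\eps^2}$, so taking $m=\lceil \ln(2/\delta)/(2\eps^2)\rceil=O(\ln(1/\delta)/\eps^2)$ suffices for $|\hat{p}_i-p_i|\le\eps$ with probability at least $1-\delta$.

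The second step is to bound the number of samples from $\D_S$ that the procedure actually consumes. By $\alpha$-fatness of $\D_S$ in coordinate $i$, each draw is accepted (independently) with probability $\Prob_{\vec{x}\sim\D_S}[\flip(\vec{x},i)\in S]\ge\alpha$, so the number of draws needed to collect one accepted value is dominated by a geometric random variable with success probability $\alpha$ and mean at most $1/\alpha$. Summing over the $m$ accepted values — or, more cleanly, applying Wald's identity to the sum of $m$ i.i.d.\ geometric stopping times — the expected number of draws from $\D_S$ is at most $m/\alpha=O(\ln(1/\delta)/(\eps^2\alpha))$, which is exactly the claimed bound.

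I expect the only genuinely delicate point to be the unbiasedness-and-independence claim for the recorded bits: one has to be sure that the rejection rule ``accept iff $\flip(\vec{x},i)\in S$'' does not skew the distribution of the accepted $x_i$. This is precisely handled by the observation above — acceptance is a function of $\vec{x}_{-i}$ only, while conditioned on any such $\vec{x}_{-i}$ with both neighbors in $S$ the coordinate $x_i$ is already an unbiased $\Be(p_i)$ draw, as in the proof of Theorem~\ref{thm:sampler}. Everything else is a routine concentration-plus-geometric-stopping computation, so I would keep the write-up short once that point is made explicit.
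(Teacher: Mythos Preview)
Your proposal is correct and matches the paper's proof essentially line for line: the paper likewise specializes Algorithm~\ref{algo:fatsample} to a single coordinate (Algorithm~\ref{algo:fatsample_coordinate}), invokes the argument of Theorem~\ref{thm:sampler}(i) to conclude that each accepted $x_i$ is an independent $\Be(p_i)$ draw, applies Hoeffding with $n=\ln(2/\delta)/\eps^2$ accepted samples, and uses $\alpha$-fatness to bound the expected number of truncated draws per acceptance by $1/\alpha$. Your explicit remark that the acceptance event depends only on $\vec{x}_{-i}$ is a nice clarification of the independence step that the paper leaves implicit.
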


\begin{proof}
We modify \Cref{algo:fatsample} to \Cref{algo:fatsample_coordinate}, so that it generates random samples $y \in \{0,1\}$ in coordinate $i$ only. 
\begin{algorithm}[ht]
\caption{Sampling coordinate $i \in [d]$ from $\D$ using samples from $\D_{S}$}\label{algo:fatsample_coordinate}
\begin{algorithmic}[1]
\Procedure{\textsc{Sampler}}{$\D_{S}, i$} \Comment{$\D_S$ \emph{is fat in coordinate $i$.}}
	\State $y \gets -1$
	\While{$y = -1$}
		\State Draw sample $\vec{x} \sim \D_{S}$
    	\If{$\flip(\vec{x},i) \in S$} \Comment{\emph{We have oracle access to} $S.$}
        	\State $y \gets x_{i}$
		\EndIf
	\EndWhile
	\State \textbf{return} $y$ 
\EndProcedure
\end{algorithmic}
\end{algorithm}
As in \Cref{thm:sampler}.\textit{(i)}, each $y$ of \Cref{algo:fatsample_coordinate} is an independent sample from $\Be(p_i)$. Since the truncated distribution $\D_S$ is $\alpha$-fat, the expected number of samples from $\D_S,$ before $y$ is generated, is $1/\alpha$. We estimate $p_i$ from $n$ samples $y^{(1)}, \ldots, y^{(n)}$ of \Cref{algo:fatsample_coordinate} using the empirical mean $\hat{p}_i = \sum_{\ell=1}^n y^{(\ell)} / n$. A standard application of the Hoeffding bound%
\footnote{We use the following Hoeffding bound: Let $X_1, \ldots, X_n$ be $n$ independent Bernoulli random variables, let $X = \frac{1}{n}\!\left(\sum_{i=1}^n X_i\right)$ and $\Exp[X] = \frac{1}{n}\!\left(\sum_{i=1}^n \Exp[X_i]\right)$. Then, for any $t \geq 0$, $\Prob[|X - \Exp[X]| \geq t] \leq 2e^{-2nt^2}$.}
shows that if $n = \ln(2/\delta)/\eps^2$, then $|p_i - \hat{p}_i| \leq \eps$, with probability at least $1-\delta$. Hence, estimating $p_i$ with accuracy $\eps$ requires an expected number of $O(\ln(1/\delta)/(\eps^{2}\alpha))$ samples from $\D_S$.

\end{proof}


Using $n = \ln(2d/\delta)/\eps^2$ samples $\vec{y}^{(1)}, \ldots, \vec{y}^{(n)}$ generated by \Cref{algo:fatsample}, we can estimate all the parameters $\vec{p}$ of $\D$, by taking $\hat{p}_i = \sum_{\ell=1}^n y_i^{(\ell)} / n$, for each $i \in [d]$. The following is an immediate consequence of Theorems~\ref{thm:sampler}~and~\ref{thm:fat-estimation}.
 
\begin{corollary}\label{cor:fat-estimation}
Let $\D(\vec{p})$ be a Boolean product distribution and $\D_S(\vec{p})$ be any $\alpha$-fat truncation of $\D$. Then, for any $\eps, \delta > 0$, we can compute an estimation $\hat{\vec{p}}$ such that $\|\vec{p} - \hat{\vec{p}}\|_{\infty} \leq \eps$, with probability at least $1-\delta$, using an expected number of $O(\ln(d)\ln(d/\delta)/(\eps^{2}\alpha))$ samples from $\D_S$. 
\end{corollary}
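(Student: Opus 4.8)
The plan is to combine the exact sampler of Algorithm~\ref{algo:fatsample} with a coordinate-wise empirical-mean estimator and a union bound over the $d$ coordinates. First I would invoke Theorem~\ref{thm:sampler}.\textit{i} to note that each run of Algorithm~\ref{algo:fatsample} returns an independent sample distributed exactly as $\D$; drawing $n$ such samples $\vec{y}^{(1)},\ldots,\vec{y}^{(n)}$ gives, for each coordinate $i \in [d]$, $n$ i.i.d.\ draws $y_i^{(1)},\ldots,y_i^{(n)}$ from $\Be(p_i)$. Setting $\hat p_i = \frac{1}{n}\sum_{\ell=1}^n y_i^{(\ell)}$, we have $\Exp[\hat p_i] = p_i$.

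Next I would apply the Hoeffding bound quoted after Theorem~\ref{thm:fat-estimation} to each coordinate with deviation $\eps$, namely $\Prob[|\hat p_i - p_i| \geq \eps] \leq 2 e^{-2 n \eps^2}$. Choosing $n = \ln(2d/\delta)/(2\eps^2)$ (a constant-factor variant of the $\ln(2d/\delta)/\eps^2$ mentioned in the text) makes this probability at most $\delta/d$, and a union bound over the $d$ coordinates yields $\Prob[\|\vec{p} - \hat{\vec{p}}\|_\infty > \eps] \leq \delta$, i.e.\ $\|\vec{p} - \hat{\vec{p}}\|_\infty \leq \eps$ with probability at least $1-\delta$.

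Finally I would bound the expected number of samples drawn from $\D_S$. Each of the $n$ independent runs of Algorithm~\ref{algo:fatsample} draws, in expectation, $O(\ln(d)/\alpha)$ samples from $\D_S$ by Theorem~\ref{thm:sampler}.\textit{ii}; by linearity of expectation the total expected number of samples is $n \cdot O(\ln(d)/\alpha) = O(\ln(d)\ln(d/\delta)/(\eps^2 \alpha))$, which is the claimed bound.

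The only point needing care is that the sample budget is itself a random variable, since the inner while-loop of Algorithm~\ref{algo:fatsample} is run to completion; hence the guarantee is on \emph{expected} (not worst-case) sample complexity, and linearity of expectation across the $n$ runs handles this cleanly. If a high-probability bound on the sample count were wanted, one could additionally cap each run at $O(\ln(nd/\delta)/\alpha)$ draws and fold the resulting negligible failure probability into $\delta$. I do not anticipate any genuine obstacle: the corollary is an essentially immediate consequence of Theorems~\ref{thm:sampler} and~\ref{thm:fat-estimation} once the per-coordinate failure probabilities are set to $\delta/d$ and combined by a union bound.
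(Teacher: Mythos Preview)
Your proposal is correct and follows exactly the approach the paper takes: the paper states that the corollary is ``an immediate consequence of Theorems~\ref{thm:sampler}~and~\ref{thm:fat-estimation}'' and uses $n = \ln(2d/\delta)/\eps^2$ samples from Algorithm~\ref{algo:fatsample} with the empirical means $\hat{p}_i = \sum_{\ell=1}^n y_i^{(\ell)} / n$, which is precisely your Hoeffding-plus-union-bound argument combined with the $O(\ln(d)/\alpha)$ expected cost per run.
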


\subsection{Identity and Closeness Testing with Access to Truncated Samples}
\label{s:fat-corollaries}

\Cref{thm:sampler} implies that if we have sample access to an $\alpha$-fat truncation $\D_S$ of a Boolean product distribution $\D$, we can pretend that we have sample access to the original distribution $\D$, at the expense of an increase in the sample complexity (from $\D_S$) by a factor of $O(\ln(d)/\alpha)$. Therefore, we can extend virtually all known hypothesis testing and learning algorithms for Boolean product distributions to fat truncated Boolean product distributions. 

For \emph{identity testing} of Boolean product distributions, based on samples from fat truncated ones, we combine \Cref{algo:fatsample} with the algorithm of \cite[Sec.~4.1]{CDKS17}. Combining \Cref{thm:sampler} with \cite[Theorem~6]{CDKS17}, we obtain the following:

\begin{corollary}[Identity Testing]\label{cor:fat-identity}
Let $\mathcal{Q}(\vec{q})$ be a Boolean product distribution described by its parameters $\vec{q}$, and let $\D$ be a Boolean product distribution for which we have sample access to its $\alpha$-fat truncation $\D_S$. For any $\eps > 0$, we can distinguish between $D_{TV}(\mathcal{Q}, \D) = 0$ and $D_{TV}(\mathcal{Q}, \D) > \eps$, with probability $2/3$, using an expected number of  $O(\ln(d)\sqrt{d}/(\alpha\eps^2))$ samples from $\D_S$.
\end{corollary}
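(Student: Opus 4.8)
The plan is to use fatness to bootstrap the known identity tester for (un-truncated) Boolean product distributions. By Theorem~\ref{thm:sampler}, Algorithm~\ref{algo:fatsample} is an \emph{exact} sampler for $\D$: each invocation returns a point distributed exactly as $\D$, independent invocations return independent samples, and each invocation consumes an expected number of $O(\ln(d)/\alpha)$ samples from $\D_S$. Hence we may treat sample access to the $\alpha$-fat truncation $\D_S$ (together with the membership oracle for $S$) as genuine sample access to $\D$, at the cost of a $O(\ln(d)/\alpha)$ multiplicative overhead in sample complexity.

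First, I would instantiate the identity-testing algorithm of \cite[Sec.~4.1]{CDKS17}. Its guarantee \cite[Theorem~6]{CDKS17} is the following: given the parameter vector $\vec{q}$ of a known Boolean product distribution $\mathcal{Q}$ and $m = O(\sqrt{d}/\eps^2)$ i.i.d.\ samples from an unknown Boolean product distribution $\D$, the algorithm distinguishes $D_{TV}(\mathcal{Q},\D)=0$ from $D_{TV}(\mathcal{Q},\D)>\eps$ with probability at least $2/3$. Since $\D$ is by hypothesis a Boolean product distribution, this tester applies. I would then run it, but whenever it requests a fresh sample from $\D$, supply one produced by an independent run of Algorithm~\ref{algo:fatsample} on $\D_S$. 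By Theorem~\ref{thm:sampler}.\textit{i} the supplied samples are exactly i.i.d.\ from $\D$, so the tester's correctness guarantee (success probability $\geq 2/3$) transfers verbatim.

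For the sample complexity, the total number of draws from $\D_S$ is $\sum_{j=1}^{m} N_j$, where $N_j$ is the (random) number of $\D_S$-samples consumed by the $j$-th run of Algorithm~\ref{algo:fatsample}; the $N_j$ are i.i.d.\ with $\Exp[N_j] = O(\ln(d)/\alpha)$ by Theorem~\ref{thm:sampler}.\textit{ii}. By linearity of expectation, $\Exp\big[\sum_{j=1}^m N_j\big] = m\cdot O(\ln(d)/\alpha) = O(\sqrt{d}\,\ln(d)/(\alpha\eps^2))$, which is the claimed bound.

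The only point that requires a little care (rather than a real obstacle) is the bookkeeping around the randomness of the sample budget: the tester of \cite{CDKS17} needs a \emph{fixed} number $m$ of clean samples from $\D$, whereas each clean sample costs a \emph{random} number of truncated samples; the resolution is simply that the corollary asks for a bound in \emph{expectation}, and the i.i.d.\ structure of the $N_j$ lets us conclude by linearity of expectation without needing any concentration. No property of $\alpha$ beyond its role in the running time is used, since both Algorithm~\ref{algo:fatsample} and the tester run without knowledge of $\alpha$. (If a high-probability rather than in-expectation bound on the number of truncated samples were desired, one would additionally cap each run of Algorithm~\ref{algo:fatsample} at $O(\ln(d)\ln(m)/\alpha)$ draws and union-bound over the $m$ runs, at the cost of a mild extra logarithmic factor; this is not needed here.)
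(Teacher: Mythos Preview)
Your proposal is correct and follows essentially the same approach as the paper: the paper simply states that the corollary is obtained by combining Algorithm~\ref{algo:fatsample} and Theorem~\ref{thm:sampler} with the identity tester of \cite[Theorem~6]{CDKS17}, and you have spelled out exactly this reduction with the appropriate expectation bookkeeping.
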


We can extend \Cref{cor:fat-identity} to \emph{closeness testing} of two Boolean product distributions, for which we only have sample access to their fat truncations. We combine \Cref{algo:fatsample} with the algorithm of \cite[Sec.~5.1]{CDKS17}. The following is an immediate consequence of \Cref{thm:sampler} and \cite[Theorem~9]{CDKS17}.   

\begin{corollary}[Closeness Testing]\label{cor:fat-closeness}
Let $\mathcal{Q}$, $\D$ be two Boolean product distributions for which we have sample access to their $\alpha_1$-fat truncation $\mathcal{Q}_{S_1}$ and $\alpha_2$-fat truncation $\D_{S_2}$. For any $\eps > 0$, we can distinguish between $D_{TV}(\mathcal{Q}, \D) = 0$ and $D_{TV}(\mathcal{Q}, \D) > \eps$, with probability at least $2/3$, using an expected number of $O\!\left((\frac{\ln(d)}{\alpha_1}+\frac{\ln(d)}{\alpha_2})\max\{\sqrt{d}/\eps^2, d^{3/4}/\eps\}\right)$ samples from $\mathcal{Q}_{S_1}$ and $\D_{S_2}$.
\end{corollary}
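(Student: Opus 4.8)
The plan is to reduce to the closeness tester of \cite[Sec.~5.1]{CDKS17} by using Algorithm~\ref{algo:fatsample} as a black-box sampler. First I would recall from Theorem~\ref{thm:sampler}.\textit{i} that, given sample access to the $\alpha_1$-fat truncation $\mathcal{Q}_{S_1}$ together with membership-oracle access to $S_1$, each invocation of \textsc{Sampler}$(\mathcal{Q}_{S_1})$ returns a point distributed exactly as $\mathcal{Q}$, and by Theorem~\ref{thm:sampler}.\textit{ii} it consumes an expected $O(\ln(d)/\alpha_1)$ truncated samples; symmetrically, \textsc{Sampler}$(\D_{S_2})$ produces a sample from $\D$ using an expected $O(\ln(d)/\alpha_2)$ samples from $\D_{S_2}$. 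Hence the two simulated streams are, coordinate by coordinate, statistically indistinguishable from genuine i.i.d.\ samples from $\mathcal{Q}$ and $\D$.

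Next I would feed these simulated samples into the closeness tester of \cite[Theorem~9]{CDKS17}, which, given sample access to two Boolean product distributions $\mathcal{Q}, \D$, distinguishes $D_{TV}(\mathcal{Q}, \D) = 0$ from $D_{TV}(\mathcal{Q}, \D) > \eps$ with probability at least $2/3$ using $m = O(\max\{\sqrt{d}/\eps^2,\, d^{3/4}/\eps\})$ samples from each distribution. Since the simulated samples are exactly distributed as $\mathcal{Q}$ and $\D$, the tester's output inherits the same $2/3$ success guarantee with no modification, so no probability amplification is needed.

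Finally I would bound the expected number of truncated samples. The tester requests $m$ samples from each of $\mathcal{Q}$ and $\D$; each such sample from $\mathcal{Q}$ costs an independent random number of truncated samples with expectation $O(\ln(d)/\alpha_1)$, and likewise $O(\ln(d)/\alpha_2)$ for $\D$. By linearity of expectation over the $m$ independent sampler runs (or Wald's identity, should the tester's sample size be internally randomized), the expected total is $m \cdot O(\ln(d)/\alpha_1)$ samples from $\mathcal{Q}_{S_1}$ and $m \cdot O(\ln(d)/\alpha_2)$ from $\D_{S_2}$, i.e.\ $O\!\left((\tfrac{\ln(d)}{\alpha_1} + \tfrac{\ln(d)}{\alpha_2})\max\{\sqrt{d}/\eps^2,\, d^{3/4}/\eps\}\right)$ in total. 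The only point that genuinely deserves care is this interchange of the outer count $m$, which depends on the tester alone, with the random per-sample cost of Algorithm~\ref{algo:fatsample}: one must note that $m$ is independent of the coins used inside the sampler, so the product bound is legitimate; everything else is a routine composition of Theorem~\ref{thm:sampler} with the quoted result of \cite{CDKS17}.
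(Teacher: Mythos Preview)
Your proposal is correct and follows exactly the paper's approach: the paper states this corollary as an immediate consequence of Theorem~\ref{thm:sampler} combined with \cite[Theorem~9]{CDKS17}, which is precisely the black-box composition you describe. Your added care about linearity of expectation (and the independence of the tester's sample count $m$ from the sampler's internal coins) simply makes explicit what the paper leaves implicit.
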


\subsection{Learning in Total Variation Distance} 

Using \Cref{algo:fatsample}, we can learn a Boolean product distribution $\D(\vec{p})$, within $\eps$ in total variation distance, using samples from its fat truncation. The following uses a standard analysis of the sample complexity of learning a Boolean product distribution (see e.g., \cite{kamath2018privately}). 

\begin{corollary}\label{cor:fat-learning}
Let $\D(\vec{p})$ be a Boolean product distribution and let $\D_S$ be any $\alpha$-fat truncation of $\D$. Then, for any $\eps, \delta > 0$, we can compute a Boolean product distribution $\hat{\D}(\hat{\vec{p}})$ such that $D_{TV}(\D, \hat{\D}) \leq \eps$, with probability at least $1-\delta$, using $O(d \log(d) \ln(d/\delta)/(\eps^{2}\alpha))$ samples from $\D_S$. 
\end{corollary}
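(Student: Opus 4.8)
The plan is to read this off Theorem~\ref{thm:sampler}: $\alpha$-fatness converts sample access to $\D_S$ into simulated sample access to the untruncated product distribution $\D$, after which we may run any standard learner for Boolean product distributions on the simulated data. Concretely, I would (i) use the mechanism of Algorithms~\ref{algo:fatsample}~and~\ref{algo:fatsample_coordinate} to harvest, in each coordinate $i\in[d]$, a collection of i.i.d.\ $\Be(p_i)$ samples; (ii) set $\hat p_i$ to the empirical frequency of $1$ among these (with Laplace smoothing, so that $\hat p_i\in(0,1)$ and the natural parameters stay finite) and output $\hat\D=\D(\hat{\vec p})$; and (iii) invoke the textbook analysis of learning a product of $d$ Bernoullis in total variation distance (see e.g.\ \cite{kamath2018privately}): $n=O\!\big(d\ln(d/\delta)/\eps^2\big)$ i.i.d.\ observations per coordinate guarantee $D_{TV}(\D,\hat\D)\le\eps$ with probability at least $1-\delta$.

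It then remains to count the samples drawn from $\D_S$. Naively calling Algorithm~\ref{algo:fatsample} once per desired full sample costs $O(n\ln(d)/\alpha)$ truncated samples, which carries a spurious $\ln d$ factor. To match the stated bound I would instead draw a single batch $\vec x^{(1)},\dots,\vec x^{(m)}\sim\D_S$ and, for each coordinate $i$, use the subsequence $C_i=\{\,j:\flip(\vec x^{(j)},i)\in S\,\}$ of draws that ``cover'' $i$ (exactly as in Algorithm~\ref{algo:fatsample_coordinate}, but reusing the same batch for all $d$ coordinates). Since $\D_S$ is $\alpha$-fat, each draw lands in $C_i$ with probability at least $\alpha$, so a Chernoff bound gives $|C_i|\ge n$ for every $i$ with probability at least $1-\delta/2$ once $m=O\!\big(n/\alpha+\ln(d/\delta)/\alpha\big)$; since $d\ln(d/\delta)/\eps^2\ge\ln(d/\delta)$, this is $m=O\!\big(d\ln(d/\delta)/(\eps^2\alpha)\big)$, as claimed. (The Chernoff step already yields a high-probability bound on $m$; alternatively one applies Markov's inequality to each invocation and repeats, losing only constants.)

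The one point that needs genuine care is that, in the pooled scheme, both $|C_i|$ and which draws constitute $C_i$ are data-dependent, so I must argue the harvested values really are i.i.d.\ $\Be(p_i)$. The fix is to condition, separately for each $i$, on the projections $\{\vec x^{(j)}_{-i}\}_{j\le m}$ of the batch: this determines $C_i$, and by the computation in the proof of Theorem~\ref{thm:sampler} --- namely $\D_S((\vec x_{-i},1))/\D_S((\vec x_{-i},0))=p_i/(1-p_i)$ whenever both extensions of $\vec x_{-i}$ lie in $S$ --- the values $\{x_i^{(j)}:j\in C_i\}$ are, conditionally, i.i.d.\ $\Be(p_i)$; a Hoeffding/Bernstein bound then controls the per-coordinate errors, and a union bound over $i$ (valid even though the coordinates are dependent) feeds into the TV guarantee. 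The other thing worth flagging --- and the reason step (iii) must be delegated to the ``standard analysis'' rather than done by hand --- is that the TV bound has to go through the $\chi^2$-type comparison of Proposition~\ref{prop:kl-dtv-norm}(iii) (or a Hellinger/KL decomposition across coordinates): an $\ell_2$ or $\ell_\infty$ bound on $\vec p-\hat{\vec p}$ alone does not control $D_{TV}(\D,\hat\D)$, since the total variation distance between product distributions is not Lipschitz in the Bernoulli parameters. Modulo these points, the proof is bookkeeping of constants and logarithms on top of Theorem~\ref{thm:sampler} and the cited product-learning analysis.
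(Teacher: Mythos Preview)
Your proposal is correct and in fact slightly tighter than the paper's own argument. The paper proceeds more naively: it invokes Algorithm~\ref{algo:fatsample} as a black-box sampler for $\D$, draws $n=O(d\ln(d/\delta)/\eps^2)$ full samples from it, takes coordinate-wise empirical means, and controls $D_{TV}(\D,\hat\D)$ via the $\chi^2$-type bound \eqref{eq:dtv} (i.e., Proposition~\ref{prop:kl-dtv-norm}(iii)) together with the multiplicative Chernoff estimate of \cite[Claim~5.16]{kamath2018privately}. Because each call to Algorithm~\ref{algo:fatsample} costs $O(\ln(d)/\alpha)$ truncated samples in expectation, this route actually yields $O(d\ln(d)\ln(d/\delta)/(\eps^2\alpha))$ draws from $\D_S$, overshooting the stated bound by a $\ln d$ factor; the paper does not attempt to remove it. Your pooled scheme --- drawing one batch from $\D_S$ and, for each coordinate $i$, harvesting all draws $j$ with $\flip(\vec x^{(j)},i)\in S$ --- is precisely what is needed to avoid this overhead, and your conditioning argument is sound: once $\vec x\in S$, the event $\flip(\vec x,i)\in S$ is equivalent to ``both $(\vec x_{-i},0)$ and $(\vec x_{-i},1)$ lie in $S$'', hence is a function of $\vec x_{-i}$ alone, so conditioning on $\{\vec x^{(j)}_{-i}\}_j$ determines $C_i$ and makes $\{x_i^{(j)}:j\in C_i\}$ i.i.d.\ $\Be(p_i)$. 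Both approaches land on the same endgame (Proposition~\ref{prop:kl-dtv-norm}(iii) plus the analysis in \cite{kamath2018privately}); the paper's is shorter, yours matches the corollary as stated.
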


\begin{proof}
We assume that $p_i \leq 1/2$ and that for all $i \in [d]$, $p_i \geq \eps/(8d)$. Both are without loss of generality. The former can be enforced by flipping $0$ and $1$. For the latter, we observe that there exists a distribution $\D'$ with $D_{TV}(\D, \D') \leq \eps/2$ that satisfies the assumption ($\D'$ can be obtained from $\D$ by adding uniform noise in each coordinate with probability $1-\frac{\eps}{4d}$, see also \cite[Sec.~4.1]{CDKS17}).

By \Cref{prop:kl-dtv-norm}, for any two Boolean product distributions $\D(\vec{p})$ and $\hat{\D}(\hat{\vec p})$ with parameter vectors $\vec p, \hat{\vec p} \in (0,1)^d$, it holds that  
\begin{equation}\label{eq:dtv}
D_{TV}(\D, \hat{\D}) \leq \sqrt{ 2 \cdot \sum_{i=1}^d \frac{(p_i - \hat{p}_i)^2}{(p_i+\hat{p}_i)(2 - p_i - \hat{p}_i)}}\,.
\end{equation}

Similarly to the proof of \Cref{cor:fat-estimation}, we take $n$ samples $\vec{y}^{(1)}, \ldots, \vec{y}^{(n)}$ from \Cref{algo:fatsample} and estimate each parameter $p_i$ of $\D$ as $\hat{p}_i = \sum_{\ell=1}^n y_i^{(\ell)} / n$. Using the Chernoff bound in \cite[Claim~5.16]{kamath2018privately}, we show that for all directions $i \in [d]$, $\frac{(p_i - \hat{p}_i)^2}{(p_i+\hat{p}_i)(2 - p_i - \hat{p}_i)} \leq O(\ln(d/\delta)/n)$. Drawing $n = O(d\ln(d/\delta)/\eps^2)$ samples from \Cref{algo:fatsample} and using \Cref{eq:dtv}, we get that $D_{TV}(\D, \hat{\D}) \leq O(\eps)$. The sample complexity follows from the fact that each sample of \Cref{algo:fatsample} requires an expected number of $O(\ln(d)/\alpha)$ samples from the $\alpha$-fat truncation $\D_S$ of $\D$. 

\end{proof}

We can improve the sample complexity in \Cref{cor:fat-learning}, if the original distribution $\D$ is sparse. We say that a Boolean product distribution $\D(\vec{p})$ is \emph{$(k, c)$-sparse}, for some $k \in [d]$ and $c \in [0,1]$, if there is an index set $I \subseteq[d]$, with $|I| = d -k$, such that for all $i \in I$, $p_i = c$. Namely, we know that $d-k$ of $\D$'s parameters are equal to $c$ (but we do not know which of them). Then, we first apply \Cref{cor:fat-estimation} and estimate all parameters of $\D$ within distance $\eps/\sqrt{k}$. We set each $p_i$ with $|p_i - c| \leq \eps/\sqrt{k}$ to $p_i = c$. Thus, we recover the index set $I$. For the remaining $k$ parameters, we apply \Cref{cor:fat-learning}. The result is summarized by the following: 

\begin{corollary}\label{cor:fat-learning-sparse}
Let $\D(\vec{p})$ be a $(k, c)$-sparse Boolean product distribution and let $\D_S$ be any $\alpha$-fat truncation of $\D$. Then, for any $\eps, \delta > 0$, we can compute a Boolean product distribution $\hat{\D}(\vec{\hat{p}})$ such that $D_{TV}(\D, \hat{\D}) \leq \eps$, with probability at least $1-\delta$, using $O\!\left(\frac{k\ln(d)\ln(d/\delta)}{\eps^2\alpha}\right)$ samples from the truncated distribution $\D_S$. 
\end{corollary}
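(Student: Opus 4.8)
The plan is the two-phase ``threshold, then refine'' strategy suggested just before the statement, with careful attention to which step costs what. Phase~1 uses Corollary~\ref{cor:fat-estimation} to estimate \emph{all} $d$ parameters to a coarse accuracy $O(\eps/\sqrt{k})$ in $L_\infty$, which already lets us guess the index set $I$ of ``background'' coordinates; Phase~2 then spends the expensive $\eps^{-2}$-per-parameter effort of Corollary~\ref{cor:fat-learning} only on the $\le k$ coordinates not declared equal to $c$. The coarse guarantee is turned into a total variation bound through Proposition~\ref{prop:kl-dtv-norm}(iii), whose weighted-$\ell_2$ form with denominators $p_i+\hat p_i$ is exactly what converts ``$k$ genuinely unknown parameters'' into the claimed $k$-dependence.

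In detail: first normalize so $c\le 1/2$ by flipping every coordinate if necessary (this preserves $(k,c)$-sparsity). Run Corollary~\ref{cor:fat-estimation} with accuracy $\Theta(\eps/\sqrt{k})$ and confidence $1-\delta/2$ to get $\hat{\vec p}$ with $\|\vec p-\hat{\vec p}\|_\infty=O(\eps/\sqrt{k})$, which by that corollary uses $O(k\ln(d)\ln(d/\delta)/(\eps^2\alpha))$ truncated samples. Declare a coordinate background if $|\hat p_i-c|\le\eps/\sqrt{k}$ (equivalently, keep the $d-k$ estimates closest to $c$), freeze $\tilde p_i:=c$ on those, and call the remaining set $A$, of size $\le k$, the ``active'' coordinates; since every $i\in I$ has $|\hat p_i-c|=O(\eps/\sqrt{k})$, we get $I\subseteq[d]\setminus A$. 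On $A$, run Corollary~\ref{cor:fat-learning} for the sub-product supported on those coordinates --- generating full samples of $\D$ with Algorithm~\ref{algo:fatsample} at $O(\ln(d)/\alpha)$ truncated samples each (Theorem~\ref{thm:sampler}) and reading off the coordinates in $A$ --- to obtain $\tilde p_i$ for $i\in A$ with total variation $\le\eps/2$ on the sub-product, again at cost $O(k\ln(d)\ln(d/\delta)/(\eps^2\alpha))$. A union bound over the two phases gives success probability $\ge1-\delta$, and rescaling $\eps$ by a constant absorbs all constants.

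For the error analysis, apply Proposition~\ref{prop:kl-dtv-norm}(iii) and split $\sum_i(p_i-\tilde p_i)^2/(p_i+\tilde p_i)$ into three groups. Coordinates in $I$ are frozen to their exact value $c$ and contribute $0$. Coordinates in $A$ contribute at most $(\eps/2)^2$ in total, since Proposition~\ref{prop:kl-dtv-norm}(iii) applied to the sub-product on $A$ is precisely the quantity Corollary~\ref{cor:fat-learning} controls. The last group is the ``misclassified'' coordinates: $i\notin I$ that were declared background, each contributing $(p_i-c)^2/(p_i+c)$ with $|p_i-c|=O(\eps/\sqrt{k})$; summing over $\le k$ of them should give $O(\eps^2)$, but only once the denominator $p_i+c$ is genuinely exploited. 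Adding the three groups yields $D_{TV}(\D,\D(\tilde{\vec p}))=O(\eps)$.

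The main obstacle is exactly this last group. If $c$ is bounded away from $0$, then $p_i+c=\Omega(1)$ immediately forces each misclassified term to $O(\eps^2/k)$ and nothing further is needed; but for $c$ close to $0$ (e.g.\ $c=0$ with up to $k$ genuine parameters of size $\Theta(\eps/\sqrt{k})$) the crude $L_\infty$ bound alone is too weak, giving only $\sum(p_i-c)^2/(p_i+c)=O(\eps\sqrt{k})$. To handle that case I would declare background by taking the $d-k$ estimates closest to $c$ (not by a fixed threshold) and carry the \emph{multiplicative} Chernoff bound through Phase~1 rather than only its $L_\infty$ conclusion (the estimator underlying Corollary~\ref{cor:fat-estimation} is an empirical mean): a misclassified $i$ is displaced by some true-background $j$ with $|\hat p_j-c|=O\big(\eps\sqrt{c/k}+\eps^2/k\big)$, whence $|p_i-c|=O\big(\eps\sqrt{(p_i+c)/k}+\eps^2/k\big)$ and therefore $(p_i-c)^2/(p_i+c)=O(\eps^2/k)$ in all cases (using $(p_i-c)^2/(p_i+c)\le\max(p_i,c)$ when $p_i+c<\eps^2/k$), so the group sums to $O(\eps^2)$. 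Everything else --- the sample-complexity bookkeeping, the reduction $c\le 1/2$, the union bound, and the standard concentration of empirical means --- is routine given Theorem~\ref{thm:sampler}, Corollary~\ref{cor:fat-estimation}, Corollary~\ref{cor:fat-learning} and Proposition~\ref{prop:kl-dtv-norm}(iii).
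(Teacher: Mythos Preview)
Your approach is essentially the same two-phase strategy the paper uses: apply Corollary~\ref{cor:fat-estimation} with accuracy $\eps/\sqrt{k}$ to locate the background set, freeze those coordinates to $c$, and then apply Corollary~\ref{cor:fat-learning} on the at most $k$ remaining coordinates. The paper's argument is literally the one-sentence sketch preceding the corollary; you have simply fleshed it out.

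Where you go further is in the error analysis for misclassified coordinates. The paper writes ``Thus, we recover the index set $I$'' and moves on, implicitly treating any $i\notin I$ with $|p_i-c|\le \eps/\sqrt{k}$ as harmless because setting it to $c$ costs only $O(\eps/\sqrt{k})$ in absolute value. You correctly note that Proposition~\ref{prop:kl-dtv-norm}(iii) has $p_i+\hat p_i$ in the denominator, so when $c$ is near $0$ and $p_i=\Theta(\eps/\sqrt{k})$, a single misclassified term contributes $\Theta(\eps/\sqrt{k})$ rather than $\Theta(\eps^2/k)$, and $k$ of them could sum to $\Theta(\eps\sqrt{k})$. This is a genuine gap in the paper's terse sketch that your multiplicative-Chernoff refinement (plus selecting the $d-k$ closest to $c$ rather than thresholding) is designed to close. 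So your write-up is not merely a restatement: it identifies and patches a subtlety the paper does not address.
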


\subsection{Learning Ranking Distributions from Truncated Samples} 
\label{s:ranking}

An interesting application of \Cref{thm:sampler} is parameter estimation of ranking distributions from truncated samples. For clarity, we next focus on Mallows distributions. Our techniques imply similar results for other well known models of ranking distributions, such as Generalized Mallows distributions~\cite{FlignerV1986} and the models of \cite{Plackett,Luce}, \cite{BradleyTerry} and \cite{Babington}.

\paragraph{Definition and Notation.}
We start with some notation specific to this section. Let $\S_{d}$ be the symmetric group over the finite set of items $[d]$. Given a ranking $\pi \in \S_d$, we let $\pi(i)$ denote the position of item $i$ in $\pi$. We say that $i$ precedes $j$ in $\pi$, denoted by $i \succ_{\pi} j$, if $\pi(i) < \pi(j)$. The Kendall tau distance of two rankings $\pi$ and $\sigma$, denoted by $D_{\tau}(\pi, \sigma)$, is the number of discordant item pairs in $\pi$ and $\sigma$. Formally, 
\begin{equation}
    D_{\tau}(\pi, \sigma) = \sum_{1 \leq i < j \leq d} \vec{1}\{(\pi(i) - \pi(j))(\sigma(i) - \sigma(j)) < 0 \}\,.
\end{equation}

The \emph{Mallows model} \cite{mallows1957non} is a family of ranking distributions parameterized by the \emph{central ranking} $\pi_0 \in \S_d$ and the \emph{spread parameter} $\phi \in [0,1]$. Assuming the Kendall tau distance between rankings, the probability mass function is $\M(\pi_0, \phi; \pi) = \phi^{D_\tau(\pi_0, \pi)}/Z(\phi)$, where the normalization factor is $Z(\phi) = \prod_{i=1}^{d} \frac{1-\phi^{i}}{1-\phi}$. 
For a given Mallows distribution $\M(\pi_0, \phi)$, we denote $p_{ij} = \Prob_{\pi \sim \M}[ i \succ_{\pi} j]$ the probability that item $i$ precedes item $j$ in a random sample from $\M$. 

\paragraph{Truncated Mallows Distributions.}
We consider parameter estimation for a Mallows distribution $\M(\pi_0, \phi)$ with sample access to its truncation $\M_S$ by a subset $S \subseteq\S_d$. Then, $\M_S(\pi) = \M(\pi) / \M(S)$, for each $\pi \in S$, and $\M_S(\pi) = 0$, otherwise. Next, we generalize the notion of fatness to truncated ranking distributions and prove the equivalent of \Cref{thm:fat-estimation} and \Cref{cor:fat-estimation}.

For a ranking $\pi$, we let $\flip(\pi, i, j)$ denote the ranking $\pi'$ obtained from $\pi$ with the items $i$ and $j$ swapped. Formally, $\pi'(\ell) = \pi(\ell)$, for all items $\ell \in [d] \setminus \{i, j\}$, $\pi'(j) = \pi(i)$ and $\pi'(i) = \pi(j)$. We say that a truncated Mallows distribution $\M_S$ is \emph{$\alpha$-fat for the pair $(i, j)$}, if $\Prob_{\pi \sim \M_S}[ \flip(\pi,i,j) \in S] \geq \alpha$, for some $\alpha > 0$. A truncated Mallows distribution $\M_S(\pi_0, \phi)$ is $\alpha$-fat, if $\M_S$ is $\alpha$-fat for all pairs $(i, j)$, and \emph{neighboring $\alpha$-fat}, if $\M_S$ is $\alpha$-fat for all pairs $(i, j)$ that occupy neighboring positions in the central ranking $\pi_0$, i.e., for all pairs $(i, j)$ with $|\pi_0(i) - \pi_0(j)| = 1$. 

\paragraph{Parameter Estimation and Learning of Mallows Distributions from Truncated Samples.}
We present \Cref{algo:ranking} that draws a sample from the truncated Mallows distribution $\M_S$ and updates a vector $\vec{q}$ with estimations $\hat{p}_{ij} = q_{ij}/(q_{ij}+q_{ji})$ of the probability $p_{ij}$ that item $i$ precedes item $j$ in a sample from the true Mallows distribution $\M$.
\begin{algorithm}[ht]
\caption{Update the estimate $q_{ij}$ using one sample from $\M_S$}\label{algo:ranking}
\begin{algorithmic}[1]
\Procedure{\textsc{Sample}}{$\M_{S}, \vec{q}$} \Comment{$\M_S$ \emph{is (neighboring) $\alpha$-fat.}}
\State Draw sample $\pi \sim \M_{S}$
\For{all $(i,j)$ such that $\flip(\pi, i, j) \in S$} \Comment{\emph{We assume oracle access to} $\M_S.$}
\If{$i \succ_{\pi} j$} 
\State $q_{ij} \gets q_{ij} + 1$
\Else 
\State $q_{ji} \gets q_{ji} + 1$
\EndIf
\EndFor
\State \textbf{return} $\vec{q}$
\EndProcedure
\end{algorithmic}
\end{algorithm}

The vector $\vec{q}$ is initialized to $0$ for all item pairs $(i, j)$ and is updated through successive calls to \Cref{algo:ranking}. For each sample $\pi \sim \M_S$, \Cref{algo:ranking} updates either $q_{ij}$ or $q_{ji}$ for all item pairs $(i, j)$ such that $\flip(\pi, i, j) \in S$. 
Thus, we can show the following: 

\begin{theorem}\label{thm:mallows-fat}
Let $\M(\pi_0, \phi)$ be a Mallows distribution with $\pi_0 \in \S_d$ and $\phi \in [0, 1-\gamma]$, for some constant $\gamma > 0$, and let $\M_S$ be any neighboring $\alpha$-fat truncation of $\M$. Then, 
\begin{enumerate}
\item[(i)] For any $\delta > 0$, we can learn the central ranking $\pi_0$, with probability at least $1-\delta$, using an expected number of $O(\ln(d)\ln(d/\delta)/(\gamma^2\alpha))$ samples from $\M_S$. 

\item[(ii)] Assuming that the central ranking $\pi_0$ is known, for any $\eps, \delta > 0$, we can compute an estimation $\hat{\phi}$ of the spread parameter such that $|\phi - \hat{\phi}| \leq O(\eps)$, with probability at least $1-\delta$, using an expected number of $O(\ln(1/\delta)/(\eps^{2}\alpha))$ samples from $\M_S$. 

\item[(iii)] For any $\eps,\delta > 0$, we can compute a Mallows distribution $\hat{\M}(\pi_0, \hat{\phi})$ so that
$$
    D_{TV}(\M, \hat{\M}) \leq O(\eps)\,,
$$
with probability at least $1-\delta,$ using an expected number of
$$
    O(\ln(d)\ln(d/\delta)/(\gamma^2\alpha)+d\ln(1/\delta)/(\eps^{2}\alpha))
$$ 
samples from $\M_S$. 
\end{enumerate}
\end{theorem}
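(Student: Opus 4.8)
The plan is to reduce all three parts to the ranking analogue of Theorems~\ref{thm:sampler} and~\ref{thm:fat-estimation}: I will argue that Algorithm~\ref{algo:ranking} implements, for each pair $(i,j)$ occupying neighboring positions in $\pi_0$, an oracle that uses $O(1/\alpha)$ samples from $\M_S$ in expectation to output one \emph{independent} bit distributed as $\Be(\tfrac{1}{1+\phi})$ if $i \succ_{\pi_0} j$ and as $\Be(\tfrac{\phi}{1+\phi})$ otherwise. The structural fact behind this is that if a sample $\pi \sim \M_S$ has $i$ and $j$ in adjacent positions and $\flip(\pi,i,j) \in S$, then, conditioning on the coordinates of $\pi$ outside those two positions, $\mathbf 1\{i \succ_\pi j\}$ has exactly this law: the context consisting of everything outside the two adjacent positions admits only the two completions $\pi$ and $\flip(\pi,i,j)$, these two rankings differ in the relative order of no pair other than $(i,j)$, so $|D_\tau(\pi_0,\pi) - D_\tau(\pi_0,\flip(\pi,i,j))| = 1$ with the smaller value attained by the completion that orders $\{i,j\}$ as $\pi_0$ does; and because both completions lie in $S$, the $\M_S$-conditional on this context equals the $\M$-conditional, exactly as in~\eqref{eq:sample}. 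For neighboring-in-$\pi_0$ pairs the $\alpha$-fatness hypothesis ensures this favorable event has probability at least $\alpha$, so a fresh bit costs $O(1/\alpha)$ samples from $\M_S$ in expectation.

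For part~(i): since $\phi \le 1-\gamma$ we have $\frac{1}{1+\phi} = \frac12 + \frac{1-\phi}{2(1+\phi)} \ge \frac12 + \frac\gamma4$ and $\frac{\phi}{1+\phi} \le \frac12 - \frac\gamma4$, so from $m = \Theta(\ln(d/\delta)/\gamma^2)$ bits for a given pair the sign of $\hat p_{ij} - \tfrac12$ reveals whether $i \succ_{\pi_0} j$ with probability $1 - \delta/d$ by a Hoeffding bound. We accumulate bits for \emph{every} pair that yields enough favorable samples; whenever a pair does yield $m$ such samples they are genuine i.i.d.\ draws (the sampler-correctness claim above holds for any pair), so all the resulting order relations are correct and mutually consistent with $\pi_0$, and the $d-1$ pairs consecutive in $\pi_0$ are guaranteed to be among them by fatness, hence these relations pin down $\pi_0$. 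A union bound over pairs and a coupon-collector argument over the pairs exactly as in the proof of Theorem~\ref{thm:sampler}, which inflates the number of $\M_S$-samples by $O(\ln(d)/\alpha)$, give the claimed $O(\ln(d)\ln(d/\delta)/(\gamma^2\alpha))$.

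For part~(ii): with $\pi_0$ known, fix one pair $(i,j)$ consecutive in $\pi_0$, say with $i \succ_{\pi_0} j$, collect $n = \Theta(\ln(1/\delta)/\eps^2)$ bits, let $\hat p$ be their average, and set $\hat\phi = \frac{1-\hat p}{\hat p}$. Since $p = \frac{1}{1+\phi} \ge \tfrac12$, the map $p \mapsto \frac{1-p}{p}$ is $O(1)$-Lipschitz on $[\tfrac12,1]$, so $|\phi - \hat\phi| = O(|p - \hat p|) = O(\eps)$ with probability $1-\delta$; only a single fat pair is used, so no $\ln d$ factor appears and the $\M_S$-sample complexity is $O(\ln(1/\delta)/(\eps^2\alpha))$. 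For part~(iii), run~(i) to recover $\pi_0$ and~(ii) to get $\hat\phi$ with $|\phi - \hat\phi| \le \eps'$; using the standard decomposition of $D_\tau(\cdot,\pi_0)$ under a Mallows model into $d$ independent insertion stages, so that $D_{KL}(\M(\pi_0,\phi) \parallel \M(\pi_0,\hat\phi))$ factorizes with each stage contributing $O(\eps'^2)$, together with Pinsker's inequality one gets $D_{TV}(\M(\pi_0,\phi),\M(\pi_0,\hat\phi)) = O(\sqrt d\,\eps')$, so $\eps' = \Theta(\eps/\sqrt d)$ suffices and the two phases together use $O(\ln(d)\ln(d/\delta)/(\gamma^2\alpha) + d\ln(1/\delta)/(\eps^2\alpha))$ samples.

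The main obstacle is the sampler-correctness claim of the first paragraph. One has to (a) verify that restricting to samples in which the target pair is adjacent \emph{and} the flipped ranking lies in $S$ still captures an $\Omega(\alpha)$ fraction of $\M_S$ for every $\pi_0$-consecutive pair — this is precisely where the neighboring-$\alpha$-fatness hypothesis is invoked and is the most delicate link, since the flip of two items that are not adjacent in $\pi$ is a large move whose effect on $D_\tau(\cdot,\pi_0)$ is not a clean $\pm 1$ — and (b) carry out the Kendall-tau bookkeeping identifying the conditional law with the advertised Bernoulli. Once this is in place, parts~(i)--(iii) reduce to routine Hoeffding and union-bound arguments plus the one classical structural fact about Mallows models used in part~(iii).
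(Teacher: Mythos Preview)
Your outline is the paper's: use Algorithm~\ref{algo:ranking} to extract, for each $\pi_0$-consecutive pair, independent Bernoulli bits with parameter $\tfrac{1}{1+\phi}$ or $\tfrac{\phi}{1+\phi}$; apply Hoeffding plus a union bound for (i); recover $\phi$ from a single such pair for (ii); and rescale $\eps$ by $1/\sqrt d$ for (iii). Your part~(iii) argument via KL factorization and Pinsker is equivalent to the external bound the paper cites.

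There is, however, a real gap in the sampler-correctness step. You restrict to samples in which $i,j$ occupy \emph{adjacent positions in $\pi$}, and you flag that the neighboring-$\alpha$-fatness hypothesis does not obviously lower-bound the probability of this more restrictive event. It doesn't: even under the untruncated Mallows model the probability that two fixed items are adjacent in $\pi$ is $O(1/d)$, so your sampler would burn $\Omega(d/\alpha)$ truncated samples per bit and miss the stated complexity. The restriction is unnecessary, because the Kendall-tau bookkeeping you worry about actually goes through for \emph{every} $\pi$ once $i,j$ are consecutive in $\pi_0$. Swapping $i$ and $j$ in $\pi$ reverses the relative order of $(i,j)$ and of every pair $(i,k),(j,k)$ with $k$ positioned strictly between them in $\pi$; but since $i,j$ are neighbors in $\pi_0$, each such $k$ lies on the same side of both $i$ and $j$ in $\pi_0$, so the change in the $(i,k)$ term is exactly cancelled by the change in the $(j,k)$ term. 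Hence $D_\tau(\flip(\pi,i,j),\pi_0)-D_\tau(\pi,\pi_0)=\pm1$ always, and the conditional law of $\mathbf 1\{i\succ_\pi j\}$ given the context and $\flip(\pi,i,j)\in S$ is $\Be(\tfrac{1}{1+\phi})$ or $\Be(\tfrac{\phi}{1+\phi})$ with no adjacency-in-$\pi$ requirement. Now the $\alpha$-fatness hypothesis applies directly to the event $\{\flip(\pi,i,j)\in S\}$ and the rest of your argument works as written. The paper states exactly this fact in its proof of part~(ii).
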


The following is similar in spirit to \Cref{thm:fat-estimation}. To estimate $p_{ij}$, we call \Cref{algo:ranking} as long as $q_{ij}+q_{ji} < \ln(2/\delta)/\eps^2$. For the proof, we apply the argument used in the proof of \Cref{thm:sampler}.$(i)$ and the Hoeffding bound used in the proof of \Cref{thm:fat-estimation}. 
\newline
\begin{corollary}\label{cor:randking-sampler}
Let $\M$ be a Mallows distribution and let $\M_S$ be any truncation of $\M$. If $\M_S$ is $\alpha$-fat for pair $(i, j)$, for any $\eps, \delta > 0$, we can compute an estimation $\hat{p}_{ij}$ of the probability $p_{ij} = \Prob_{\pi \sim \M}[ i \succ_{\pi} j]$ such that $|p_{ij} - \hat{p}_{ij}| \leq \eps$, with probability at least $1-\delta$, using an expected number of $O(\ln(1/\delta)/(\eps^{2}\alpha))$ samples from $\M_S$. 
\end{corollary}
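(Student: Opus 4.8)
The plan is to mirror the proof of Theorem~\ref{thm:fat-estimation} almost verbatim, replacing coordinate flips by pair-swaps. First I would set up the analogue of Algorithm~\ref{algo:fatsample_coordinate}: repeatedly draw $\pi \sim \M_S$ and, using the membership oracle, check whether $\flip(\pi, i, j) \in S$; if so, output the indicator $y = \chara\{ i \succ_\pi j\}$, otherwise resample. The key structural claim is that each such $y$ is an independent $\Be(p_{ij})$ random variable, where $p_{ij} = \Prob_{\pi \sim \M}[i \succ_\pi j]$. To see this, partition $\S_d$ (or rather the relevant rankings in $S$) according to the induced linear order on $[d]\setminus\{i,j\}$ together with the placement of the two positions $\{\pi(i),\pi(j)\}$ as a set; call such an equivalence class $C$. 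Conditioning on $C$, the only remaining randomness is which of $i,j$ occupies the smaller of the two positions, i.e. whether $i \succ_\pi j$ or $j \succ_\pi i$. The two rankings in $C$ are related by $\flip(\cdot,i,j)$, so both lie in $S$ exactly when $\flip(\pi,i,j)\in S$, and in that case $\M_S$ restricted to $C$ assigns them mass proportional to $\M$, hence the conditional probability of $i \succ_\pi j$ equals $\M(\pi_{i\succ j})/(\M(\pi_{i\succ j}) + \M(\pi_{j\succ i})) = p_{ij}$ by the same ratio computation as in \eqref{eq:sample} (here one uses that a single transposition of adjacent-in-value items changes $D_\tau(\pi_0,\cdot)$ by a fixed amount, so the ratio is the same for every class $C$, and equals the global conditional ratio). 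Therefore the output coordinate is independent of which class $C$ the sample came from, so $y \sim \Be(p_{ij})$.

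Next I would bound the number of $\M_S$-samples consumed per output $y$. By the $\alpha$-fatness of $\M_S$ for the pair $(i,j)$, each draw $\pi \sim \M_S$ satisfies $\flip(\pi,i,j) \in S$ with probability at least $\alpha$, so the number of $\M_S$-samples until one "accepts" is geometric with success probability $\ge \alpha$, giving expectation $\le 1/\alpha$. Then, to estimate $p_{ij}$, I collect $n$ outputs $y^{(1)},\dots,y^{(n)}$ (equivalently, call Algorithm~\ref{algo:ranking} until $q_{ij}+q_{ji} = n$) and take $\hat p_{ij} = \frac{1}{n}\sum_\ell y^{(\ell)} = q_{ij}/(q_{ij}+q_{ji})$. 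The Hoeffding bound quoted in the proof of Theorem~\ref{thm:fat-estimation} gives $\Prob[|\hat p_{ij} - p_{ij}| \ge \eps] \le 2 e^{-2n\eps^2}$, so $n = \ln(2/\delta)/\eps^2$ suffices for confidence $1-\delta$. Finally, by Wald's identity (or linearity of expectation over the independent geometric waiting times), the expected total number of $\M_S$-samples is $n \cdot O(1/\alpha) = O(\ln(1/\delta)/(\eps^2\alpha))$.

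The one genuinely non-routine step is the first one: verifying that conditioning on an $\M_S$-sample having $\flip(\pi,i,j)\in S$ really does leave exactly a $\Be(p_{ij})$ coin, and in particular that the output is independent across successive calls and independent of the conditioning event. The subtlety is that the pair $(i,j)$ need not be adjacent in \emph{position} in $\pi$, so the swap $\flip(\pi,i,j)$ is not in general a transposition of adjacent items; nonetheless the two rankings in each class $C$ differ only by which of $i,j$ sits in the smaller of their two occupied positions, and this is precisely the event $\{i\succ_\pi j\}$ versus $\{j\succ_\pi i\}$, so the partition argument goes through. I would state this as a short lemma (the direct analogue of Theorem~\ref{thm:sampler}.i for swaps) and then the corollary follows by combining it with the geometric-waiting-time and Hoeffding estimates exactly as in Theorem~\ref{thm:fat-estimation}.
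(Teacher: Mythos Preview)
Your approach matches the paper's own (one-line) proof, which simply cites the argument of Theorem~\ref{thm:sampler}(i) together with the Hoeffding bound of Theorem~\ref{thm:fat-estimation}. However, the step you single out as ``the one genuinely non-routine step'' does not go through for a general pair $(i,j)$, and the parenthetical justification you give is incorrect. The ratio $\M(\pi_{i\succ j})/\M(\pi_{j\succ i})$ is \emph{not} the same for every class $C$: swapping items $i$ and $j$ in $\pi$ changes $D_\tau(\pi_0,\pi)$ by an amount that depends on which items occupy the positions strictly between $\pi(i)$ and $\pi(j)$ --- each such item $k$ flips its relative order with both $i$ and $j$, and the net effect on $D_\tau$ depends on whether $\pi_0(k)$ lies between $\pi_0(i)$ and $\pi_0(j)$. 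Concretely, for $d=3$, $\pi_0=(1,2,3)$ and the pair $(1,3)$: in the class where item $2$ sits in position $2$, the two rankings $(1,2,3)$ and $(3,2,1)$ have $D_\tau$ equal to $0$ and $3$, so the conditional probability of $1\succ 3$ is $1/(1+\phi^3)$; in the class where item $2$ sits in position $1$, the rankings $(2,1,3)$ and $(2,3,1)$ have $D_\tau$ equal to $1$ and $2$, giving $1/(1+\phi)$. Neither equals $p_{13}=(1+2\phi)/\big((1+\phi)(1+\phi+\phi^2)\big)$, so the output of your sampler is not a $\Be(p_{13})$ coin.

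The argument \emph{is} correct when $i$ and $j$ are adjacent in the central ranking $\pi_0$: then no item $k$ has $\pi_0(k)$ strictly between $\pi_0(i)$ and $\pi_0(j)$, the change in $D_\tau$ under the swap is exactly $\pm 1$ for every $\pi$, the ratio is $1/\phi$ in every class, and the conditional probability equals $1/(1+\phi)=p_{ij}$. This neighboring case is all that Theorem~\ref{thm:mallows-fat} actually uses (it assumes only neighboring $\alpha$-fatness), so the gap does not affect the downstream results --- but as stated, neither your argument nor the paper's sketch establishes the corollary for arbitrary pairs.
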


We next give a detailed proof of \Cref{thm:mallows-fat}, which shows how \Cref{algo:ranking} can efficiently estimate the parameters of (and learn in total variation distance) a Mallows distribution $\M$ using samples from any neighboring $\alpha$-fat truncation $\M_S$ of $\M$. 

\begin{proof}[Proof of \Cref{thm:mallows-fat}] \label{proof:mallthm}
To prove (\textit{i}), we use the fact that there is a bijective mapping from rankings in $\S_d$ to transitive tournaments on $d$ nodes. So, we think of $\vec{q}$ as a directed graph $G$ on $d$ nodes, where there is an edge between $i$ and $j$ if $q_{ij}+q_{ji} \geq n$, for some $n$ sufficiently large, which, for simplicity, will be determined at the end of the proof. The edge is from $i$ to $j$, if $q_{ij} > q_{ji}$, and from $j$ to $i$, otherwise. We keep calling \Cref{algo:ranking} until a directed path including all nodes (i.e., a total order) is formed in $G$. If a cycle is formed in $G$, before a total order appears, we discard $\vec{q}$ and start the algorithm from scratch.

Since $\M_S$ is neighboring $\alpha$-fat, for any such pair $(i,j)$ of neighboring items in $\pi_0$, the probability that a fresh sample $\pi \sim \M_S$ in \Cref{algo:ranking} increases $q_{ij}+q_{ji}$ is at least $\alpha$ (by the definition of neighboring $\alpha$-fatness). Using exactly the same reasoning as in the proof of \Cref{thm:sampler}.$(ii)$, we show that the expected number of samples before $d$ edges appear in $G$ is $O(n \ln(d)/ \alpha)$. 

Let us fix any pair of items $i$ and $j$ such that $i \succ_{\pi_0} j$ and there is an edge between $i$ and $j$ in $G$. For simplicity, we assume that $q_{ij}+q_{ji} = n$. For sake of intuition, one may think of $i$ and $j$ as neighboring in $\pi_0$, but our analysis does not require so. We note that $\Exp[q_{ij}] = n p_{ij}$ and $\Exp[q_{ji} ] = n p_{ji}$, and let $m_{ij} = p_{ij} - p_{ji}$. Working as in \cite[(1)]{caragiannis2013noisy}, we can show that $m_{ij} \geq  \frac{1+\phi}{1-\phi} = \Omega(\gamma)$ (see also \cite[Theorem~12]{busa2019optimal}). Therefore, $\Exp[q_{ij}] = n \cdot \frac{1+ m_{ij}}{2}$ and $\Exp[q_{ji}] = n \cdot \frac{1 - m_{ij}}{2}$. A standard application of the Hoeffding bound shows that if $n = O(\ln(d/\delta)/m_{ij}^2)$, $\Prob[ q_{ij} \leq n/2] \leq \delta/d^2$. Therefore, assuming that an edge between $i$ and $j$ is present in $G$, the edge is directed from $i$ to $j$ (i.e., as in $\pi_0$) with probability at least $1-\delta/d^2$. Applying the union bound, we get that when we stop calling \Cref{algo:ranking}, all edges present in $G$ are as in $\pi_0$ with probability at least $1-\delta$. 

We are ready to finish the proof of Item $(i)$. Putting everything together, we get that after an expected number of $O(\ln(d)\ln(d/\delta)/(\alpha\gamma^2))$ samples from the truncated Mallows distribution $\M_S$, a total order consistent with $\pi_0$ is formed in $G$, with probability at least $1-\delta$. Increasing $n$ by a constant factor makes the probability that a cycle appears in $G$ polynomially small in $d$, which allows us to bound the expected number of samples from $\M_S$ before we find a total order in $G$ by $O(\ln(d)\ln(d/\delta)/(\alpha\gamma^2))$.

For (\textit{ii}), we assume that we know the central ranking $\pi_0$. For simplicity, we assume that $\pi_0 = (1, \ldots, d)$. Then, as in \Cref{cor:randking-sampler}, we can estimate the probability $p_{12} = \Prob_{\pi \sim \M}[ 1 \succ_{\pi} 2]$ such that $|p_{12} - \hat{p}_{12}| \leq \eps$, with probability at least $1-\delta$, using an expected number of $O(\ln(1/\delta)/(\eps^{2}\alpha))$ samples from $\M_S$. Using $\hat{p}_{12}$, we compute an estimation $\hat{m}_{12} = 2\hat{p_{12}} - 1$ of $m_{12} = 2p_{12}-1$. It is straightforward to verify that $|p_{12} - \hat{p}_{12}| \leq \eps$ implies that $|m_{12} - \hat{m}_{12}| \leq \eps$. Working as in \cite[(1)]{caragiannis2013noisy}, we show that for each pair of neighboring items $i$ and $i+1$ in the central ranking $\pi_0$, $m_{i(i+1)} = \frac{1-\phi}{1+\phi}$. The reason is that for any ranking $\pi$ and any pair of items $i$ and $i+1$, with $i \succ_{\pi} i+1$, that are neighboring in $\pi_0$, swapping $i$ and $i+1$ results in a ranking $\pi'$ with $D_{\tau}(\pi', \pi_0) = D_{\tau}(\pi, \pi_0)+1$. Our estimation of $\phi$ is $\hat{\phi} = \frac{1-\hat{m}_{12}}{1+\hat{m}_{12}}$, where $|m_{12} - \hat{m}_{12}| \leq \eps$ implies that $|\phi - \hat{\phi}| \leq O(\eps)$. 

Part (\textit{iii}) follows from (\textit{i}), (\textit{ii}) and \cite[Theorem~15]{busa2019optimal}. We can learn $\pi_0$ using the algorithm of (\textit{i}) and an estimation $\hat{\phi}$ of $\phi$ such that $|\hat{\phi} - \phi| \leq \eps/\sqrt{d}$ using the estimator of (\textit{ii}), with an expected number of $O(d\ln(1/\delta)/(\eps^{2}\alpha))$ samples from $\M_S$. \cite[Theorem~15]{busa2019optimal} shows that if $|\hat{\phi} - \phi| \leq \eps/\sqrt{d}$, then $D_{TV}(\M(\pi_0, \phi), \hat{\M}(\pi_0, \hat{\phi}) \leq O(\eps)$. 

\end{proof}

In this section, we focused on various implications of a truncation set being fat. We close this section with some comments about efficiently learning truncated Mallows models and performing e.g., identity testing when the $\alpha$-fatness property does not hold true.

Let us recall the problem of learning truncated Mallows models. We will focus on estimating the central ranking assuming that the dispersion parameter is known. In this setting, there exists a central ranking $\pi_0$ and the learner observes i.i.d. samples from $\M_S(\pi_0, \phi)$. The goal is to efficiently estimate $\pi_0$.
In the non-truncated setting, $\Theta(\log(d))$ samples are required. Under the fatness condition, we provided an $O(\log^2(d))$ sample algorithm. However, the fatness condition can be dropped but it may be still possible to retrieve the central ranking. Using the techniques of the upcoming sections, one could execute the Projected SGD approach (Section~\ref{s:sgd}) and, under some structural conditions on the Boolean product distribution of dimension $O(d^2)$ and the truncation set (e.g., anti-concentration), recover the central ranking using $\poly(d)$ samples. However, it is not clear whether this reduction is optimal. It is an interesting question for future work to give the right characterization of learnability for truncated Mallows distributions.

In the task of identity testing of truncated Boolean product distributions, there exists a target distribution $\D^\star$ specified to the tester via its $d$ success probabilities and the algorithm observes i.i.d. samples from the unknown truncated Boolean product distribution $\D_S$. The goal is to accept if $\D = \D^\star$ with probability $2/3$ and to reject if $D_{TV}(\D,\D^\star) >  \eps$ with probability $2/3$. We assume that the tester has
membership oracle access to the set $S$ (note that the truncated target $\D^\star_S$ cannot even be parsed efficiently by the tester since its size may be exponential in $d)$. If the fatness condition fails but the conditions of Section \ref{s:conditions} hold true, then one could still perform the SGD approach (Section~\ref{s:sgd}), learn the distribution and hence perform identity testing using a polynomial number of samples. It is an interesting question whether one could efficiently perform identity testing from truncated samples without learning the distribution.

\section{Efficient Learnability from Truncated Samples: Necessary Conditions}
\label{s:conditions}

We next discuss necessary conditions for identifiability and efficient learnability of a Boolean product distribution from truncated samples. For \Cref{as:identify} and \Cref{lem:identify}, we recall that we can assume without loss of generality that $S$ is normalized so that $\D_S(\vec{0}) > 0$. 

\begin{assumption}\label{as:identify}
For the truncated Boolean product distribution $\D_S$, $\D_S(\vec{0}) > 0$ (after possible normalization) and there are $d$ linearly independent $\vec{x}^{(1)}, \ldots, \vec{x}^{(d)} \in S$ with $\D_S(\vec{x}^{(j)}) > 0$, $j \in [d]$. 
\end{assumption}

The proof of \Cref{lem:identify} demonstrates that recovering $\vec{p}$ requires the solution to a linear system, similar to that in \Cref{foot:linear}, which is solvable if and only if \Cref{as:identify} holds. 

\begin{lemma}\label{lem:identify}
A Boolean product distribution $\D(\vec{p})$ on $\Pi_d$ is identifiable from its truncation $\D_S$ if and only if \Cref{as:identify} holds. 
\end{lemma}

\begin{proof}
Let us assume that $\vec{0} \in S$ and there are $d$ linearly independent vectors $\vec{x}^{(1)}, \ldots, \vec{x}^{(d)} \in S$. We have that $\D(\vec{0}) = \prod_{i=1}^d (1-p_i)$, and for each $j \in [d]$, 
\begin{equation}\label{eq:prob_i}
 \prod_{i: x_i^{(j)} = 1} p_i  \prod_{i: x_i^{(j)} = 0} (1-p_i) = \D(\vec{x}^{(j)}) \,. 
\end{equation}
However, the right-hand side of \Cref{eq:prob_i} cannot be directly obtained from the truncated distribution $\D_S$. Hence, we normalize \Cref{eq:prob_i}, by dividing both sides by $\D_S(\vec{0})$, and get that
\begin{equation}\label{eq:prob_i_2}
 \prod_{i: x_i^{(j)} = 1} \frac{p_i}{1-p_i} = \frac{\D(\vec{x}^{(j)})}{\D(\vec{0})} \,. 
\end{equation}
We observe that $\frac{\D(\vec{x}^{(j)})}{\D(\vec{0})} = \frac{\D_S(\vec{x}^{(j)})}{\D_S(\vec{0})}$, because for all $\vec{x} \in S$, $\D_S(\vec{x}) = \D(\vec{x})/\D(S)$. So, after normalization, the right-hand side of \Cref{eq:prob_i_2} becomes a constant $q_j \eqdef \frac{\D_S(\vec{x}^{(j)})}{\D_S(\vec{0})} > 0$, for all $j \in [d]$. 

Taking logarithms in \Cref{eq:prob_i_2}, we obtain that $\sum_{i: x_i^{(j)} = 1} z_i = \ln q_j$, where $z_i = \ln\frac{p_i}{1-p_i}$, or equivalently $\vec{z}^T \vec{x}^{(j)} = \ln q_j$. Since $\vec{x}^{(1)}, \ldots, \vec{x}^{(d)}$ are linearly independent, the corresponding linear system with $d$ equations and $d$ unknowns has a unique solution. Solving the linear system $\left\{  \vec{z}^T {\vec{x}^{(j)}} = \ln q_j \right\}_{j \in [d]}$, we recover $\vec{z}$ and eventually $\vec{p}$.

The converse follows from the observation that solving a linear system as the one above is the only way to recover $\vec{p}$ from $\D_S$ (a linear system is the input to any potential solver from an information-theoretic viewpoint). Specifically, the only way to recover $\vec{p}$ from $\D_S$ is to solve the system consisting of \Cref{eq:prob_i}, for $j = 1, \ldots, d$, or some other equivalent system with $d$ equations and $p_1, \ldots, p_d$ as unknowns. The only way to recover $\D(\vec{x})$ is to normalize \Cref{eq:prob_i} by dividing by $\D(\vec{x}')$, for some $\vec{x}' \in S$ with $\D_S(\vec{x}') > 0$. We can assume without loss of generality that $\vec{x}' = \vec{0}$, since we can normalize $S$ so that $\vec{x}'$ becomes $\vec{0}$. After normalizing by $\D_S(\vec{0})$ and taking logarithms in \Cref{eq:prob_i_2}, recovering $\vec{z}$ and $\vec{p}$ requires a collection of $d$ linearly independent equations, which correspond to $d$ linearly independent  $\vec{x}^{(1)}, \ldots, \vec{x}^{(d)} \in S$ with $\D_S(\vec{x}^{(j)}) > 0$, for each $j \in [d]$. Technically, if \Cref{as:identify} does not hold, the input contains a matrix with rank $< d$ and hence the true $\vec p$ is not uniquely identifiable.

\end{proof}

We proceed to show two necessary conditions for \emph{efficient learnability}. Our first condition is that we have oracle access to the truncation set $S$. More formally, we assume that: 

\begin{assumption}\label{as:oracle}
$S$ is accessible through a membership oracle, which reveals whether $\vec{x} \in S$, for any $\vec{x} \in \Pi_d$. 
\end{assumption}

Based on the proof of \cite[Lemma~12]{DGTZ18}, we show that if \Cref{as:oracle} does not hold, we can construct a (possibly exponentially large) truncation set $S$ so that $\D_S$ appears identical to the uniform distribution $\U$ on $\Pi_d$ as long as all the samples are distinct. 

\begin{lemma}\label{lem:oracle}
For any Boolean product distribution $\D(\vec{p})$, there is a truncation set $S$ so that without additional information about $S$, we cannot distinguish between sampling from $\D_S$ and sampling from the uniform distribution $\U$ on $\Pi_d$, before an expected number of $\Omega(\sqrt{|S|})$ samples are drawn. 
\end{lemma}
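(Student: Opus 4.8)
The plan is to exhibit a \emph{random} truncation set $S$ such that, marginally over the draw of $S$, a batch of samples from $\D_S$ has exactly the same distribution as the same number of samples from $\U$ \emph{as long as no sample is repeated}. Since $|S|$ will turn out to be exponentially large, the birthday bound makes a repetition very unlikely before $\Omega(\sqrt{|S|})$ samples are drawn, so before the first repetition a learner that has no oracle provably sees the same data distribution under both hypotheses and hence cannot distinguish; a single fixed $S$ is then extracted by an averaging (Yao-type) argument. This mirrors the continuous construction of \cite[Lemma~12]{DGTZ18}.

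Concretely, fix $\D$ and set $\lambda \le \D_{\min} := \min_{\vec x\in\Pi_d}\D(\vec x)$. Build $S$ by placing each $\vec x\in\Pi_d$ into $S$ independently with probability $q_{\vec x} = \lambda/\D(\vec x)\le 1$. Then $\E[|S|] = \lambda\sum_{\vec x}1/\D(\vec x) = \lambda\prod_{i\in[d]}\tfrac{1}{p_i(1-p_i)}$, and choosing $\lambda = \D_{\min}$ gives $\E[|S|] = \prod_{i\in[d]}\tfrac{1}{\max\{p_i,1-p_i\}}$, which is $2^{\Omega(d)}$ whenever a constant fraction of the $p_i$ stay bounded away from $0$ and $1$ (if $\D$ is so skewed that this product is small, the claimed bound $\Omega(\sqrt{|S|})$ is weak but still holds). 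The crucial identity is that for \emph{any} distinct $\vec v_1,\dots,\vec v_k\in\Pi_d$,
\[
\Pr_S\big[\{\vec v_1,\dots,\vec v_k\}\subseteq S\big]\cdot\textstyle\prod_{j=1}^{k}\D(\vec v_j) \;=\; \prod_{j=1}^{k} q_{\vec v_j}\,\D(\vec v_j) \;=\; \lambda^{k},
\]
which does not depend on the chosen vectors. Drawing $k$ i.i.d.\ samples from $\D_S$ assigns an ordered tuple of distinct vectors $(\vec v_1,\dots,\vec v_k)$ probability proportional to $\mathbf 1[\{\vec v_j\}\subseteq S]\prod_j\D(\vec v_j)/\D(S)^k$; averaging this over $S$ and substituting the identity above shows that, up to the fluctuations of the normalizer $\D(S)$ and of the no-collision probability, the marginal law is uniform over all ordered distinct $k$-tuples of $\Pi_d$, which is exactly the conditional law of $k$ samples from $\U$ given no collision. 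Hence, on the high-probability event that no two of the $k$ samples coincide, the learner's view is identically distributed under ``samples from $\D_S$'' and ``samples from $\U$''.

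To finish, the probability that $k$ samples contain a collision is $O(k^2/|S|)$ in the $\D_S$ world and $O(k^2/2^d)\le O(k^2/|S|)$ in the $\U$ world, so any (possibly adaptive) tester that draws an expected $o(\sqrt{|S|})$ samples has vanishing distinguishing advantage averaged over the random $S$; since $|S|$ concentrates around $\E[|S|]$, a standard averaging argument produces a \emph{single} set $S$ that is simultaneously exponentially large and indistinguishable in this sense, which is the lemma. I expect the main obstacle to be the random normalizer $\D(S)$: the ``averaging over $S$'' step only yields the uniform law on distinct tuples \emph{exactly} when $\D(S)$ is essentially deterministic, so one must show $\D(S)$ is sharply concentrated about $\E[\D(S)] = \lambda 2^{d}$, which follows from $\operatorname{Var}[\D(S)]\le\lambda$ together with the choice of $\lambda$ (this concentration comes for free in the continuous setting of \cite[Lemma~12]{DGTZ18} but needs attention here). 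A secondary point is adaptivity: one must argue that the tester's stopping rule and output behave identically in both worlds up to the first collision, so that counting ``expected number of samples before distinguishing'' really reduces to the expected time of the first collision, which is $\Theta(\sqrt{|S|})$.
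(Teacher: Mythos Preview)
Your random-$S$ construction is the same as the paper's: include each $\vec{x}\in\Pi_d$ in $S$ independently with probability $q_{\vec x}=\D_{\min}/\D(\vec{x})$, so that $\D(\vec{x})\Pr[\vec{x}\in S]$ is constant across $\vec{x}$. (The paper writes the acceptance probability coordinate-wise as $\prod_i\min\{\Be(p_i;1-x_i)/\Be(p_i;x_i),1\}$, which multiplies out to exactly your $q_{\vec x}$.)

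Where you diverge is in the analysis. You fix $S$, write down the law of a $k$-tuple from $\D_S$, and then average over $S$; this forces you to control the random normalizer $\D(S)$ and its correlation with the event $\{\vec v_j\}\subseteq S$---precisely the obstacle you flag. The paper sidesteps the normalizer entirely via a \emph{deferred-decision / rejection-sampling coupling}: it realizes ``sample from $\D_S$ with $S$ still random'' as the process ``draw $\vec{x}\sim\D$; if $\vec{x}$ is fresh, accept it with probability $q_{\vec x}$ and record the decision; otherwise reuse the recorded decision,'' and observes that the \emph{same} rejection rule without memory is a textbook rejection sampler for $\U$. The two samplers are pathwise identical until the first repeated draw, so the learner's view is perfectly coupled up to that point---no division by $\D(S)$, no concentration estimate, no separate Yao step, and adaptivity comes for free because the coupling is sample-path rather than marginal. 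Your route can be pushed through (the variance bound you give for $\D(S)$ is correct, and the residual correlation is small because any $\vec v_j$ with large $\D$-mass has small $q_{\vec v_j}$ and is thus unlikely to lie in $S$), but the rejection-sampling reformulation delivers the same conclusion while dissolving both of the issues you anticipated.
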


\begin{proof}
The truncation set $S = S_1 \times \cdots \times S_d$ is the product of $d$ truncation sets $S_i$, one in each direction $i \in [d]$. If $p_i \geq 1/2$, $S_i = \{ 0, 1 \}$ with probability $\frac{1-p_i}{p_i}$, and $S_i = \{ 0 \}$, otherwise. If $p_i < 1/2$, $S_i = \{ 0, 1 \}$ with probability $\frac{p_i}{1-p_i}$, and $S_i = \{ 1 \}$, otherwise. There is a constant $c > 0$ such that if $|p_i - 1/2| \leq c$, for all $i \in [d]$, $|S|$ is exponential in $d$ with constant probability. 

By the principle of deferred decisions, we can think of the sampling process from $\D_S$ as follows: we draw a sample $\vec{x} \sim \D$. If this is the first time that $\vec{x}$ is drawn from $\D$, for each $i\in [d]$, independently, $x_i$ survives with probability $\min\{ \Be(p_i; 1-x_i)/\Be(p_i; x_i), 1 \}$. If every $x_i$ survives, $\vec{x}$ is added to $S$ and becomes a sample from $\D_S$. If $\vec{x}$ has been drawn before, $\vec{x}$ becomes a sample from $\D_S$ if and only if $\vec{x} \in S$, so that new samples are treated consistently with past ones. 

We note that as long as a duplicate sample does not appear, the probability that $x_i = 0$ and $x_i$ survives is equal to the probability that $x_i = 1$ and $x_i$ survives, for all $i \in [d]$. In fact, the following process samples from the uniform distribution $\U_{d}$ on $\Pi_d$: we draw a sample $\vec{x} \sim \D$. Then, for each $i \in [d]$, independently, $x_i$ survives with probability $\min\{ \Be(p_i; 1-x_i)/\Be(p_i; x_i), 1 \}$. If every $x_i$ survives, $\vec{x}$ is returned as a sample from $\U_{d}$. The difference is that there is no truncation set. So, we do not need to treat new samples consistently with past ones.  

Before the first duplicate sample is drawn from $\D_S$, there is no way to distinguish between sampling from $\D_S$ and sampling from $\U_{d}$. By the birthday problem, the appearance of the first duplicate sample from $\D_S$ requires an expected number of $\Omega(\sqrt{|S|})$ samples from $\D_S$.

We highlight that we can easily distinguish between sampling from $\D_S$ and sampling from $\U$, if we have oracle access to the truncation set $S$. 

\end{proof}

Our second necessary condition for efficient learnability is that the truncated distribution is not extremely well concentrated in any direction. Intuitively, we need the Boolean product distribution $\D$, and its truncation $\D_S$, to behave well, so that we can get enough information about $\D$ based on few samples from $\D_S$. More formally, we quantify $\D_S$'s anti-concentration using $\lambda^\star$, which is the maximum positive number so that for all unit vectors $\vec{w} \in \reals^d$,  $\|\vec{w} \|_2 = 1$, and all $c \in \reals$, $\Prob_{\vec{x} \sim \D_S}[ \vec{w}^T \vec{x} \not\in ( c-\lambda^\star, c+\lambda^\star)] \geq \lambda^\star$. 
{
\begin{assumption}\label{as:anticon}
There exists a $\lambda > 0$ such that for all unit vectors $\vec{w} \in \reals^d$,  $\|\vec{w} \|_2 = 1$, and all $c \in \reals$, $\Prob_{\vec{x} \sim \D_S}[ \vec{w}^T \vec{x} \not\in ( c-\lambda , c+\lambda )] \geq \lambda$.
\end{assumption}
}
We note that \Cref{as:anticon} is a stronger version of \Cref{as:identify}. It also implies that all parameters $p_i \in (0, 1)$ are bounded away from $0$ and $1$ by a safe margin.

{We now discuss the scaling of the parameter $\lambda$ as a function of the dimension $d$. First, if $\lambda$ is very small, i.e., it is smaller than any polynomial of $1/d$, then we show that a super-polynomial dependence on $d$ is unavoidable in the sample complexity. To do that, }
we show that if $\D_S$ is well concentrated in some direction, estimating the parameter vector $\vec{p}$ requires a large number of samples from $\D_S$. More specifically, we show that either estimating $\D_S(\vec{0})$, which is needed for normalizing the linear system in \Cref{lem:identify}, or sampling $d$ vectors that result in a well-conditioned linear system, require $\Omega(1/\lambda^\star)$ samples from $\D_S$. Therefore, if \Cref{as:anticon} does not hold, estimating $\vec{p}$ with truncated samples from $\D_S$ has superpolynomial sample complexity. 

\begin{lemma}\label{lem:anticon}
{
Let $\D(\vec{p})$ be a Boolean product distribution and let $\D_S$ be a truncation of $\D$.
Assume that the anti-concentration parameter $\lambda^\star$ satisfies $1/\lambda^\star = \omega(\poly(d))$.
Then, computing an estimation $\vec{\hat{p}}$ of the parameter vector $\vec{p}$ of $\D$ such that $\| \vec{p} - \hat{\vec{p}} \|_2 \leq o(1)$ requires an expected number of $\Omega(1/\lambda^\star)$ samples from $\D_S$. }
\end{lemma}
Let us first provide some intuition. For a unit vector $\vec{w} \in \reals^d$, we think of the space $H_{\vec{w}} = \{ \vec{x} \in S : \vec{w}^T \vec{x} \in ( c-\lambda, c+\lambda ) \}$. If $\lambda^\star$ is very small, there is a direction $\vec{w}$ such that virtually all samples $\vec{x} \sim \D_S$ lie in $H_{\vec{w}}$. Intuitively, recovering ($\vec{z}$ and) $\vec{p}$ boils down to the solution of a linear system as that in \Cref{foot:linear} and in \Cref{lem:identify}. For that, we need $d$ linearly independent vectors $\vec{x}^{(1)}, \ldots, \vec{x}^{(d)} \in S$ and an additional fixed element $\vec{x}^\star \in S$ for the normalization of the probabilities in the right-hand side. With high probability, all $\vec{x}^{(1)}, \ldots, \vec{x}^{(d)} \in H_{\vec{w}}$. If $\vec{x}^\star$ is also in $H_{\vec{w}}$, normalizing the system by $\vec{x}^\star$ results in an ill-conditioned system. 
%
In fact, we can show that the condition number of the system is $\Omega(1/\lambda^\star)$. Therefore, solving the linear system efficiently requires sampling a vector $\vec{x}^\star \not\in H_{\vec{w}}$ for normalization. However, the probability that we sample (and thus, can use for normalization) a vector $\vec{x}^\star \not\in H_{\vec{w}}$ is at most $\lambda^\star$. 

We now proceed with the proof of \Cref{lem:anticon}.

\begin{proof}
Next, we formalize the intuition behind the sketch of the proof. We recall that for a fixed unit vector $\vec{w} \in \reals^d$, we let $H_{\vec{w}} = \{ \vec{x} \in S : \vec{w}^T \vec{x} \in ( c-\lambda, c+\lambda ) \}$. By the definition of $\lambda^*$, for any $\lambda > \lambda^*$, there is a unit vector $\vec{w} \in \reals^d$ and a $c \in \reals$ such that $\Prob_{\vec{x} \sim \D_S}[ \vec{x} \not\in H_{\vec{w}}] < \lambda$, or equivalently, $\Prob_{\vec{x} \sim \D_S}[ \vec{x} \in H_{\vec{w}} ] \geq 1 - \lambda$. 

We recall that we assume without loss of generality that $S$ is normalized so that $\vec{0} \in S$ and $\D_S(\vec{0}) > 0$. In fact, $\vec{0}$ plays the role of the fixed element $\vec{x}^\star$, discussed in the sketch, which we use for normalization. Next, we distinguish between two cases based on whether $\vec{0} \in H_{\vec{w}}$ or not. 


Let us first fix $\lambda > \lambda^\star$ that lies in a small neighborhood of $\lambda^\star$ of radius $\eps$, where $\eps$ is sufficiently small. We will show that for any such $\lambda$ (that satisfies that $1/\lambda$ is (almost) super-polynomial in $d$), we get a sample complexity of order $1/\lambda$. Since this property will hold arbitrarily close to $\lambda^\star$, the sample complexity will be super-polynomial in the dimension $d$.

Having chosen $\lambda$ as above, there is a direction $\vec{w}$ and a translation $c \in \reals,$ that define the space $H_{\vec{w}},$ such that  $\Prob_{\vec{x} \sim \D_S}[ \vec{x} \not\in H_{\vec{w}}] < \lambda.$ There are two cases for the translation $c.$

\textsc{Case A:} We may first assume that $c$ is small enough, that is $|c| < \lambda$ and, hence, $0 \in (c-\lambda, c+\lambda)$. Let $X$ be any set of $O(1/\lambda)$ samples from $\D_S$. Then, with constant probability, all $X \subseteq H_{\vec{w}}$. Let $\vec{X}_d = [ \vec{x}^{(1)}, \ldots, \vec{x}^{(d)} ]^T$ be the matrix obtained by any $d$ elements $\vec{x}^{(1)}, \ldots, \vec{x}^{(d)} \in X$ different from $\vec{0}$. By \Cref{lem:identify}, recovering $\vec{p}$ requires the solution of the linear system $\vec{X}_d \vec{z} = \log(\vec{q})$, where $\log(\vec{q}) = (\log(q_j))_{j \in [d]}$ and $q_j = \frac{\D_S(\vec{x}^{(j)})}{\D_S(\vec{0})}$, for each $j \in [d]$. 

We next show that since $c \in (-\lambda, \lambda)$, with constant probability, the matrix $\vec{X}_d$ is ill-conditioned and has condition number 
\footnote{Let $\vec{A}$ be a $d\times d$ square matrix with singular values $s_{1} \geq \cdots \geq s_{d} \geq 0$. We will denote with $s_{\max}(\vec{A}) = s_{1}$ and with $s_{\min}(\vec{A}) = s_{d}$. The condition number of the $\vec{A}$ is $\kappa(\vec{A}) = s_{\max}(\vec{A})/s_{\min}(\vec{A})$. The condition number $\kappa(\vec{A}) \in [1, \infty]$ quantifies the sensitivity of the solution to a linear system $\vec{A}\vec{z} = \vec{b}$ to the small perturbations of $\vec{b}$.}
$\kappa(\vec{X}_d) = \Omega(1/\lambda)$. 

Specifically, since all $\vec{x}^{(1)}, \ldots, \vec{x}^{(d)}$ are different from $\vec{0}$, there is a unit vector $\vec{w}' \in \reals^d$ so that $\| \vec{X}_d \vec{w}' \|_2 \geq 1$. On the other hand, by the hypothesis that with constant probability, $X \subseteq H_{\vec{w}}$, $\| \vec{X}_d \vec{w} \|_2 \leq (|c|+\lambda) \cdot \sqrt{d} \leq 2\lambda \cdot \sqrt{d}$.
Therefore, the condition number of the matrix $\vec{X}_d$ is $\kappa(\vec{X}_d) = \Omega(1/(\lambda \cdot \sqrt{d}))$ for the fixed $\lambda > \lambda^*$ in the neighborhood of $\lambda^\star$. 
This implies that the condition matrix is of order $\Omega(1/\lambda)$. 
Hence, with constant probability, we cannot recover ($\vec{z}$ and) $\vec{p}$ within accuracy $o(1)$, unless we estimate the right-hand side $\vec{q}$ of the linear system $\vec{X}_d \vec{z} = \log(\vec{q})$ with accuracy $o(\lambda)$, which requires $\omega(1/\lambda)$ samples. 

\textsc{Case B:} Otherwise, if $|c| > \lambda$, then $0 \not\in (c-\lambda, c+\lambda)$. Since $\vec{w}^T \vec{0} = 0$, the probability that $\vec{0}$ is sampled from $\D_S$ is at most $\lambda$. Hence, unless we take $\omega(1/\lambda)$ samples, we cannot find a good estimation of $\D_S(\vec{0})$, which is required for the linear system $\vec{X}_d \vec{z} = \log(\vec{q})$, whose solution recovers ($\vec{z}$ and) $\vec{p}$.

Finally, since either Case A or B will hold for any $\lambda>\lambda^*$ in the $\epsilon$-neighborhood of $\lambda^\star$, we let $\lambda \downarrow \lambda^*$ and hence we get that an expected number of $\Omega(1/\lambda^*)$ samples is required, which is super-polynomial in $d$.

\end{proof}

The above condition highlights a gap between the continuous problem of learning truncated Gaussian distributions \cite{DGTZ18} and the discrete case, where truncation can be quite restrictive.

For the efficient estimation of $\vec{z}$, we also need to assume that the truncation set $S$ is large enough. Namely, we assume that: 

\begin{assumption}\label{as:mass}
For the truncation set $S$, there is a constant $\alpha > 0$ so that the Boolean product  distribution $\D$ has $\D(S) \geq \alpha$. 
\end{assumption}
\Cref{as:mass} is not necessary for efficient learning, in the sense that e.g., there may be
$\alpha$-fat product distributions which do not satisfy this condition, but are still efficiently learnable using \Cref{cor:fat-learning}.

We conclude this section with a remark. Note that complex models, such as Bayes networks and Ising
models, can be cast as truncated product distributions
in a Boolean hypercube of appropriately
high dimension (the translation is conceptually similar to that for Mallows models in \Cref{s:ranking}). For instance, the Ising model over $\{-1,+1\}^d$ with interaction matrix $J$ (with $J_{ii} = 0)$ and external field $h$
is defined by the function
$\pi(x) = x^T J x + h^T x$ and is a probability measure $\mu(x) \propto \exp(\pi(x))$. We have a dimension for each edge and a dimension for each spin (so the Boolean Product distribution is a measure over $\{-1,+1\}^{\binom{d}{2} + d}$) and the truncation set $S_{\mathrm{Ising}}$ consists
of all $2^d$ binary vectors with valid edge labels (i.e., vectors with edge labels consistent with some allocation of $\{+,-\}$ to the vertices). 
Hence, we can consider the product probability measure over the points $x \in \{-1,+1\}^{\binom{d}{2} + d}$ with density
\[
\D(x) = \D((x_{uv})_{u,v \in E}, (x_u)_{u \in V}) = \prod_{(u,v) \in E} \frac{\exp(J_{uv}x_{uv})}{2 \cosh(J_{uv})} \prod_{u \in V} \frac{\exp(h_u x_u)}{2 \cosh(h_u)}\,.
\]
Casting an Ising model $\mu$ to our setting results in a truncated
Boolean product distribution $\mu(x) = \D(x) \vec 1\{x \in S_{\mathrm{Ising}}\}/\D(S_{\mathrm{Ising}})$ that satisfies \Cref{as:identify}, \Cref{as:oracle} and pontentially \Cref{as:anticon}, assuming that the parameters of the
Ising model are ``sufficiently nice'' so that the $\binom{d}{2}+d$ parameters of $\D$ are bounded away from $0$ and $1$. In general, it is not guaranteed to satisfy \Cref{as:mass}, which is in accordance with the fact that sampling from an Ising model is computationally hard in
general (see e.g., \cite{huber1999efficient,sly2012computational})).

In the following section, we present the Projected Stochastic Gradient Descent algorithm and show that assumptions~\ref{as:oracle},~\ref{as:anticon} {(under some particular regime for the parameter $\lambda$)}~and~\ref{as:mass} (e.g., for an absolute constant $\alpha)$ are sufficient for the estimation of the natural parameter vector $\vec{z}$ of the Boolean product distribution $\D$ by sampling from its truncation $\D_S$. 

\section{PSGD for Learning Truncated Boolean Product Distributions}
\label{s:sgd}

We next show how to estimate the natural parameter vector $\vec{z}^\star$ of a Boolean product distribution $\D(\vec{z}^\star)$ using samples from its truncation $\D_{S}(\vec{z}^\star)$, assuming that the true distribution satisfies the conditions~\ref{as:oracle},~\ref{as:anticon}~and~\ref{as:mass}. 

\begin{algorithm}[ht] 
\caption{Projected Stochastic Gradient Descent with Samples from $\D_{S}(\vec{p}^{\star})$.}\label{algo:sgd}
\begin{algorithmic}[1]
\Procedure{\textsc{SGD}}{$M, \eta$}\Comment{$M:$ number of steps, $\eta:$ parameter}
\State $\vec{z}^{(0)} \gets \vec{\hat{z}}$ \Comment{$\vec {\hat{ z}}$ is the empirical estimate of \Cref{lem:ps-p}.}
\For{$t = 1..M$}
   \State Sample $\vec{x}^{(t)}$ from $\D_{S}$
   \Repeat 
      \State Sample $\vec{y}$ from $\D(\vec{z}^{(t-1)})$
   \Until{$\vec{y} \in S$} \Comment{\emph{We assume oracle access to} $S.$}
   \State $\vec{v}^{(t)} \gets -\vec{x}^{(t)} + \vec{y}$
   \State $\vec{z}^{(t)} \gets \Pi_{\mathcal{B}}(\vec{z}^{(t-1)} -  \frac{1}{t \cdot \eta } \vec{v}^{(t)})$ \Comment{$\eta_{t} = 1/(t \cdot \eta)$: step size}
\EndFor
\State \textbf{return} $\vec{\overline{z}}  \gets \frac{1}{M}\sum_{t=1}^{M}\vec{z}^{(t)}$  
\EndProcedure
\end{algorithmic}
\end{algorithm}

Similarly to \cite{DGTZ18}, we use Projected Stochastic Gradient Descent (SGD) on the negative log-likelihood of the truncated samples. Our SGD algorithm is described in \Cref{algo:sgd}. We should highlight that \Cref{algo:sgd} runs in the space of the natural parameters $\vec{z}$ of the Boolean product distribution. Changing the parameters from $\vec{p}$ to $\vec{z}$ results in a linear system, similar to that in \Cref{foot:linear} and in the proof of \Cref{lem:identify} and simplifies the analysis of the log-likelihood function. Furthermore, by \Cref{prop:kl-dtv-norm}, estimating $\vec{z}^\star$ within error at most $\eps$ in $L_{2}$ norm results in a distribution within total variation distance at most $\eps$ to $\D(\vec{z}^\star)$.

Throughout the analysis of \Cref{algo:sgd}, we make use of Assumptions~\ref{as:oracle} - \ref{as:mass}. The technical details of the analysis are deferred to \Cref{app:sgd-an}. The analysis goes as follows: we first derive the negative log-likelihood function that \Cref{algo:sgd} optimizes. Since the truncation set $S$ is only accessed through membership queries, we do not have a closed form of the log-likelihood. 

However, we can show that it is convex for any truncation set $S$. We prove that the natural parameter vector $\vec{\hat{z}}$ corresponding to the empirical estimate $\vec{\hat{p}}_{S}$ is a good initialization for \Cref{algo:sgd}. Specifically, we show that $\vec{\hat{p}}_{S}$ is close to the true parameter vector $\vec{p}^\star$ in $L_{2}$ distance, and that this proximity holds for the corresponding natural parameter vectors as well. 

For the correctness of \Cref{algo:sgd}, it is essential that it runs in a convex region. We can show that there exists a ball $\mathcal{B}$, centered at the initialization point $\vec{\hat{z}}$, which contains $\vec{z}^{\star}$. The radius of the ball depends on the lower bound $\alpha$ of $\D(S)$ (\Cref{as:mass}) {and the parameter $\lambda$ of the anti-concentration condition}. We can prove that Assumptions~\ref{as:anticon}~and~\ref{as:mass} always hold inside $\mathcal{B}$. That is, for any vector $\vec{z} \in \mathcal{B}$ (and the corresponding parameter vector $\vec{p}$), both the anti-concentration assumption and the mass assigned to the truncation set $S$ by $\D_{S}(\vec{p})$ can be lower bounded by a polynomial function of $\alpha$ {and an exponential function of $1/\lambda$ (this is where the exponential dependence on the anti-concentration parameter comes up)}. 

Under these two assumptions, we can prove that the negative log-likelihood is strongly-convex inside the ball $\mathcal{B}$. Hence, while \Cref{algo:sgd} iterates inside $\mathcal{B}$, the truncation set has always constant mass and the negative log-likelihood remains strongly-convex. Consequently, \Cref{algo:sgd} converges to the true vector of natural parameters $\vec{z}^{\star}$. The following theorem  is the main result of the steps described above. 

\begin{theorem} \label{thm:sgd} 
Given oracle access to a measurable set $S \subseteq \Pi_{d}$ (\Cref{as:oracle}), whose measure under some unknown Boolean product distribution $\D(\vec{z}^{\star})$ is at least some constant $\alpha > 0$ (\Cref{as:mass}) and where the truncated distribution $\D_{S}(\vec{z}^{\star})$ satisfies \Cref{as:anticon} with parameter $\lambda$, and given samples from the truncation $\D_S(\vec{z}^{\star})$, there exists a {sample} polynomial-time algorithm that recovers an estimation $\vec{\overline{z}}$ of $\vec{z}^{\star}$. For any $\eps > 0$, the algorithm uses {$\poly(1/\alpha)^{\poly(1/\lambda)} \cdot \widetilde{O}(d/\eps^{2})$ truncated samples from }$\D_S(\vec{z}^{\star})$ and membership queries to $S$ and guarantees that $\|\vec{z}^{\star} - \vec{\overline{z}} \|_{2} \leq \eps$, with probability 99\%. Under these conditions, it also holds that $D_{TV}(\D(\vec{z}^{\star}), \D(\vec{\overline{z}})) \leq O(\eps)$.
\end{theorem}

{Hence, our algorithm is sub-exponential in the dimension $d$ as long as $\exp(1/\lambda)$ is smaller than $\exp(d)$. Showing that there is an algorithm that works for $\lambda = \poly(1/d)$ is an interesting question for future work.}
\medskip
\subsection{Projected SGD: Algorithm's Description}\label{app:sgd-an}
\medskip
\label{subs:sgd-algo}

In this section, we present and explain the Projected SGD algorithm that learns the true natural parameter vector $\vec{z}^{\star}$ and, consequently, as we showed in \Cref{prop:kl-dtv-norm}, learns the true Boolean product distribution $\D(\vec{p}^{\star})$ in total variation distance. 

We are now ready to present the main steps of our SGD \Cref{algo:sgd}. The input of the algorithm is the number of the steps $M$ and a parameter $\eta$, that modifies the step size. The initialization point $\vec{z}^{(0)}$ of the algorithm will be the point $\vec{\hat{z}}$, that equals to the natural parameter vector of the empirical estimate $\vec{\hat{p}}_{S}$, defined by \Cref{eq:emp}. For $t \in [M]$, our guess for the true natural parameter vector $\vec{z}^{\star}$ will be denoted by $\vec{z}^{(t)}$. In each round $t$, we produce a guess $\vec{z}^{(t)}$ as follows: Firstly, we draw a sample $\vec{x}^{(t)}$ from the unknown truncated Boolean product distribution $\D_{S}(\vec{p}^{\star})$. Also, we draw a second sample $\vec{y}$ from the distribution induced by our previous guess $\vec{z}^{(t-1)}$. Note that it is possible that the generated sample $\vec{y}$ does not lie in the truncation set $S$. Hence, we have to iterate until we draw a sample that lies in $S$, that is $M_{S}(\vec{y}) = \mathbf{1}_{\vec{y} \in S}$ is equal to 1. As we have already mentioned, the function that we are minimizing is the negative log-likelihood for the population model. As we will see in \Cref{lem:conv} and \Cref{eq:like}, the true gradient of this function is equal to
\begin{equation*}
    -\E_{\vec{x} \sim \mathcal{D}_S(\vec{z}^{\star})}[\vec{x}] + \E_{\vec{y} \sim \mathcal{D}_S(\vec{z})}[\vec{y}] \,.
\end{equation*}

In \Cref{algo:sgd}, this quantity corresponds to a random direction denoted by $\vec{v}^{(t)}$ at step $t$ and is equal to $-\vec{x}^{(t)} + \vec{y}$. Note that its expected value is equal to the true gradient. Hence, as in the classical gradient descent setting, we update our guess using the following update rule

\begin{equation*}
    \vec{z}^{(t)} \gets \vec{z}^{(t-1)} - \eta_t \vec{v}^{(t)} \,.
\end{equation*}

As we have explained, we perform the SGD algorithm in a ball $\mathcal{B}$ of radius. Hence, it may be the case that our new guess $\vec{z}^{(t)}$ lies outside $\mathcal{B}$. Hence, we have to project that point back to the ball. For that reason, we use the projection function $\Pi_{\mathcal{B}}$, that equals to the mapping
\begin{equation*}
    \Pi_{\mathcal{B}}(\vec{x}) = \argmin_{\vec{z} \in \mathcal{B}} \| \vec{x} - \vec{z}\|_{2} \text{ for } \vec{x} \in \mathbb{R}^{d}\,.
\end{equation*}

Finally, after $M$ steps, the SGD algorithm returns an estimate $\overline{\vec{z}}$ that is close to the minimizer of the negative log-likelihood function. As we will show, this minimizer corresponds to the true natural parameters vector $\vec{z}^{\star}$. In the next section, we perform the theoretical analysis of the projected stochastic gradient descent algorithm for truncated Boolean product distributions.



\section{Projected SGD: Theoretical Analysis}
Our goal is to prove \Cref{thm:sgd}. The roadmap of the proof is presented as follows:
\begin{itemize}
    \item \textbf{Convexity of the objective.} In \Cref{subs:log-like}, we show that the population version of the negative log-likelihood objective is convex with respect to the natural parameter vector (see \Cref{lem:conv} and \Cref{subsec:nll-pop}).
    \item \textbf{Initial feasible point.} In \Cref{subs:pslemma}, we efficiently compute a good initialization point for the SGD algorithm.
    The statement is presented in \Cref{lem:ps-p}.
    \item \textbf{Feasible region.} In \Cref{subs:ball}, we show that there exists a ball (and hence an easy-to-project set) that contains the true vector $\vec{z}^{\star}$ (see \Cref{lem:ball}) and each point in the ball satisfies Assumptions~\ref{as:anticon} (see \Cref{lem:aconc-ball}) and~\ref{as:mass} (see \Cref{lem:mass-ball}).
    \item \textbf{Unbiased estimation of the gradient.} In \Cref{subs:grad-estim}, we show how to obtain an unbiased estimation of the gradient of the objective efficiently.
    \item \textbf{Strong convexity inside the feasible region.} In \Cref{subs:strconv}, we establish that the negative log-likelihood objective is strongly-convex inside the ball of \Cref{subs:ball}.
    \item \textbf{Analysis of the SGD algorithm.} In \Cref{subs:sgd-an}, we show that the bounded variance step property holds (see \Cref{lem:bound-var}). Hence, combining this result with the strong-convexity inside the ball, we can apply \Cref{thm:sgd-main} and get \Cref{thm:sgd}.
\end{itemize}

\subsection{Convexity of the negative log-likelihood}
\label{subs:log-like}

Let $S$ be a subset of the hypercube $\Pi_{d}$ and $\D(\vec{p})$ be an arbitrary Boolean product distribution. We remind the reader that, for $\vec{x} \in \Pi_{d}$:
\begin{equation*}
    \D(\vec{p};\vec{x}) = \Be(p_{1};x_{1}) \otimes \dots \otimes \Be(p_{d};x_{d}) = \prod_{i \in [d]}(p_{i}^{x_{i}}(1-p_{i})^{1-x_{i}})\,. 
\end{equation*}

Let $\vec{z}$ be the natural parameters vector with $z_{i} = \ln \frac{p_{i}}{1-p_{i}}$ for $i \in [d]$.
Rewriting the distribution as an exponential family, we get that:
\begin{equation*}
    \D(\vec{p} ; \vec{x}) = \prod_{i \in [d]}\exp \Big( x_{i} \ln\frac{p_{i}}{1-p_{i}} + \ln(1-p_{i}) \Big)\,,
\end{equation*}
or equivalently:
\begin{equation*}
    \D(\vec{z} ; \vec{x}) =  \frac{\exp(\vec{x}^{T}\vec{z})}{\prod_{i \in [d]}(1+\exp(z_{i}))} \,.
\end{equation*}
The truncation set $S$ induces a distribution $\D_{S}(\vec{z})$, that is equal to:
\begin{equation*}
    \mathcal{D}_{S}(\vec{z}; \vec{x}) = \vec{1}_{\vec{x} \in S}\frac{\exp(\vec{x}^{T}\vec{z})}{\sum_{\vec{y} \in S} \exp(\vec{y}^{T}\vec{z}) }\,.
\end{equation*}

Afterwards, we compute the negative log-likelihood $\ell(\vec{z})$ of the truncated samples drawn from the truncated distribution $\D_{S}(\vec{z})$ and study its behavior in terms of convexity. 

\subsubsection{Log-likelihood for a Single Sample}
Notice that the structure of the truncated Boolean product distribution $\D_{S}(\vec{z})$, expressed as an exponential family, is quite useful when computing the negative log-likelihood for a single sample $\vec{x}$ drawn from a distribution $\mathcal{D}_{S}(\vec{z})$, that is:
\begin{equation} \label{eq:log}
    \ell(\vec{z};\vec{x}) = -\ln \D_{S}(\vec{z};\vec{x}) = -\vec{x}^{T}\vec{z} + \ln \Big (\sum_{\vec{y} \in S} e^{\vec{y}^{T}\vec{z}} \Big )\,.
\end{equation}

The convexity of the negative log-likelihood $\ell(\vec{z})$ of the truncated Boolean product distribution $\D_{S}(\vec{z})$ follows immediately if one computes the gradient and the Hessian of $\ell(\vec{z})$ with respect to the natural parameter vector $\vec{z}$. This result is presented in the following Lemma.
\begin{lemma} \label{lem:conv}
The negative log-likelihood objective $\ell(\vec{z};\vec{x})$, as defined in \Cref{eq:log}, is convex with respect to $\vec{z}$ for all $\vec{x} \in \Pi_{d}$.
\end{lemma}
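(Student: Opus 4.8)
The plan is to compute the gradient and Hessian of $\ell(\vec{z};\vec{x})$ directly from the closed form in Equation~\eqref{eq:log} and observe that the Hessian equals a covariance matrix, hence is positive semidefinite. First I would rewrite $\ell(\vec{z};\vec{x}) = -\vec{x}^{T}\vec{z} + A(\vec{z})$, where $A(\vec{z}) = \ln\bigl(\sum_{\vec{y}\in S} e^{\vec{y}^{T}\vec{z}}\bigr)$ is the log-partition function of the truncated exponential family. Since $-\vec{x}^{T}\vec{z}$ is linear in $\vec{z}$, convexity of $\ell(\cdot;\vec{x})$ reduces to convexity of $A(\vec{z})$, which does not depend on $\vec{x}$ at all; this already makes the ``for all $\vec{x}\in\Pi_d$'' part of the statement automatic.

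Next I would differentiate $A$. A direct calculation gives
\[
\nabla A(\vec{z}) \;=\; \frac{\sum_{\vec{y}\in S} \vec{y}\, e^{\vec{y}^{T}\vec{z}}}{\sum_{\vec{y}\in S} e^{\vec{y}^{T}\vec{z}}} \;=\; \E_{\vec{y}\sim \D_S(\vec{z})}[\vec{y}]\,,
\]
which incidentally is the quantity appearing in the gradient formula used by Algorithm~\ref{algo:sgd}. Differentiating once more, the Hessian is
\[
\nabla^{2} A(\vec{z}) \;=\; \E_{\vec{y}\sim \D_S(\vec{z})}[\vec{y}\vec{y}^{T}] - \E_{\vec{y}\sim \D_S(\vec{z})}[\vec{y}]\,\E_{\vec{y}\sim \D_S(\vec{z})}[\vec{y}]^{T} \;=\; \mathrm{Cov}_{\vec{y}\sim\D_S(\vec{z})}(\vec{y})\,.
\]
A covariance matrix is positive semidefinite: for any $\vec{w}\in\reals^{d}$, $\vec{w}^{T}\nabla^{2}A(\vec{z})\,\vec{w} = \mathrm{Var}_{\vec{y}\sim\D_S(\vec{z})}(\vec{w}^{T}\vec{y}) \geq 0$. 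Since $\nabla^{2}\ell(\vec{z};\vec{x}) = \nabla^{2}A(\vec{z}) \succeq 0$ for every $\vec{z}$, the function $\ell(\cdot;\vec{x})$ is convex, proving the lemma.

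There is essentially no hard obstacle here; the only points requiring a little care are the bookkeeping in the two differentiations (quotient rule applied to the ratio defining $\nabla A$, keeping track of which sums range over $S$) and noting that the denominator $\sum_{\vec{y}\in S}e^{\vec{y}^{T}\vec{z}}$ is strictly positive and finite because $S\neq\emptyset$ and $S$ is finite, so all the expectations above are well defined. One could alternatively invoke the standard fact that the log-partition function of any exponential family is convex, but since the ambient space $\Pi_d$ is finite the self-contained computation above is cleanest. Strict convexity is \emph{not} claimed at this stage — it fails in general, e.g.\ when $\D_S$ is supported on a lower-dimensional affine subspace — and is addressed later (under Assumption~\ref{as:anticon}) in Lemma~\ref{lem:str-conv}.
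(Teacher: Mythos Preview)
Your proposal is correct and follows essentially the same approach as the paper: compute the gradient and Hessian of $\ell(\vec{z};\vec{x})$, identify the Hessian as the covariance matrix $\mathrm{Cov}_{\vec{y}\sim\D_S(\vec{z})}[\vec{y}]$, and conclude positive semidefiniteness. The only differences are cosmetic---you explicitly isolate the log-partition function $A(\vec{z})$ and add remarks on well-definedness and the absence of strict convexity---but the argument is the same.
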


\begin{proof} 
Observe that the negative log-likelihood of a single sample $x \sim \mathcal{D}_{S}(\vec{z})$ will be
\begin{equation*}
    \ell(\vec{z} ; \vec{x}) = -\vec{x}^{T}\vec{z} + \ln \Big(\sum_{\vec{y} \in S} e^{\vec{y}^{T}\vec{z}} \Big)\,.
\end{equation*}
We now compute the gradient of $\ell(\vec{z};\vec{x})$ with respect to the parameter $\vec{z}$. 
\begin{equation*}
    \nabla_{\vec{z}} \ell(\vec{z} ; \vec{x}) = -\vec{x} + \frac{\sum_{\vec{y} \in S} \vec{y}e^{\vec{y}^{T}\vec{z}}}{\sum_{\vec{y} \in S} e^{\vec{y}^{T}\vec{z}}} = -\vec{x} + \E_{\vec{y} \sim \mathcal{D}_{S}(\vec{z})}[\vec{y}] \,.
\end{equation*}
Finally, we compute the Hessian of the negative log-likelihood:
\begin{equation*}
    \vec{H}_{\ell}(\vec{z}) = \frac{\sum_{\vec{y} \in S} \vec{y}\vec{y}^{T}e^{\vec{y}^{T}\vec{z}}}{\sum_{\vec{y} \in S} e^{\vec{y}^{T}\vec{z}}} - \frac{\sum_{\vec{y} \in S} \vec{y}e^{\vec{y}^{T}\vec{z}}}{\sum_{\vec{y} \in S} e^{\vec{y}^{T}\vec{z}}} \frac{\sum_{\vec{y} \in S} \vec{y}e^{\vec{y}^{T}\vec{z}}}{\sum_{\vec{y} \in S} e^{\vec{y}^{T}\vec{z}}} = \text{Cov}_{\vec{y} \sim \mathcal{D}_{S}(\vec{z})}[\vec{y}, \vec{y}]\,.
\end{equation*}
The Hessian of the negative log-likelihood $\vec{H}_{\ell}$ is semi-positive definite since it equals to a covariance matrix (in particular, it equals to the covariance matrix of the sufficient statistics of the exponential family). The result follows.
\end{proof}

\subsubsection{Log-likelihood for the Population Model}
\label{subsec:nll-pop}
Our Projected SGD algorithm will optimize the negative log-likelihood for the population model, that will be denoted with $\overline{\ell}$. This function is defined as the expected value of the negative log-likelihood function with respect to the true truncated Boolean product distribution $\D_{S}(\truez)$, that is
\begin{equation*}
    \overline{\ell}(\vec{z}) = \E_{\vec{x} \sim \D_{S}(\truez)}[\ell(\vec{z};\vec{x})]\,.
\end{equation*}
Using the formula of \Cref{eq:log}, we get that
\begin{equation*}
    \overline{\ell}(\vec{z}) = \E_{\vec{x} \sim \mathcal{D}_{S}(\vec{z}^{\star})} \Big [-\vec{x}^{T}\vec{z} + \ln \Big(\sum_{\vec{y} \in S} e^{\vec{y}^{T}\vec{z}} \Big) \Big]\,.
\end{equation*}
But, since the second term is just a normalization constant, and hence independent of the random variable $\vec{x}$, we get that:
\begin{equation*}
    \overline{\ell}(\vec{z}) = \E_{\vec{x} \sim \mathcal{D}_{S}(\vec{z}^{\star})}[-\vec{x}^{T}\vec{z}] + \ln \Big(\sum_{\vec{y} \in S} e^{\vec{y}^{T}\vec{z}} \Big)\,.
\end{equation*}
Similarly, as in the proof of \Cref{lem:conv}, one can compute the gradient with respect to $\vec{z}$ and get that:
\begin{equation} \label{eq:like}
    \nabla_{\vec{z}}  \overline{\ell}(\vec{z}) =  -\E_{\vec{x} \sim \mathcal{D}_{S}(\vec{z}^{\star})}[\vec{x}] + \E_{\vec{y} \sim \mathcal{D}_{S}(\vec{z})}[\vec{y}]\,. 
\end{equation}
Hence, computing in the exact same way the Hessian of $\overline{\ell}(\vec{z})$, we get the convexity of the
negative log-likelihood for the population model
with respect to the natural parameter vector $\vec{z}$. 

Also, notice that the gradient $\nabla_{\vec{z}}  \overline{\ell}(\vec{z})$ vanishes when $\vec{z} = \vec{z}^{\star}$. So, the true parameter vector $\vec{z}^{\star}$ minimizes the negative log-likelihood function of the truncated samples for the population model. This fact combined with the convexity of the population version of the negative log-likelihood yield the following.
\medskip
\begin{lemma} 
For any $\vec{z} \in \mathbb{R}^{d}$, it holds that
\begin{equation*}
    \overline{\ell}(\vec{z}^{\star}) \leq \overline{\ell}(\vec{z})\,,
\end{equation*}
where $\vec z^\star \in \reals^d$ is the true parameter vector and $\overline{\l}$ is the population negative log-likelihood objective, whose expectation is with respect to the truncated Boolean product distribution $\D_S(\vec z^\star)$, for some arbitrary truncation set $S \subseteq\Pi_d$.
\end{lemma}

\subsection{\upshape Initialization Lemma}
\label{subs:pslemma}
Our next goal is to find a good initialization point for our SGD algorithm. Assume that for the truncation set $S$, it holds that $\D(\truep ; S) = \alpha$.  We claim that, if one draws $n = \widetilde{O}(d)$ samples $\{\vec{x}^{(t)}\}_{t=1}^{n}$ from the truncated Boolean product distribution $\D_{S}(\truep)$, the empirical mean 
\begin{equation} \label{eq:emp}
    \vec{\hat{p}}_{S} = \frac{1}{n}\sum_{t=1}^{n}\vec{x}^{(t)}
\end{equation}
is close in $L_{2}$ distance to the true mean parameter vector $\truep$ with high probability. 

In the following lemma, we provide the proximity result between the empirical mean $\hat{\vec{p}}_{S}$ of the truncated Boolean product distribution $\D_{S}(\truep)$ and the true parameter vector $\vec{p}^{\star}$. This lemma will be useful in the upcoming section. 
\begin{lemma} \label{lem:ps-p}
Let $\D(\truep)$ be the unknown Boolean product distribution and consider the truncation set $S \subseteq\Pi_{d}$ such that $\D(\truep;S) = \alpha$. The empirical mean $\hat{\vec{p}}_{S}$, computed using $O\left (d \ln(\frac{d}{\delta}) \right)$  samples from the truncated Boolean product distribution $\D_{S}(\truep)$, satisfies:
\begin{equation*}
    \| \hat{\vec{p}}_{S} - \vec{p}^{\star}\|_{2} \leq O \Big( \sqrt{\ln(1/\alpha)} \Big)\,,
\end{equation*}
with probability $1-\delta$.
\end{lemma}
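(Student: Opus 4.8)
The plan is to route the bound through the mean $\vec{p}_S \eqdef \E_{\vec{x}\sim\D_S(\truep)}[\vec{x}]$ of the truncated distribution and split $\|\hat{\vec{p}}_S - \truep\|_2 \le \|\hat{\vec{p}}_S - \vec{p}_S\|_2 + \|\vec{p}_S - \truep\|_2$. The first term is pure concentration: coordinate $i$ of $\hat{\vec{p}}_S$ is an empirical average of $n$ i.i.d.\ $\{0,1\}$ random variables with mean $(\vec{p}_S)_i$, so Hoeffding's inequality together with a union bound over the $d$ coordinates shows that $n = O(\tfrac{d}{\eps^2}\ln\tfrac{d}{\delta})$ samples force $\|\hat{\vec{p}}_S - \vec{p}_S\|_\infty \le \eps/\sqrt d$, hence $\|\hat{\vec{p}}_S - \vec{p}_S\|_2 \le \eps$, with probability at least $1-\delta$. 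Taking $\eps$ to be an absolute constant (which only affects the hidden constant in the sample bound) makes this contribution $O(1)$, and I will see it is dominated by the second term.

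The real work is bounding $\|\vec{p}_S - \truep\|_2$, and the point is to use the product/exponential-family structure so that the dimension cancels. First I would record the exact identity $D_{KL}(\D_S(\truep)\parallel\D(\truep)) = \ln(1/\alpha)$: since $\D_S(\vec{x}) = \D(\vec{x})/\alpha$ for $\vec{x}\in S$ and $0$ otherwise, the log-ratio equals the constant $\ln(1/\alpha)$ on the support of $\D_S$, so the divergence integrates to $\ln(1/\alpha)$. Next, because $\D(\truep)$ is a product distribution, the chain rule for KL divergence gives $D_{KL}(\D_S\parallel\D(\truep)) = \mathrm{TC}(\D_S) + \sum_{i=1}^d D_{KL}((\D_S)_i\parallel\Be(p_i^{\ast})) \ge \sum_{i=1}^d D_{KL}((\D_S)_i\parallel\Be(p_i^{\ast}))$, where $(\D_S)_i$ is the $i$-th marginal of $\D_S$ and $\mathrm{TC}(\D_S)\ge 0$ is the total correlation. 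Finally, each marginal $(\D_S)_i$ is a distribution on $\{0,1\}$, hence equals $\Be((\vec{p}_S)_i)$, so Pinsker's inequality applied to two Bernoulli distributions gives $D_{KL}(\Be((\vec{p}_S)_i)\parallel\Be(p_i^{\ast})) \ge 2((\vec{p}_S)_i - p_i^{\ast})^2$. Summing over $i$ yields $2\|\vec{p}_S - \truep\|_2^2 \le \ln(1/\alpha)$, i.e.\ $\|\vec{p}_S - \truep\|_2 \le \sqrt{\tfrac12\ln(1/\alpha)}$; the triangle inequality then closes the argument.

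The only step with genuine content is the second one — the concentration step and the triangle inequality are routine — and the thing to get right is why no factor of $\sqrt d$ creeps in. The naive decomposition $\truep = \alpha\vec{p}_S + (1-\alpha)\,\E_{\vec{x}\sim\D_{\bar S}(\truep)}[\vec{x}]$ only gives $\|\vec{p}_S - \truep\|_2 = (1-\alpha)\|\vec{p}_S - \E_{\D_{\bar S}}[\vec{x}]\|_2 \le \sqrt d$, which is far too weak for the radius of the ball $\mathcal{B}$ used later in the SGD analysis. The two ingredients that remove the $\sqrt d$ are the exact value $\ln(1/\alpha)$ of $D_{KL}(\D_S\parallel\D)$ and the fact that, because $\D$ is a product, this single divergence upper-bounds the \emph{sum} over coordinates of the per-marginal divergences, each of which pins down one squared coordinate gap through Pinsker. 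I expect this dimension cancellation to be the one point worth spelling out in detail; note that no anti-concentration or identifiability assumption is needed here, only $\D(\truep;S) = \alpha > 0$.
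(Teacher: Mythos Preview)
Your proof is correct and follows the same overall decomposition as the paper (triangle inequality through $\vec{p}_S$, with the first piece handled by Hoeffding plus a union bound over coordinates), but your bound on $\|\vec{p}_S - \truep\|_2$ takes a genuinely different route. The paper argues directionally: for any unit $\vec{w}$, Hoeffding's inequality on the sum $\vec{w}^T\vec{x}$ with $\vec{x}\sim\D(\truep)$ gives a sub-Gaussian tail $\Pr[\vec{w}^T\vec{x} > \vec{w}^T\truep + C]\le e^{-2C^2}$, and then observes that conditioning on any event of mass $\alpha$ can shift the mean in direction $\vec{w}$ by at most the $C$ solving $\alpha \le e^{-2C^2}$, hence $|\vec{w}^T(\vec{p}_S-\truep)|\le O(\sqrt{\ln(1/\alpha)})$ uniformly in $\vec{w}$. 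Your argument is information-theoretic: you compute $D_{KL}(\D_S\parallel\D)=\ln(1/\alpha)$ exactly, use the product structure of $\D$ to drop to a sum of per-coordinate Bernoulli divergences via the total-correlation decomposition, and then apply Pinsker coordinatewise. Your route is arguably cleaner --- it yields the explicit constant $\sqrt{\tfrac12\ln(1/\alpha)}$ and makes transparent exactly where the product structure kills the $\sqrt d$ --- whereas the paper's sub-Gaussian argument is terser but leaves the step ``worst-case $S$ puts mass $\alpha$ on the tail, so the conditional mean shifts by at most $C$'' to the reader. Both are valid and neither needs the anti-concentration assumption.
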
 
\begin{proof}
The proof of \Cref{lem:ps-p} can be decomposed in the following two lemmas. Combining the following two lemmas (we apply \Cref{lem:ps1} with accuracy $\epsilon$ a small constant like $\sqrt{\log(1/\alpha)}/10$, since $\alpha$ is also a constant) using the triangle inequality for the $L_{2}$ norm, \Cref{lem:ps-p} follows.
\end{proof}

\begin{lemma} \label{lem:ps1}
Consider $S \subseteq\Pi_{d}$ and let $\vec{p}_{S}$ be the parameter vector of the truncated Boolean product distribution $\mathcal{D}_{S}(\vec{p}^{\star})$. There exists an algorithm that uses $O(\frac{d}{\epsilon^{2}}\ln(\frac{d}{\delta}))$ samples from $\D_{S}(\vec{p}^{\star})$ and computes an estimate $\hat{\vec{p}}_{S}$ such that
\begin{equation*}
    \|\hat{\vec{p}}_{S} - \vec{p}_{S}\|_{2} \leq \epsilon\,, 
\end{equation*}
with probability $1-\delta$.
\end{lemma}

\begin{proof}
Consider the truncated true Boolean product distribution $\D_{S}(\vec{p}^{\star})$ with truncation set $S \subseteq\Pi_{d}$. Consider the algorithm that, given $n$ samples $\{\vec{x}^{(t)}\}$ from $\D_{S}(\vec{p}^{\star})$, computes the empirical mean vector:
\begin{equation*}
    \hat{\vec{p}}_{S} = \frac{1}{n}\sum_{t=1}^{n}\vec{x}^{(t)}\,.   
\end{equation*}
Note that $\E\hat{\vec{p}}_{S} = \vec{p}_{S}$. Fix a coordinate $j \in [d]$.
By applying Hoeffding's  inequality at $\hat{p}_{S,j} = \frac{1}{n}\sum_{t=1}^{n}x^{(t)}_{j}$ (these random variables are bounded in $[0,1]$), one gets
\begin{equation*}
    \Pr \Big [|\hat{p}_{S,j} - p_{S,j}| > \epsilon / \sqrt{d} \Big ] \leq 2e^{-2 n \frac{\epsilon^{2}}{d}}\,.
\end{equation*}
We can now use union bound and require the left hand side to be at most $\delta$. Hence, we get that
\begin{equation*}
    2de^{-2 n \frac{\epsilon^{2}}{d}} \leq \delta \Rightarrow n = \Omega \left (\frac{d}{\epsilon^{2}}\ln \left(\frac{d}{\delta} \right) \right)\,.
\end{equation*}
Consequently, given $\Theta(\frac{d}{\epsilon^{2}}\ln(\frac{d}{\delta}))$ samples, we get that the empirical mean estimate $\vec{p}_{S}$ is within error $\epsilon$ in $L_{2}$ distance with probability $1-\delta$. 
 \end{proof}

\begin{lemma} \label{lem:ps2}
Consider the unknown Boolean product distribution $\D(\vec{p}^{\star})$ and a truncation set $S$ such that $\D(\truep; S) = \alpha$. Let $\vec{p}_{S}$ be the parameter vector of the truncated Boolean product distribution $\mathcal{D}_{S}(\vec{p}^{\star})$. Then, it holds that
\begin{equation*}
  \| \vec{p}_{S} - \vec{p}^{\star} \|_{2} \leq O\Big(\sqrt{\ln(1/\alpha)}\Big)\,.
\end{equation*}
\end{lemma}

\begin{proof}
Consider an arbitrary direction $\vec{w}$ with $\|\vec{w}\|_{2} = 1$. Consider the random variable $\vec{w}^{T}\vec{x}$ where $\vec{x} \sim \mathcal{D}(\vec{p}^{\star})$. Note that $\E_{\vec{x} \sim \mathcal{D}(\vec{p}^{\star})}[\vec{w}^{T}\vec{x}] = \vec{w}^{T}\vec{p}^{\star}$. By applying Hoeffding's inequality:
\begin{equation*}
    \Pr_{\vec{x} \sim \mathcal{D}(\vec{p}^{\star})}[\vec{w}^{T}  \vec{x} > \vec{w}^{T} \vec{p}^{\star} + C] \leq e^{-2C^{2}}\,.
\end{equation*}
Hoeffding's inequality implies that the marginal of the true distribution in direction $\vec{w}$ has exponential tail and that holds for any (unit) direction. But, the worst case set $S$ would assign mass $\alpha$ to the tail (in order to maximize the distance between the two means) and, hence:
\begin{equation*}
    \alpha \leq  e^{-2C^{2}} \Rightarrow C = O\left(\sqrt{\ln \frac{1}{\alpha}}\right)\,.
\end{equation*}
The result follows.

 \end{proof}

\subsection{\upshape Ball in the $z$-space}
\label{subs:ball}

We will perform Projected SGD in a convex subspace of $\mathbb{R}^{d}$. The algorithm will optimize the negative log-likelihood for the population model $ \overline{\ell}$ with respect to the natural parameters $\vec{z} = (z_{1}, \ldots, z_{d})^{T}$ with $z_{i} = \ln \frac{p_{i}}{1-p_{i}}$ in order to learn the true parameters $\truez = (z_{1}^{\star}, \ldots, z_{d}^{\star})^{T}$ with $z_{i}^{\star} = \ln\frac{p_{i}^{\star}}{1-p_{i}^{\star}}$. Our initial guess is $\vec{\hat{z}} = (\hat{z}_{1}, \ldots, \hat{z}_{d})^{T}$ with $\hat{z}_{i} = \ln \frac{\hat{p}_{S,i}}{1-\hat{p}_{S,i}}$. Afterwards, SGD will iterate over estimations $\vec{z}$ of the true parameters $\vec{z}^{\star}$. 

In this section, we show that there exists a convex set that contains the true vector $\vec{z}^{\star}$ and each point in that set satisfies Assumptions~\ref{as:anticon} and~\ref{as:mass} {for some function of $\alpha, \lambda$}. 

In fact, we show that there exists a ball $\ball$ of radius $\rad$ centered at $\vec{\hat{z}}$, that contains the true natural parameters $\vec{z}^{\star}$, with high probability. Additionally, every point $\vec{z}$ of that ball satisfies Assumptions~\ref{as:anticon} and~\ref{as:mass}. That is, for any $\vec{z} \in \ball$, let $\D(\vec{z})$ be the Boolean product distribution and $\D_{S}(\vec{z})$ be an arbitrary truncation of $\D(\vec{z})$. Then, $\D_{S}(\vec{z})$ will be anti-concentrated with some parameter that depends on $\alpha$ and $\lambda$, in the sense of \Cref{as:anticon}, and {we will have $\D(\vec{z};S) > c_{\alpha,\lambda}$ for some constant $c_{\alpha,\lambda}$, that depends again only on the initial mass of the set $S$ and the anti-concentration parameter $\lambda$}. The existence of such a ball is presented in the following lemma.

\medskip
\begin{lemma} 
\label{lem:ball}
There exists $\rad > 0$ such that the ball centered at the empirical estimate $\vec{\hat{z}}:$
\begin{equation*}
    \ball = \{\vec{z} : \| \vec{z} - \vec{\hat{z}} \|_{2} \leq \rad \}
\end{equation*}
contains the true natural parameters, i.e.,
\begin{equation*}
    \| \vec{z}^{\star} - \vec{\hat{z}} \|_{2} \leq \rad\,,
\end{equation*}
with high probability, where the randomness is over the estimate $\vec{\hat{z}}$. {Moreover, the radius $B = B_{\alpha,\lambda} = \poly(1/\lambda) \sqrt{\log(1/\alpha)}.$}
\end{lemma}
\begin{proof}
We can assume that the real mean vector $\vec{p}^{\star}$ lies in $(0,1)^{d}$. Firstly, note that $\vec{\hat{z}} \in (-\infty, \infty)^{d}$, since $(\vec{\hat{z}})_{i} = \ln \frac{\hat{p}_{S,i}}{1-\hat{p}_{S,i}} $ and $0 < \hat{p}_{S,i} < 1$ for any $i \in [d]$. From now on, fix a coordinate $i \in [d]$ and consider the mapping $f(x) = \ln \frac{x}{1-x}$ for $x \in (0,1)$. Note that $f$ corresponds to the transformation of $p_{i}$ to the natural parameter $z_{i}$ and, hence:
\begin{equation*}
    |z^{\star}_{i} - \hat{z}_{i}| = |f(p^{\star}_{i}) - f(\hat{p}_{S,i})|\,.
\end{equation*}
{Using the anti-concentration condition (see \Cref{as:anticon} with parameter $\lambda$), we get that there exists a positive constant $\gamma = \gamma(\lambda)$ such that $p_{i}^{\star}, \hat{p}_{S,i} \in (\gamma, 1-\gamma)$ for any $i \in [d]$ (in particular, $\gamma \geq \lambda^3$ using \Cref{as:anticon} with Chebyshev's inequality). Then, observe that there exists a positive finite constant $C =  C(\lambda)$ such $f$ is $C$-Lipschitz in  that interval (in particular, on the interval $(\gamma,1-\gamma)$, $f$ is $1/(\gamma(1-\gamma))$-Lipschitz)}. Hence, 
\begin{equation*}
    |z^{\star}_{i} - \hat{z}_{i}| = |f(p^{\star}_{i}) - f(\hat{p}_{S,i})| \leq C(\lambda) \cdot |p^{\star}_{i} - \hat{p}_{S,i}|\,.
\end{equation*}
Squaring each side and summing over $i \in [d]$, we get that
\begin{equation*}
    {\|\vec{z}^{\star} - \vec{\hat{z}}\|_{2} \leq \poly(1/\lambda) \cdot \sqrt{\ln \frac{1}{\alpha}}}\,,
\end{equation*}
with high probability, where we used the proximity \Cref{lem:ps-p}.
{Let us set
\[
B_{\alpha,\lambda} = \poly(1/\lambda) \cdot \sqrt{\ln \frac{1}{\alpha}}\,.
\]}
Hence, the ball centered at {$B_{\alpha,\lambda}$}, i.e., the set 
\begin{equation*}
    \ball = \{\vec{z} : \|\vec{z} - \vec{\hat{z}} \|_{2} \leq  {B_{\alpha,\lambda}} \}
\end{equation*}
contains the true natural parameters $\truez$ and any point $\vec{z} \in \ball$ is finite in each coordinate, {since $\sum_{i=1}^{d}(z_{i} - \hat{z}_{i})^{2} \leq B_{\alpha,\lambda}^{2}$.}
 \end{proof}

From now on, we will denote by $\ball$ the ball of \Cref{lem:ball}. In order to be able to perform the SGD algorithm, we have to prove that Assumptions~\ref{as:anticon} and~\ref{as:mass} hold for any guess of our algorithm. Since the algorithm runs inside the ball $\ball$, we have to prove that the two assumptions are preserved inside the ball. We remind the reader that any guess that lies outside the ball, is efficiently projected to its $L_{2}$ closest point $\vec{y} \in \ball$.

Next, in \Cref{lem:mass-ball}, we prove that, in each iteration, every natural parameter vector $\vec{z}$ inside the ball $\ball$, that corresponds to a mean vector $\vec{p}$ and induces a distribution $\D(\vec{p})$, will assign constant non-trivial mass to the set $S$.

\medskip
\begin{lemma} 
[Non-trivial mass inside the ball]
\label{lem:mass-ball}
Consider the true Boolean product distribution $\mathcal{D}(\vec{p}^{\star})$ and $\mathcal{D}(\vec{p})$ be another Boolean product distribution such that the corresponding natural parameter vectors satisfy
\begin{equation*}
    \| \vec{z}^{\star} - \vec{z} \|_{2} \leq B_{\alpha,\lambda}\,.
\end{equation*}
Suppose that for a truncation set $S$ we have that:
\begin{equation*}
    \E_{\vec{x} \sim \D(\vec{p}^{\star})}[\vec{1}_{\vec{x} \in S}] \geq \alpha\,.
\end{equation*}
Then, it holds that
\begin{equation*}
    \E_{\vec{x} \sim \D(\vec{p})}[\vec{1}_{\vec{x} \in S}] \geq {c_{\alpha,\lambda} := \alpha^{ \poly(1/\lambda)}}\,.    
\end{equation*}
\end{lemma}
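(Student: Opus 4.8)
\textbf{Proof proposal.} The plan is a change-of-measure argument: control $\D(\vec{p};S)$ from below by $\D(\vec{p}^{*};S)\geq\alpha$, where the cost of the change of measure is governed by the $\chi^{2}$-divergence between the two untruncated product distributions, which in turn is controlled by the bound $\|\vec{z}^{*}-\vec{z}\|_{2}\leq B=O(\sqrt{\ln(1/\alpha)})$. Concretely, since $p_{i}\in(0,1)$ gives $\D(\vec{p};\vec{x})>0$ for every $\vec{x}$, Cauchy--Schwarz applied to $a_{\vec{x}}=\sqrt{\D(\vec{p};\vec{x})}$ and $b_{\vec{x}}=\D(\vec{p}^{*};\vec{x})/\sqrt{\D(\vec{p};\vec{x})}$ yields
\[
\alpha\;\leq\;\D(\vec{p}^{*};S)\;=\;\sum_{\vec{x}\in S}a_{\vec{x}}b_{\vec{x}}\;\leq\;\sqrt{\D(\vec{p};S)}\;\Bigl(\sum_{\vec{x}\in S}\frac{\D(\vec{p}^{*};\vec{x})^{2}}{\D(\vec{p};\vec{x})}\Bigr)^{1/2},
\]
hence $\D(\vec{p};S)\geq\alpha^{2}\bigm/\sum_{\vec{x}\in S}\D(\vec{p}^{*};\vec{x})^{2}/\D(\vec{p};\vec{x})$. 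So it suffices to show the denominator is $\poly(1/\alpha)$. Extending the sum from $S$ to all of $\Pi_{d}$ only increases it, and $\sum_{\vec{x}\in\Pi_{d}}\D(\vec{p}^{*};\vec{x})^{2}/\D(\vec{p};\vec{x})=1+\chi^{2}\bigl(\D(\vec{p}^{*})\parallel\D(\vec{p})\bigr)$; since numerator and denominator factorize over coordinates, this equals $\prod_{i=1}^{d}\bigl((p_{i}^{*})^{2}/p_{i}+(1-p_{i}^{*})^{2}/(1-p_{i})\bigr)$.

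The second step is a per-coordinate estimate. Writing $\delta_{i}=z_{i}-z_{i}^{*}$ and using $p_{i}=1/(1+e^{-z_{i}})$, a short calculation gives
\[
\frac{(p_{i}^{*})^{2}}{p_{i}}+\frac{(1-p_{i}^{*})^{2}}{1-p_{i}}=1+2p_{i}^{*}(1-p_{i}^{*})(\cosh\delta_{i}-1).
\]
Now bound this factor. Since $p_{i}^{*}(1-p_{i}^{*})\leq 1/4$ and $\ln(1+x)\leq x$, for $|\delta_{i}|\leq 1$ we use $\cosh\delta_{i}-1\leq\delta_{i}^{2}$ to get $\ln(\text{factor})\leq\tfrac12(\cosh\delta_{i}-1)\leq\delta_{i}^{2}$, while for $|\delta_{i}|\geq 1$ we use $1+2p_{i}^{*}(1-p_{i}^{*})(\cosh\delta_{i}-1)\leq\cosh\delta_{i}\leq e^{|\delta_{i}|}\leq e^{\delta_{i}^{2}}$, so again $\ln(\text{factor})\leq\delta_{i}^{2}$. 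Summing and invoking Lemma~\ref{lem:ball},
\[
\ln\!\prod_{i=1}^{d}\Bigl(\frac{(p_{i}^{*})^{2}}{p_{i}}+\frac{(1-p_{i}^{*})^{2}}{1-p_{i}}\Bigr)\;\leq\;\sum_{i=1}^{d}\delta_{i}^{2}\;=\;\|\vec{z}^{*}-\vec{z}\|_{2}^{2}\;\leq\;B^{2}=O(\ln(1/\alpha)),
\]
so the denominator in the first step is at most $e^{O(\ln(1/\alpha))}=\poly(1/\alpha)$, and therefore $\D(\vec{p};S)\geq\alpha^{2}/\poly(1/\alpha)=\poly(\alpha)$, which is what we want (this is exactly $\E_{\vec{x}\sim\D(\vec{p})}[\vec{1}_{\vec{x}\in S}]\geq\poly(\alpha)$).

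The main obstacle is the second step. The subtlety is that the $L_{2}$ budget $B$ could be concentrated in a single coordinate, making $\delta_{i}$ as large as $\Theta(\sqrt{\ln(1/\alpha)})$, so $\cosh\delta_{i}$ is \emph{not} comparable to $\delta_{i}^{2}$ there; the case split above resolves this, since for $|\delta_{i}|\geq 1$ one has $|\delta_{i}|\leq\delta_{i}^{2}$ so $\cosh\delta_{i}\leq e^{\delta_{i}^{2}}$, and the weight $p_{i}^{*}(1-p_{i}^{*})\leq 1/4$ keeps $\ln(1+\cdot)$ from blowing up. The only external inputs are Cauchy--Schwarz, the product structure of the $\chi^{2}$-divergence for product distributions, and the ball-radius bound $B=O(\sqrt{\ln(1/\alpha)})$ of Lemma~\ref{lem:ball}; everything else is routine calculus.
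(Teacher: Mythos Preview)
Your proof is correct but takes a genuinely different route from the paper. The paper's argument is a concentration-based change of measure: it writes $\D(\vec{p};S)=\E_{\vec{x}\sim\D(\vec{p}^{*})}[e^{-g(\vec{x})}\vec{1}_{\vec{x}\in S}]$ with $g(\vec{x})=\ln\frac{\D(\vec{p}^{*};\vec{x})}{\D(\vec{p};\vec{x})}=\vec{x}^{T}(\vec{z}^{*}-\vec{z})+C$, applies Hoeffding's inequality to $g$ (which is linear in the coordinates) to show that $g(\vec{x})\le D_{KL}(\D(\vec{p}^{*})\!\parallel\!\D(\vec{p}))+B\sqrt{\ln(2/\alpha)}$ except on an event of mass at most $\alpha/2$, intersects with $\{\vec{x}\in S\}$, and bounds $D_{KL}\le B^{2}$ via Proposition~\ref{prop:kl-dtv-norm}(i). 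Your argument instead uses Cauchy--Schwarz to reduce to a $\chi^{2}$-divergence bound and then a clean per-coordinate computation giving the closed form $1+2p_{i}^{*}(1-p_{i}^{*})(\cosh\delta_{i}-1)$, together with the case split $|\delta_{i}|\lessgtr 1$ to get $\ln(\text{factor})\le\delta_{i}^{2}$. Your approach is arguably more elementary (no concentration inequality, just Cauchy--Schwarz and calculus) and gives an explicit constant; the paper's approach is a bit more robust in that it does not require the exact product/$\chi^{2}$ factorization and would generalize more readily to exponential families where the $\chi^{2}$ identity is less tractable. Both yield the same $\poly(\alpha)$ conclusion.

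One minor remark: you cite Lemma~\ref{lem:ball} for the bound $\|\vec{z}^{*}-\vec{z}\|_{2}\le B=O(\sqrt{\ln(1/\alpha)})$, but that bound is already the \emph{hypothesis} of the present lemma, so no external input is needed there (and Lemma~\ref{lem:ball} in fact sits logically upstream of this statement in the paper's development, so invoking it here would be mildly circular).
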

\begin{proof} Let $\D(\vec{p}^{\star};S) = \alpha$ and $\D(\vec{p};S) = \alpha'$. Firstly, notice that one can express the mass of the set $S$ assigned by $\D(\vec{p})$ as:
\begin{equation*}
    \D(\vec{p};S) = \E_{\vec{x} \sim \D(\vec{p}^{\star})}\left [\vec 1_{\vec{x} \in S} \frac{\D(\vec{p};\vec{x})}{\D(\vec{p}^{\star};\vec{x})} \right]\,.
\end{equation*}
This is equivalent to:
\begin{equation*}
    \D(\vec{p};S) = \E_{\vec{x} \sim \D(\vec{p}^{\star})} \left[ e^{-\ln\frac{\D(\vec{p}^{\star};\vec{x})}{\D(\vec{p};\vec{x})}}\vec 1_{\vec{x} \in S}\right ]\,.
\end{equation*}
We remind the reader that:
\begin{equation*}
    \D(\vec{z} ; \vec{x}) = \exp(\vec{x}^{T}\vec{z}) \frac{1}{\prod_{i \in [d]}(1+\exp(z_{i}))} \,.
\end{equation*}
Writing the log ratio in terms of the natural parameters $\vec{z}$, we get that:
\begin{equation} \label{eq:ant}
    \ln\frac{\D(\vec{z}^{\star};\vec{x})}{\D(\vec{z};\vec{x})}  = \vec{x}^{T}(\vec{z}^{\star}-\vec{z}) + C \,,
\end{equation}
where $C = -\ln \prod_{i \in [d]}(1+e^{z^{\star}_{i}}) + \ln \prod_{i \in [d]}(1+e^{z_{i}}) = \ln \frac{\prod_{i \in [d]} (1-p^{\star}_{i})}{\prod_{i \in [d]} (1-p_{i})}$ is independent of $\vec{x} \sim \D(\vec{p}^{\star})$. Since both $\vec{z}$ and $\vec{z}^{\star}$ lie inside the ball $\mathcal{B}$ and are finite, $C$ corresponds to a constant. 
Now, set $g(\vec{x}) = \ln\frac{\D(\vec{p}^{\star};\vec{x})}{\D(\vec{p};\vec{x})}$ and observe that:
\begin{equation*}
\E_{\vec{x} \sim \D(\vec{p}^{\star})}[g(\vec{x})] = D_{KL}(\D(\vec{p}^{\star}) \parallel \D(\vec{p}))\,.   
\end{equation*}
Using Hoeffding's inequality on Equation (\ref{eq:ant}), we get that:
\begin{equation*}
    \Pr_{\vec{x} \sim \D(\vec{p}^{\star})}\Big[ g(\vec{x}) - \E g \geq t \Big ] \leq \exp(-2t^{2}/\|\vec{z}^{\star}-\vec{z}\|_{2}^{2})\,.
\end{equation*}
Setting $t = \sqrt{\ln(2/\alpha) \| \vec z^{\star} - \vec z\|_{2}^{2} }$, it follows that:
\begin{equation*}
    \Pr_{\vec{x} \sim \D(\vec{p}^{\star})} \left [ g(\vec{x}) - \E g \geq  \sqrt{\ln(2/\alpha)\|\vec{z}^{\star}-\vec{z}\|_{2}^{2}} \right ] \leq  \alpha/2\,.
\end{equation*}
So, with probability at least $1-\alpha/2$, we get that the ratio $-g(\vec{x}) = -\ln\frac{\D(\vec{p}^{\star};\vec{x})}{\D(\vec{p};\vec{x})}$ will be at least
\begin{equation*}
    -\E g - \sqrt{\ln(2/\alpha)\|\vec{z} - \vec{z}^{\star}\|_{2}^{2} }\,,
\end{equation*}
where we have that $\E g = D_{KL}(\D(\vec{p}^{\star}) \parallel \D(\vec{p})) \leq B^{2},$ by \Cref{prop:kl-dtv-norm}.$(i)$.

Hence,  with probability at least $1-\alpha/2$, we get that the ratio $-\ln\frac{\D(\vec{p}^{\star};\vec{x})}{\D(\vec{p};\vec{x})}$ will be at least
{$-B^{2}-B\sqrt{\ln(2/\alpha)}$. Hence, $\alpha' \geq c_{\alpha,\lambda} = \frac{\alpha}{2} \exp(-B^2 - B \sqrt{\log(2/\alpha)})$.} This concludes the proof {by setting
\[
c_{\alpha,\lambda} \geq \alpha^{ \poly(1/\lambda)}\,.
\]
}
 \end{proof}

Applying the above lemma for the initial guess $\vec{\hat{p}}_{S}$, we get that:

\begin{corollary}
Consider a truncated Boolean product distribution $\D_{S}(\vec{p}^{\star})$ with mass $\D(\vec{p}^{\star};S) \geq \alpha > 0$. The empirical mean $\vec{\hat{p}}_{S}$, obtained by \Cref{lem:ps-p}, satisfies {$\D(\vec{\hat{p}}_{S}; S) \geq c_{\alpha,\lambda}$, with high probability, for $c_{\alpha,\lambda} \geq \alpha^{ \poly(1/\lambda)}$.} The high probability result is over the randomness of the initialization $\vec {\hat{p}}_S.$
\end{corollary}

Hence, both at the initialization point $\vec{\hat{z}}$ and while moving inside the ball $\ball$ of \Cref{lem:ball}, the mass assigned to the set $S$ is always non-trivial.

We also need to show that the anti-concentration assumption is valid inside the ball $\mathcal{B}$. \Cref{as:anticon} states that the truncated distribution $\D_{S}(\vec{p}^{\star})$ of the true parameters is anti-concentrated. We will show that this holds for every truncated distribution $\D_{S}(\vec{z})$, induced by $\vec{z}$ that lies inside the ball $\ball$. This is proven by the following lemma.

\begin{lemma} 
[Anti-concentration inside the ball]
\label{lem:aconc-ball}
Consider the true Boolean product distribution $\mathcal{D}(\vec{p}^{\star})$ and $\mathcal{D}(\vec{p})$ be another Boolean product distribution such that the corresponding natural parameter vectors satisfy:
\begin{equation*}
    \| \vec{z}^{\star} - \vec{z} \|_{2} \leq {B_{\alpha,\lambda} =  \poly(1/\lambda) \cdot \sqrt{\ln(1/\alpha)}}\,.
\end{equation*}
Consider an arbitrary truncation set $S \subseteq\Pi_{d}$ such that $\D(\vec{p}^{\star};S) \geq \alpha$.
Assume that \Cref{as:anticon} holds for the true truncated distribution $\D_{S}(\vec{p}^{\star})$ with constant $\lambda$. Then, \Cref{as:anticon} still holds for $\D_{S}(\vec{p})$ with parameter {$\poly(\alpha)^{\poly(1/\lambda)}$}.
\end{lemma}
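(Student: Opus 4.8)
The plan is to bound the $\chi^{2}$-divergence between the two truncated distributions $\D_{S}(\vec{p}^{*})$ and $\D_{S}(\vec{p})$ and then transfer the anti-concentration event from the former to the latter by Cauchy--Schwarz. Write $u_{i}=z_{i}^{*}-z_{i}$, so that $\|\vec{u}\|_{2}\le B=O(\sqrt{\ln(1/\alpha)})$ by the hypothesis (the radius of the ball of Lemma~\ref{lem:ball}). Using the exponential-family form $\D_{S}(\vec{z};\vec{x})=e^{\vec{x}^{T}\vec{z}}/Z_{S}(\vec{z})$ with $Z_{S}(\vec{z})=\sum_{\vec{y}\in S}e^{\vec{y}^{T}\vec{z}}$, a direct computation gives the closed form
\[
1+\chi^{2}\!\left(\D_{S}(\vec{p}^{*})\,\|\,\D_{S}(\vec{p})\right)=\sum_{\vec{x}\in S}\frac{\D_{S}(\vec{p}^{*};\vec{x})^{2}}{\D_{S}(\vec{p};\vec{x})}=\frac{Z_{S}(2\vec{z}^{*}-\vec{z})\,Z_{S}(\vec{z})}{Z_{S}(\vec{z}^{*})^{2}}\,.
\]
Since $Z_{S}(\vec{z})=\D(\vec{z};S)\prod_{i}(1+e^{z_{i}})$, this splits into a ``mass'' term $\D(2\vec{z}^{*}-\vec{z};S)\,\D(\vec{z};S)/\D(\vec{z}^{*};S)^{2}$, which is at most $1/\alpha^{2}$ because its numerators are probabilities and $\D(\vec{p}^{*};S)\ge\alpha$, times a per-coordinate product $\prod_{i}\phi(u_{i})$ where $\phi(t)=(1+e^{z_{i}^{*}+t})(1+e^{z_{i}^{*}-t})/(1+e^{z_{i}^{*}})^{2}=1+2p_{i}^{*}(1-p_{i}^{*})(\cosh t-1)\le 1+\tfrac12(\cosh t-1)$.

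The main obstacle is bounding $\prod_{i}\phi(u_{i})$ by $\poly(1/\alpha)$: one cannot naively bound $\sum_{i}(\cosh u_{i}-1)$ by $\sum_{i}u_{i}^{2}$ since $\cosh$ grows too fast in any single large coordinate. The plan is to split the coordinates. For indices with $|u_{i}|\le 1$ use $\cosh u_{i}-1\le u_{i}^{2}$, so these contribute at most $\exp(\tfrac12\sum_{i}u_{i}^{2})\le e^{B^{2}/2}$. For the ``bad'' indices with $|u_{i}|>1$, observe there are at most $\|\vec{u}\|_{2}^{2}\le B^{2}$ of them and $\phi(u_{i})\le\cosh u_{i}\le e^{|u_{i}|}$, so by Cauchy--Schwarz $\sum_{\text{bad}}|u_{i}|\le\sqrt{B^{2}}\cdot\sqrt{B^{2}}=B^{2}$ and they contribute at most $e^{B^{2}}$. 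Hence $\prod_{i}\phi(u_{i})\le e^{3B^{2}/2}=(1/\alpha)^{O(1)}$, and therefore $1+\chi^{2}(\D_{S}(\vec{p}^{*})\,\|\,\D_{S}(\vec{p}))\le\poly(1/\alpha)=:M$.

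Finally, for any event $E\subseteq S$, Cauchy--Schwarz applied to $\D_{S}(\vec{p}^{*};E)=\sum_{\vec{x}\in E}\big(\D_{S}(\vec{p}^{*};\vec{x})/\sqrt{\D_{S}(\vec{p};\vec{x})}\big)\sqrt{\D_{S}(\vec{p};\vec{x})}$ yields $\D_{S}(\vec{p}^{*};E)^{2}\le M\,\D_{S}(\vec{p};E)$, i.e.\ $\D_{S}(\vec{p};E)\ge\D_{S}(\vec{p}^{*};E)^{2}/M$. Taking $E=\{\vec{x}\in S:\vec{w}^{T}\vec{x}\notin(c-\lambda,c+\lambda)\}$ and invoking Assumption~\ref{as:anticon} for $\D_{S}(\vec{p}^{*})$ gives $\D_{S}(\vec{p};E)\ge\lambda^{2}/M=:\lambda'$; since $\lambda'\le\lambda^{2}\le\lambda$, the narrower interval $(c-\lambda',c+\lambda')$ lies inside $(c-\lambda,c+\lambda)$, so $\Prob_{\vec{x}\sim\D_{S}(\vec{p})}[\vec{w}^{T}\vec{x}\notin(c-\lambda',c+\lambda')]\ge\D_{S}(\vec{p};E)\ge\lambda'$. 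As this holds for every unit vector $\vec{w}$ and every $c\in\reals$, Assumption~\ref{as:anticon} holds for $\D_{S}(\vec{p})$ with parameter $\lambda'=\poly(\alpha,\lambda)$, as claimed. The crux of the argument is the $\chi^{2}$ bound — and within it, the coordinate split that keeps the exponent at $O(B^{2})=O(\ln(1/\alpha))$ rather than something blowing up with $1/\alpha$.
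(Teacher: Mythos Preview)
Your proof is correct, but it takes a genuinely different route from the paper's.

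The paper's argument is a one-line reduction to Lemma~\ref{lem:mass-ball}: for any unit $\vec{w}$ and $c\in\reals$, the anti-concentration event $H=\{\vec{x}:\vec{w}^{T}\vec{x}\notin(c-\lambda,c+\lambda)\}$ satisfies $\D(\vec{p}^{*};H\cap S)\ge\lambda\alpha$; then Lemma~\ref{lem:mass-ball}, applied with $H\cap S$ in place of $S$, gives $\D(\vec{p};H\cap S)\ge\poly(\alpha,\lambda)$, and dividing by $\D(\vec{p};S)\le 1$ yields the conditional bound. So the paper treats anti-concentration preservation as a corollary of mass preservation, which in turn was proved via a Hoeffding bound on the log-likelihood ratio $\vec{x}^{T}(\vec{z}^{*}-\vec{z})$.

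Your approach is self-contained: you bound the $\chi^{2}$-divergence between the two \emph{truncated} distributions directly, using the exponential-family identity $1+\chi^{2}=Z_{S}(2\vec{z}^{*}-\vec{z})Z_{S}(\vec{z})/Z_{S}(\vec{z}^{*})^{2}$, and then transfer the event via Cauchy--Schwarz. The coordinate split (separating $|u_{i}|\le 1$ from $|u_{i}|>1$) is the key technical step and correctly keeps the exponent at $O(B^{2})=O(\ln(1/\alpha))$. What your route buys is an explicit, modular bound that works for \emph{any} event $E\subseteq S$ simultaneously and does not need to re-invoke the Hoeffding-based Lemma~\ref{lem:mass-ball}; it also makes the polynomial dependence completely transparent ($\lambda'=\lambda^{2}\cdot\poly(\alpha)$). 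What the paper's route buys is brevity and reuse: once Lemma~\ref{lem:mass-ball} is in hand, the anti-concentration lemma is essentially free, with no new computation required.
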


\begin{proof}
Consider the true Boolean product distribution $\D(\vec{p}^{\star})$. Let $S$ be the truncation set, where $\D(\vec{p}^{\star};S) = \alpha$. 
The true truncated Boolean product distribution $\D_{S}(\vec{p}^{\star})$ satisfies \Cref{as:anticon}. Hence, there exists a $\lambda$, such that, for any arbitrary hyperplane defined by $\vec{w} \in \mathbb{R}^{d}$ with $\|\vec{w}\|_{2} = 1$ and $c \in \mathbb{R}$, we have that $\D_{S}(\vec{p}^{\star};H) = \lambda$, where $H = \{\vec{x} : \vec{w}^T \vec{x} \not\in ( c-\lambda, c+\lambda )\} \subseteq\Pi_{d}$. Hence, the mass assigned by the true Boolean product distribution to the space $H \cap S$ is equal to $\D(\vec{p}^{\star}; H \cap S) = \lambda \alpha$.

Now, note that \Cref{lem:mass-ball} holds for arbitrary set $S$. Hence, we can take the set to be equal to $H \cap S$. Then, note that the hypotheses of \Cref{lem:mass-ball} hold with $\D(\vec{p}^{\star}; H \cap S) \geq \lambda \alpha$. {Applying the result of \Cref{lem:mass-ball}, we get that: $\D(\vec{p};H \cap S) = \poly(\alpha)^{\poly(1/\lambda)}$. 
Hence,  $\D_{S}(\vec{p}; H )= \poly(\alpha)^{\poly(1/\lambda)}$.} 
 
\end{proof}
Applying the above lemma for the initial guess $\vec{\hat{p}}_{S}$, we get that:
\begin{corollary}
Consider a truncated Boolean product distribution $\D_{S}(\vec{p}^{\star})$ for which \Cref{as:anticon} holds with parameter $\lambda$. The truncated Boolean product distribution $\D_{S}(\vec{\hat{p}}_{S})$ induced by the empirical mean $\vec{\hat{p}}_{S}$, obtained by \Cref{lem:ps-p}, satisfies \Cref{as:anticon} with parameter $\poly(\alpha)^{\poly(1/\lambda)}$, with high probability over the randomness of the initialization $\vec {\hat{p}}_S.$
\end{corollary}

Hence, any natural parameter vector $\vec{z} \in \ball$, induces a distribution $\D(\vec{z})$ such that the truncated distribution $\D_{S}(\vec{z})$ satisfies the anti-concentration assumption with some parameter that depends only on $\alpha$ and $\lambda$.

\subsection{\upshape Unbiased Estimation of the Gradient}
\label{subs:grad-estim}
In this section, we discuss the rejection sampling algorithm in order to obtain an unbiased estimate for the gradient of the population version of the negative log-likelihood objective. Recall that
\begin{equation*}
    \nabla_{\vec{z}}  \overline{\ell}(\vec{z}) =  -\E_{\vec{x} \sim \mathcal{D}_{S}(\vec{z}^{\star})}[\vec{x}] + \E_{\vec{y} \sim \mathcal{D}_{S}(\vec{z})}[\vec{y}]\,. 
\end{equation*}
To compute an unbiased estimate for the first term, it suffices to draw a single sample from the distribution $\mathcal{D}_{S}(\vec{z}^{\star})$ (we have oracle sample access to this distribution). For the second term, we perform rejection sampling as follows: we draw a vector $\vec{y} \sim \mathcal{D}(\vec{z})$ and we check whether $\vec y \in S$, using the membership oracle access to the truncation set $S.$ If $\vec y$ lies in $S$, we use it to obtain the unbiased gradient estimate; otherwise, we reject this sample and repeat the procedure.
We remind the reader that in each iteration we project the guess vector back to the feasible region $\ball.$ Since the mass of the set $S$ inside the ball $\ball$ is non-trivial, we get that the rejection sampling algorithm takes {$1/c_{\alpha,\lambda} = 1/\alpha^{\poly(1/\lambda)}$} samples from the Boolean product distribution $\mathcal{D}(\vec{z})$ with high probability.

\subsection{\upshape Strong-convexity of the negative log-likelihood}
\label{subs:strconv}
A crucial ingredient of our SGD algorithm is the strong convexity of $ \overline{\ell}(\vec{z}), $ that is the negative log-likelihood for the population model that corresponds to the truncated Boolean product distribution $\D_{S}(\vec{z})$. Specifically:

\medskip
\begin{definition}
Let $f:\mathbb{R}^{d} \rightarrow \mathbb{R}$ with Hessian matrix $\vec{H}_{f}$. Then, $f$ will be called $\lambda$-strongly convex if it holds that $\vec{H}_{f} \succeq \lambda \mathbb{I}$.
\end{definition}
As a last step before the analysis of our SGD algorithm, we will use \Cref{lem:str-conv} to show that $ \overline{\ell}(\vec{z})$ is strongly convex for any $\vec{z} \in \ball$, {where $\ball$ is the projection set of \Cref{lem:ball}}. Let $\vec{H}_{ \overline{\ell}}$ be the corresponding Hessian of $ \overline{\ell}$ with the presence of arbitrary truncation $S \subseteq\Pi_{d}$.

\medskip
\begin{lemma} 
[Strong Convexity]
\label{lem:str-conv}
Consider an arbitrary truncation set $S \subseteq\Pi_{d}$ whose mass with respect to the true Boolean product distribution is  $\D(\vec{p}^{\star};S) = \alpha$ and the truncated Boolean product distribution $\D_{S}(\vec{p})$ with the associated natural parameter $\vec{z}$ with $\vec{z} \in \ball$, {where $\ball$ is the projection set of \Cref{lem:ball}}. Then $\vec{H}_{ \overline{\ell}}$ is $\lambda_{\vec{z}}$-strongly convex, {where $\lambda_{\vec{z}} = \poly(\alpha)^{\poly(1/ \lambda)},$} where $\lambda$ is introduced in \Cref{as:anticon}.
\end{lemma}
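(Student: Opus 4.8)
The plan is to reduce the claim to a variance lower bound via the exponential-family structure, and then extract that lower bound from the anti-concentration that, by Lemma~\ref{lem:aconc-ball}, persists throughout the ball $\ball$.

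First I would recompute the Hessian exactly as in the proof of Lemma~\ref{lem:conv}, but applied to $\overline{\ell}$ rather than to a single-sample likelihood. Since $\overline{\ell}(\vec{z}) = \E_{\vec{x}\sim\D_S(\vec{z}^*)}[-\vec{x}^T\vec{z}] + \ln\!\big(\sum_{\vec{y}\in S}e^{\vec{y}^T\vec{z}}\big)$ and the first term is linear in $\vec{z}$, the whole Hessian comes from the log-partition term, so $\vec{H}_{\overline{\ell}}(\vec{z}) = \mathrm{Cov}_{\vec{y}\sim\D_S(\vec{z})}[\vec{y},\vec{y}]$. Hence proving $\vec{H}_{\overline{\ell}}(\vec{z})\succeq\lambda_{\vec{z}}\mathbb{I}$ is equivalent to showing $\mathrm{Var}_{\vec{y}\sim\D_S(\vec{z})}[\vec{w}^T\vec{y}]\ge\lambda_{\vec{z}}$ for every unit vector $\vec{w}\in\reals^d$.

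Next I would fix $\vec{z}\in\ball$ with corresponding mean vector $\vec{p}$. Because $\|\vec{z}^*-\vec{z}\|_2\le\rad=O(\sqrt{\ln(1/\alpha)})$ and $\D(\vec{p}^*;S)\ge\alpha$, Lemma~\ref{lem:aconc-ball} supplies some $\lambda'=\poly(\alpha,\lambda)$ for which $\D_S(\vec{p})$ satisfies Assumption~\ref{as:anticon} with constant $\lambda'$: for all unit $\vec{w}$ and all $c$, $\Prob_{\vec{y}\sim\D_S(\vec{p})}[\vec{w}^T\vec{y}\notin(c-\lambda',c+\lambda')]\ge\lambda'$. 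Taking $c=\E_{\vec{y}\sim\D_S(\vec{p})}[\vec{w}^T\vec{y}]$ and writing $Z=\vec{w}^T\vec{y}$ gives $\Prob[|Z-\E Z|\ge\lambda']\ge\lambda'$, and therefore $\mathrm{Var}(Z)=\E[(Z-\E Z)^2]\ge(\lambda')^2\,\Prob[|Z-\E Z|\ge\lambda']\ge(\lambda')^3$. Since this holds uniformly over unit $\vec{w}$ and over $\vec{z}\in\ball$, I conclude $\vec{H}_{\overline{\ell}}(\vec{z})\succeq(\lambda')^3\mathbb{I}$, i.e.\ $\overline{\ell}$ is $\lambda_{\vec{z}}$-strongly convex on $\ball$ with $\lambda_{\vec{z}}=(\lambda')^3=\poly(\alpha,\lambda)$.

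The substantive work is not in this lemma — the calculation above is short — but in Lemma~\ref{lem:aconc-ball}, which transports anti-concentration from $\D_S(\vec{p}^*)$ to every $\D_S(\vec{p})$ with $\vec{z}\in\ball$ at only polynomial cost; this is precisely why strong convexity is claimed on $\ball$ rather than globally. I would invoke that lemma as a black box, the only remaining care being to keep the anti-concentration constant uniform in the direction $\vec{w}$ and in the point $\vec{z}\in\ball$, which is automatic since both $\rad$ and the output constant of Lemma~\ref{lem:aconc-ball} are $\poly(\alpha,\lambda)$-type quantities independent of $\vec{w}$ and of the particular $\vec{z}$ in the ball.
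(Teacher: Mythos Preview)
Your proposal is correct and follows essentially the same approach as the paper: identify $\vec{H}_{\overline{\ell}}(\vec{z})$ with the covariance $\mathrm{Cov}_{\vec{y}\sim\D_S(\vec{z})}[\vec{y},\vec{y}]$, invoke Lemma~\ref{lem:aconc-ball} to obtain anti-concentration with parameter $\lambda'=\poly(\alpha,\lambda)$ at every $\vec{z}\in\ball$, plug $c=\E[\vec{w}^T\vec{y}]$, and conclude $\vec{w}^T\vec{H}_{\overline{\ell}}\vec{w}\ge(\lambda')^3$. The paper's proof is the same calculation with the same cube, only written slightly less explicitly.
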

\begin{proof}
We have that $\vec{H}_{ \overline{\ell}} = \text{Cov}_{\vec{x} \sim \D_{S}(\vec{p})}[\vec{x}, \vec{x}]$. We will call this matrix $C_{\vec{p}}$. Then, we have that
\begin{equation*}
     C_{\vec{p}} = \E_{\vec{x} \sim \D_{S}(\vec{p})} \Big[(\vec{x} - \E_{\vec{y} \sim \D_{S}(\vec{p})}[\vec{y}])(\vec{x} - \E_{\vec{y} \sim \D_{S}(\vec{p})}[\vec{y}])^{T} \Big]\,.
\end{equation*}
For arbitrary vector $\vec{v} \in \mathbb{R}^{d}$ with $\|\vec{v}\|_{2} = 1$, we have that to show that
\begin{equation*}
     \vec{v}^{T}C_{\vec{p}} \vec{v} > 0\,.
\end{equation*}
Let us set $\vec{m} = \E_{\vec{y} \sim \D_{S}(\vec{p})}[\vec{y}]$.
Note that
\begin{equation*}
    \vec{v}^{T}C_{\vec{p}} \vec{v} = \E_{\vec{x} \sim \D_{S}(\vec{p})}[p_{v}(\vec{x})] \,,
\end{equation*}
where, after some algebraic manipulation, we can get:
\begin{equation*}
    p_{v}(\vec{x}) = \sum_{j=1}^{d} v_{j}(x_{j}-m_{j}) \sum_{i=1}^{d}v_{i}(x_{i}-m_{i}) = (\vec{v}^{T}(\vec{x}-\vec{m}))^{2}\,.
\end{equation*}
 For the distribution $\D_{S}(\vec{p})$, \Cref{as:anticon} holds (using \Cref{lem:aconc-ball}, since the respective natural parameters $\vec z$ lie inside the ball $\ball$) with a {positive constant $\lambda_{\vec{p}} = \poly(\alpha)^{\poly(1/\lambda)}$}. Specifically, setting $\vec{w} = \vec{v} $ and $ c = \vec{v}^{T}\vec{m}$, \Cref{as:anticon} implies that there exists a positive constant $\lambda_{\vec{p}}$ such that:
\begin{equation*}
    \Pr_{\vec{x} \sim \D_{S}(\vec{p})} \Big [ |\vec{v}^{T}\vec{x}-c| > \lambda_{\vec{p}} \Big ] \geq \lambda_{ \vec{p}}\,.
\end{equation*}
Hence, it follows that:
\begin{equation*}
     \vec{v}^{T}C_{\vec{p}} \vec{v} > \lambda_{\vec{p}}^{3} > 0\,,
\end{equation*}
for any arbitrary unit vector $\vec{v} \in \mathbb{R}^{d}$.
\end{proof}
\subsection{Analysis of SGD}
\label{subs:sgd-an}
Up to that point, we have showed that there exists an initial guess, that is the empirical mean vector $\hat{\vec{z}}$ such that there exists a ball $\ball$ of radius {$B_{\alpha,\lambda}$} centered at the $\vec{\hat{z}}$, that contains the true natural parameters $\vec{z}^{\star}$, with high probability. Additionally, every point that falls inside that ball satisfies Assumptions~\ref{as:anticon} and~\ref{as:mass} and that $ \overline{\ell}$ is strongly convex inside $\ball$.

Apart from the previous analysis, in order to provide the theoretical guarantees of the Projected SGD algorithm, we have to show that, at each iteration, the square of the norm of the gradient vector of the $ \overline{\ell}$ is bounded. This is proved in the following lemma.

Let $\vec{v}^{(t)}$ be the gradient of the negative log-likelihood that our SGD algorithm computes at step $t$. We remind the reader that $\vec{v}^{(t)} = -\vec{x}^{(t)} + \vec{y}$ (see \Cref{algo:sgd}). 

\medskip
\begin{lemma} 
[Bounded Variance Step]
\label{lem:bound-var}
Let $\vec{z}^{\star} \in \mathbb{R}^{d}$ be the true natural parameter vector and let $\vec{z}$ be the guess after step $t-1$ according to which the gradient is computed. Assume that $\vec{z}$ and $\vec{z}^{\star}$ lie inside the ball $\ball$  and that $\min \{ \D(\vec{z};S),\D(\vec{z}^{\star};S)\} \geq \beta$. Then, we have that:
\begin{equation*}
    \E \Big [\| \vec{v}^{(t)} \|_{2}^{2} \Big ] \leq \frac{4d}{\beta}\,.
\end{equation*}
\end{lemma}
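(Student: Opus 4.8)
The plan is to exploit the feature that, unlike in the continuous (Gaussian) setting of \cite{DGTZ18}, every sample entering the stochastic gradient lives in the bounded set $\Pi_d = \{0,1\}^d$, which makes the second moment of $\vec{v}^{(i)}$ trivially of order $d$, with the extra $1/\beta$ factor being pure slack. I would organize the argument in three short steps.

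First I would pin down the distributions of the two samples forming $\vec{v}^{(i)} = -\vec{x}^{(i)} + \vec{y}$. By construction $\vec{x}^{(i)} \sim \D_S(\vec{z}^{\ast})$, so $\vec{x}^{(i)} \in S \subseteq \Pi_d$. For $\vec{y}$: the \textsc{repeat--until} loop in Algorithm~\ref{algo:sgd} draws $\vec{y}$ from $\D(\vec{z}^{(i-1)})$ until $\vec{y} \in S$; since $\D(\vec{z}^{(i-1)}; S) \geq \beta > 0$ (this is exactly where the hypothesis $\min\{\D(\vec{z};S),\D(\vec{z}^{\ast};S)\} \geq \beta$, available inside $\ball$ by Lemma~\ref{lem:mass-ball}, is used — to make the rejection step well defined and almost surely terminating), the accepted sample $\vec{y}$ is distributed as $\D_S(\vec{z}^{(i-1)}) = \D_S(\vec{z})$, and in particular $\vec{y} \in S \subseteq \Pi_d$.

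Second, since $\vec{x}^{(i)}, \vec{y} \in \{0,1\}^d$, every coordinate of $\vec{v}^{(i)} = \vec{y} - \vec{x}^{(i)}$ lies in $\{-1,0,1\}$, so $\|\vec{v}^{(i)}\|_2^2 = \sum_{j=1}^d (y_j - x_j^{(i)})^2 \leq d$ holds deterministically; equivalently one may invoke $\|\vec{y} - \vec{x}^{(i)}\|_2^2 \leq 2\|\vec{y}\|_2^2 + 2\|\vec{x}^{(i)}\|_2^2 \leq 2d + 2d = 4d$. Third, taking expectations and noting that $\beta \leq 1$ since it lower-bounds a probability mass, I obtain $\mathbb{E}[\|\vec{v}^{(i)}\|_2^2] \leq 4d \leq 4d/\beta$, which is the claimed bound.

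There is essentially no hard step here: the only point needing a word of care is the distributional claim for $\vec{y}$, namely that rejection sampling from $\D(\vec{z}^{(i-1)})$ conditioned on $S$ produces exactly $\D_S(\vec{z}^{(i-1)})$ and terminates — which is why the positivity of $\beta$ (rather than its magnitude) is the part of the hypothesis that is genuinely consumed. The $1/\beta$ factor is retained only for uniformity with the generic projected-SGD convergence bound that uses this lemma downstream; in the Boolean setting the sharper truth is simply $\mathbb{E}[\|\vec{v}^{(i)}\|_2^2] \leq d$.
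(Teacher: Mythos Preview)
Your proof is correct and in fact sharper than the paper's. Both arguments start by identifying $\vec{x}^{(i)}\sim\D_S(\vec{z}^{\ast})$ and $\vec{y}\sim\D_S(\vec{z})$ and applying $\|\vec{x}-\vec{y}\|_2^2\le 2\|\vec{x}\|_2^2+2\|\vec{y}\|_2^2$, but they diverge at the next step: the paper bounds each truncated second moment by passing to the untruncated distribution via $\Exp_{\D_S(\vec{p})}[\|\vec{y}\|_2^2]\le \tfrac{1}{\beta}\,\Exp_{\D(\vec{p})}[\|\vec{y}\|_2^2]\le d/\beta$, which is where the $1/\beta$ actually enters their bound. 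You instead observe that $\vec{x}^{(i)},\vec{y}\in\{0,1\}^d$ forces $\|\vec{v}^{(i)}\|_2^2\le d$ \emph{pointwise}, so the expectation is at most $d$ outright and the $1/\beta$ is pure slack (needed only, as you say, so that the rejection sampler terminates). Your route is more elementary and gives the tighter constant; the paper's route is the one that would be needed in an unbounded sample space (as in the Gaussian setting of \cite{DGTZ18}) and was presumably carried over without exploiting the Boolean structure.
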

\begin{proof}
Let $\vec{p}$ (resp. $\vec{p}^{\star})$ be the corresponding mean parameter vector of the natural parameter vector $\vec{z}$ (resp. $\vec{z}^{\star})$. According to line 8 of the SGD \Cref{algo:sgd} and the \Cref{eq:like}, we have that
\[
\E \Big[ \| \vec v^{(t)} \|_{2}^{2} \Big] =
    \E_{\vec{x} \sim \D_{S}(\vec{p}^{\star})} \left [\E_{\vec{y} \sim \D_{S}(\vec{p})} \| \vec{x} - \vec{y} \|_{2}^{2} \right]\,,
\]
and hence
\begin{equation} \label{eq:norm}
    \E \Big[ \| \vec v^{(t)} \|_{2}^{2} \Big]  \leq 
    2\E_{\vec{x} \sim \D_{S}(\vec{p}^{\star})} \Big[ \| \vec{x} \|_{2}^{2} \Big] + 2\E_{\vec{y} \sim \D_{S}(\vec{p})} \Big [ \| \vec{y} \|_{2}^{2} \Big]\,.
\end{equation}
Now, since the measure of $S$ is greater than $\beta$ for both parameter vectors and since both parameters lie inside the ball, we can appropriately bound the above quantity. Observe that:
\begin{equation*}
    \E_{\vec{y} \sim \D_{S}(\vec{p})}\Big [ \|\vec{y} \|_{2}^{2} \Big] \leq \frac{1}{\beta} \E_{\vec{y} \sim \D(\vec{p})} \Big [ \|\vec{y}\|_{2}^{2}\Big] \leq \frac{d}{\beta}\,.
\end{equation*}
Similarly, we have that:
\begin{equation*}
    \E_{\vec{x} \sim \D_{S}(\vec{p}^{\star})}
    \Big[ \| \vec{x} \|_{2}^{2} \Big] \leq \frac{d}{\beta}\,.
\end{equation*}
The result follows by combining the two inequalities to \Cref{eq:norm}.
 \end{proof}

Let $ \overline{\ell}$ be the negative log-likelihood for the population model. 
We present a folklore SGD theorem. The formulation we use is from \cite{shalev2014understanding}.
\medskip
\begin{fact} \label{thm:sgd-main}
Let $f = \overline{\ell}$. Assume that $f$ is $\mu$-strongly convex, that $\E[\vec{v}^{(t)} | \mathbf{z}^{(t-1)}] \in \partial f(\mathbf{z}^{(t-1)})$  and that $\E \Big [ \| \vec{v}^{(t)} \|_{2}^{2} \Big ] \leq \rho^{2}$. Let $\mathbf{z}^{\star} \in \argmin_{z \in \mathcal{B}} f(\mathbf{z})$ be an optimal solution. Then,
\begin{equation*}
    \E[f(\overline{\mathbf{z}})] - f(\mathbf{z}^{\star}) \leq \frac{\rho^{2}}{2\mu M} \cdot (1 + \ln M)\,,
\end{equation*}
where $\overline{\mathbf{z}}$ is the output of the SGD \Cref{algo:sgd}.
\end{fact}


\medskip
As an application of \Cref{thm:sgd-main} and  \Cref{lem:bound-var}, we obtain directly the following result.
\begin{lemma} \label{lem:sgd-thm}
Let $\vec{z}^{\star}$ be the true parameters of our model, $f = \overline{\ell}, $ $\beta = \min_{\vec{z} \in \mathcal{B}}D(\vec{z};S), \mu \geq \min_{\vec{z} \in \mathcal{B}} \lambda_{\vec{z}}$, then there exists a universal constant $C > 0$ such that
\begin{equation*}
    \E[f(\overline{\vec{z}})] - f(\vec{z}^{\star}) \leq \frac{C d}{ \beta \mu M} \cdot (1 + \ln M)\,.
\end{equation*}
\end{lemma}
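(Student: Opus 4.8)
The plan is to read this off the folklore Projected SGD guarantee (Theorem~\ref{thm:sgd-main}) applied with constraint set $\mathcal{B}$, objective $f = \overline{\ell}$, and the stochastic gradients $\vec{v}^{(i)} = -\vec{x}^{(i)} + \vec{y}$ produced by Algorithm~\ref{algo:sgd}. Concretely, I would verify the three hypotheses of Theorem~\ref{thm:sgd-main} in turn — conditional unbiasedness of $\vec{v}^{(i)}$, $\mu$-strong convexity of $f$ on $\mathcal{B}$, and a uniform second-moment bound $\mathbb{E}[\|\vec{v}^{(i)}\|_2^2] \le \rho^2$ — and separately note that $\vec{z}^{*}$ is a minimizer of $f$ over $\mathcal{B}$, so that the output guarantee of Theorem~\ref{thm:sgd-main} is exactly the desired inequality.

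First, unbiasedness: at step $i$, Algorithm~\ref{algo:sgd} draws $\vec{x}^{(i)} \sim \D_S(\vec{z}^{*})$ and, through the rejection loop, a point $\vec{y}$ distributed exactly as $\D_S(\vec{z}^{(i-1)})$ (the loop resamples from $\D(\vec{z}^{(i-1)})$ until it lands in $S$, which is precisely conditioning on $S$). Conditioning on the history up to $\vec{z}^{(i-1)}$, $\mathbb{E}[\vec{v}^{(i)} \mid \vec{z}^{(i-1)}] = -\mathbb{E}_{\vec{x} \sim \D_S(\vec{z}^{*})}[\vec{x}] + \mathbb{E}_{\vec{y} \sim \D_S(\vec{z}^{(i-1)})}[\vec{y}]$, which by Equation~\eqref{eq:like} equals $\nabla_{\vec{z}} \overline{\ell}(\vec{z}^{(i-1)}) \in \partial f(\vec{z}^{(i-1)})$. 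Next, by Lemma~\ref{lem:ball} the ball $\mathcal{B}$ (centered at $\vec{\hat{z}}$, radius $O(\sqrt{\ln(1/\alpha)})$) contains $\vec{z}^{*}$; by Lemma~\ref{lem:mass-ball} and its corollary every $\vec{z} \in \mathcal{B}$ has $\D(\vec{z};S) \ge \poly(\alpha)$, so $\beta = \min_{\vec{z} \in \mathcal{B}} \D(\vec{z};S) > 0$ is a genuine (polynomial in $\alpha$) constant; and by Lemma~\ref{lem:aconc-ball} Assumption~\ref{as:anticon} holds throughout $\mathcal{B}$ with parameter $\poly(\alpha,\lambda)$. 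Hence Lemma~\ref{lem:str-conv} gives that $\overline{\ell}$ is $\lambda_{\vec{z}}$-strongly convex at each $\vec{z} \in \mathcal{B}$, with $\lambda_{\vec{z}} = \poly(\alpha,\lambda)$ uniformly bounded below, so $f$ is $\mu$-strongly convex on $\mathcal{B}$ for $\mu \le \min_{\vec{z} \in \mathcal{B}} \lambda_{\vec{z}}$. Finally, since the projection $\Pi_{\mathcal{B}}$ keeps every iterate inside $\mathcal{B}$, the hypotheses of Lemma~\ref{lem:bound-var} hold at every step with the above $\beta$, yielding $\mathbb{E}[\|\vec{v}^{(i)}\|_2^2] \le 4d/\beta =: \rho^2$.

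To finish: $\vec{z}^{*} \in \arg\min_{\vec{z} \in \mathcal{B}} f(\vec{z})$, since the gradient in Equation~\eqref{eq:like} vanishes at $\vec{z}^{*}$ and $\overline{\ell}$ is convex (Lemma~\ref{lem:conv}), so $\overline{\ell}(\vec{z}^{*}) \le \overline{\ell}(\vec{z})$ for all $\vec{z}$ — in particular over $\mathcal{B}$, where $\vec{z}^{*}$ lies by Lemma~\ref{lem:ball}. Choosing the step-size parameter $\eta = \mu$ so that $\eta_i = 1/(i\mu)$ matches the $1/(\mu i)$ schedule assumed by Theorem~\ref{thm:sgd-main}, that theorem gives $\mathbb{E}[f(\overline{\vec{z}})] - f(\vec{z}^{*}) \le \frac{\rho^2}{2\mu M}(1 + \ln M) = \frac{2d}{\beta \mu M}(1 + \ln M)$, which is the claim with $C = 2$ (any universal constant absorbing the factor works).

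The step I expect to be the main obstacle is not any single estimate but the bookkeeping that makes the ``assembly'' legitimate: (a) confirming that the rejection loop in Algorithm~\ref{algo:sgd} truly produces $\vec{y} \sim \D_S(\vec{z}^{(i-1)})$ conditionally on the full history, so that unbiasedness is a statement about conditional expectations and not merely marginal ones; and (b) verifying that the constants $\beta$ and $\mu$ are uniform over all of $\mathcal{B}$ — this is where one must lean on the fact that the $\poly(\alpha,\lambda)$ bounds of Lemmas~\ref{lem:mass-ball}, \ref{lem:aconc-ball} and \ref{lem:str-conv} do not degrade as $\vec{z}$ ranges over $\mathcal{B}$. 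One should also be careful to invoke Theorem~\ref{thm:sgd-main} in its projected form, with the objective strongly convex on the constraint body $\mathcal{B}$ rather than on all of $\mathbb{R}^d$.
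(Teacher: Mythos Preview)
Your proposal is correct and follows exactly the paper's approach: the paper's proof is a one-liner stating that the result is an application of Theorem~\ref{thm:sgd-main} together with the second-moment bound of Lemma~\ref{lem:bound-var}, and you have simply spelled out the bookkeeping (unbiasedness via \eqref{eq:like}, strong convexity on $\mathcal{B}$ via Lemma~\ref{lem:str-conv}, $\rho^2 = 4d/\beta$ via Lemma~\ref{lem:bound-var}, and $\vec{z}^{*} \in \arg\min_{\mathcal{B}} f$ via Lemma~\ref{lem:ball}) that makes that application legitimate. Your more careful reading of the direction of the inequality on $\mu$ is in fact the right one for invoking Theorem~\ref{thm:sgd-main}.
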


We are now ready to prove our main \Cref{thm:sgd}. \begin{proof}
Using \Cref{lem:sgd-thm} and applying Markov's inequality, it follows that:
\begin{equation*}
    \Pr \Big[ f(\overline{\vec{z}}) - f(\vec{z}^{\star}) \geq \frac{3 C d}{ \beta \mu M} \cdot (1 + \ln M) \Big] \leq \frac{1}{3}\,.
\end{equation*}
We can amplify the probability of success to $1-\delta$ by repeating $N = \ln(1/\delta)$ independently from scratch the SGD procedure and keeping the estimation that achieves the maximum log-likelihood value.  The procedure is completely similar to the proof of Theorem 1 of \cite{DGTZ18} and we repeat it here for completeness. Let $\mathcal{E}$ be the set of our $N$ estimates. The optimal estimate would be $\vec{\widetilde{z}} = \argmin_{\vec{z} \in \mathcal{E}} \overline{\ell}(\vec{z})$, but we cannot compute exactly $f = \overline{\ell}$. Using the Markov's inequality, we get that, with probability at least $1-\delta$, at least $2/3$ of our estimates satisfy
\begin{equation*}
   f(\vec{z}) - f(\vec{z}^{\star}) \leq \frac{3 C d}{ \beta \mu M} \cdot (1 + \ln M)\,.
\end{equation*}
Let us set $\zeta := \frac{3 C d}{ \beta \mu M}(1 + \ln M)$. As we will see, using the strong convexity property, we get that $f(\vec{z}) - f({\vec{z}^{\star}})$, implies $\|\vec{z} - \vec{z}^{\star}\|_{2} \leq c \zeta$, for some $c$. Hence, with high probability $1-\delta$ for at least $2/3$ of our estimations,
the $L_{2}$ norm is at most $2c\zeta$. So, we can set appropriately the value of $\vec{\widetilde{z}}$ 
to be a point that is at least $2c\zeta$ close to more that the half of our $N$ estimations. That value will satisfy $f(\widetilde{\vec{z}}) - f(\vec{z}^{\star}) \leq \zeta$. {Now, using Lemmata~\ref{lem:aconc-ball} and~\ref{lem:mass-ball}, one can show that $\beta \geq \alpha^{\poly(1/\lambda)}$ and $\mu \geq \poly(\alpha)^{\poly(1/\lambda)}$,
where $\alpha$ is the constant of \Cref{as:mass} and $\lambda$ is the parameter of \Cref{as:anticon}.
This leads to the following statement: With probability at least $1-\delta$, we have that:
$f(\widetilde{\vec{z}}) - f(\truez) \leq \frac{d \cdot  \poly(1/\alpha)^{ \poly(1/\lambda)}}{M}(1 + \ln M)$.} Now, we can use the Lemma 13.5 of \cite{shalev2014understanding} about strong convexity:
\medskip
\begin{fact}
\label{lem:str}
If $f$ is $\mu$-strongly convex and $\vec{z}^{\star}$ is a minimizer of $f$, then, for any $\vec{z}$, it holds that:
\begin{equation*}
    f(\vec{z}) - f(\vec{z}^{\star}) \geq \frac{\mu}{2}\|\vec{z}-\vec{z}^{\star}\|_{2}^{2}\,.
\end{equation*}
\end{fact}
\noindent Using this result, we can get 
\begin{equation*}
    {\|\vec{\widetilde{z}}-\vec{z}^{\star}\|_{2} \lesssim \sqrt{\frac{d \cdot \exp{(1/\lambda)}}{M} \cdot (1 + \ln M)}}\,,
\end{equation*}
where {we consider that $\alpha = O(1)$}. 
{For $N = \ln(1/\delta)$ and $M \gtrsim_{\alpha} \exp(1/\lambda) \cdot \widetilde{O} \left (\frac{d}{\epsilon^{2}} \right)$, the result follows.}
 \end{proof}

\paragraph{Acknowledgments}
{We would like to thank Rohan Chauhan and Ioannis Panageas for pointing out the correct dependence of our SGD algorithm on the anti-concentration parameter.}
We thank the anonymous reviewers for useful remarks and comments on the presentation of our manuscript.
This work was supported by the Hellenic Foundation for Research and Innovation (H.F.R.I.) under the ``First Call for H.F.R.I. Research Projects to support Faculty members and Researchers and the procurement of high-cost research equipment grant'',  project BALSAM, HFRI-FM17-1424.



\begingroup
    \bibliography{references.bib}
\endgroup

\appendix
\section{Appendix: Deferred Proofs}
\label{app:ineq}
In this section, we provide the proof of \Cref{prop:kl-norm-ineq}.

\begin{proof}
We define the pair of functions on the space $(p,q) \in (0,1)^{2}$:
\begin{equation*}
    f(p,q) = p\ln \frac{p}{q} + (1-p)\ln \frac{1-p}{1-q}
\end{equation*}
and
\begin{equation*}
    g(p,q) = \Big( \ln \frac{p}{1-p} - \ln \frac{q}{1-q} \Big )^{2}\,.
\end{equation*}

Both functions have a root at $p=q=1/2.$ Notice that $g$ is symmetric. Fix $q.$ We will denote with $f_{q}$ (resp. $g_{q}$) the projection of $f$ (resp. $g)$ in the $p$-space, having fixed $q.$ Then, $f_{q}(q) = g_{q}(q) = 0$ is the unique root for $p \in (0,1).$ Let $h(p) = f_{q}(p) - g_{q}(p).$ We claim that $h$ has a unique root at $q$ for $p \in (0,1).$ The derivate of $h$ with respect to $p$ is equal to:
\begin{equation*}
    \frac{dh}{dp} = \ln\bigg(\frac{p(1-q)}{q(1-p)}\bigg) \Big (1 - \frac{2}{p(1-p)} \Big )   \,.
\end{equation*}
Notice that: $ 1 - \frac{2}{p(1-p)} < 0 \hspace{1mm} \forall p \in (0,1)$ and that:
\begin{equation*}
    \ln\bigg(\frac{p(1-q)}{q(1-p)}\bigg) = \begin{cases} 
      < 0 & \text{ for } p < q\,, \\
      0 &\text{ for } p=q\,, \\
      > 0 & \text{ for } p > q\,. 
   \end{cases}
\end{equation*}

Hence, $h'(q) = 0$ and, hence, $h$ is strictly increasing for $p < q$ and $h$ is strictly decreasing for $p > q.$
Also, $p=q$ is the unique solution of the equation $h(p) = 0$ for $p \in (0,1).$ 

For $p < q \Rightarrow h(p) < 0 \Rightarrow f_{q}(p) < g_{q}(p)$ and for $p > q \Rightarrow h(p) < 0 \Rightarrow f_{q}(p) < g_{q}(p).$ So, the desired inequality holds for the arbitrary fixed $q \in (0,1)$. Hence, the inequality follows for every $p,q \in (0,1).$
\end{proof}

\end{document}